\newcommand{\subfootnotesize}{\fontsize{8.5pt}{9.5pt}\selectfont}
\theoremstyle{plain}
\newtheorem{theorem}{Theorem}
\newtheorem{lemma}{Lemma}
\newtheorem{remark}{Remark}
\newtheorem{assumption}{Assumption}
\newtheorem{corollary}{Corollary}
\newtheorem{definition}{Definition}
\title{Transfer Learning for Benign Overfitting in High-Dimensional Linear Regression}
\author{%
  Yeichan Kim\\
  Yonsei University\\
  \texttt{kimyeic98@yonsei.ac.kr} \\
  \And
  Ilmun Kim\textsuperscript{$\ast$}\\
  KAIST\\ 
  \texttt{ilmunk@kaist.ac.kr} \\
  \And
  Seyoung Park\thanks{Co-corresponding authors.} \\
  Yonsei University \\
  \texttt{ishspsy@yonsei.ac.kr}
}
\begin{document}
\maketitle
\setcounter{footnote}{0}
\begin{abstract}
Transfer learning is a key component of modern machine learning, enhancing the performance of target tasks by leveraging diverse data sources. Simultaneously, overparameterized models such as the minimum-$\ell_2$-norm interpolator (MNI) in high-dimensional linear regression have garnered significant attention for their remarkable generalization capabilities, a property known as \emph{benign overfitting}. Despite their individual importance, the intersection of transfer learning and MNI remains largely unexplored. Our research bridges this gap by proposing a novel two-step Transfer MNI approach and analyzing its trade-offs. We characterize its non-asymptotic excess risk and identify conditions under which it outperforms the target-only MNI. Our analysis reveals \emph{free-lunch} covariate shift regimes, where leveraging heterogeneous data yields the benefit of knowledge transfer at limited cost. To operationalize our findings, we develop a data-driven procedure to detect \emph{informative} sources and introduce an ensemble method incorporating multiple informative Transfer MNIs. Finite-sample experiments demonstrate the robustness of our methods to model and data heterogeneity, confirming their advantage.
\end{abstract}

\section{Introduction}
\label{sec:intro}
Transfer learning \citep{Pan2009_TLsurvey, Weiss2016_TLsurvey, Zhuang2021_TLsurvey, Hosna2022_TLsurvey} is a scheme that aims to improve the performance of a learning task of interest, namely the \emph{target task}, by leveraging knowledge acquired from related, but possibly different, \emph{source tasks}. As modern datasets grow in size and heterogeneity, the seamless integration of diverse sources of information has become increasingly critical, positioning transfer learning as a valuable approach. Its success is particularly evident in high-dimensional regression, as there has been a recent surge of research demonstrating its advantage in tasks including, but not limited to, LASSO \citep{Tibshirani1996_Lasso} and its variants \citep{Bastani2021_translasso,Li2021_Ahtranslasso, He2024_TransFusion, Liu2024_TransLasso}, generalized linear models \citep{Tian2023_transglmcv, Li2024_transglm}, and nonparametric regression~\citep{Ma2023_transferkernel,Wang2023_transferkernel, Lin2024_transferkernel,Cai2024_transnpernonparam}.

While these transfer learning methods rely on explicit regularization \citep{Scholkopf2001_kernel,Hastie2009_ESL,Hastie2015_lasso} to address poor generalization arising from overfitting in high-dimensional regimes, contemporary deep learning methods have challenged the conventional wisdom of bias-variance trade-off framework. In particular, certain \emph{interpolators} that achieve zero training error have been found to generalize remarkably well to unseen data, despite the absence of any explicit regularization. This surprising phenomenon of \emph{benign overfitting}~\citep{Belkin2019_BVtradeoff,Belkin2020_doubledescent,Bartlett2020_PNAS,Bartlett2021_genboundMNI,Mei2022_randomfeaturedoubledescent,Shamir2023_implicitbias} upends the traditional notion of model capacity and generalization. Given the prevalence of overparameterized models in modern machine learning, this stark contrast raises an intriguing question:
\phantomsection
\label{intro:question}
\begin{center}
\emph{Can transfer learning further enhance the impressive out-of-sample generalization \\ capabilities of such interpolators in high-dimensional linear regression?}
\end{center}

\textbf{Related work.}~~Motivated by the empirical success of overparameterized deep learning models \citep{Zhang2017_DLgen, Jacot2018_NTKgen,Nakkiran2020_DLdoubledescent}, a substantial body of research has sought to understand why benign overfitting occurs, focusing on linear regression as a tractable setting for theoretical investigation. The theoretical foundation of our research, along with that of subsequent studies, has been established by \citet{Bartlett2020_PNAS}, who studied the non-asymptotic excess risk of the \textbf{minimum-$\ell_2$-norm interpolator (MNI)} with $n < p$ (i.e., dimension exceeds sample size). They demonstrated that if the covariance eigenvalues decay rapidly up to some ``intrinsic'' dimension $\kstar < n$ (Assumption \ref{assumption:effective_rank}) and yet, the remaining $p-\kstar$ components have \emph{effective ranks} (Definition \ref{def:effective_rank}) greater than $n$, then the MNI trained on noisy data can achieve vanishing excess risk by capturing signal from the leading $\kstar$ high-spectrum components and dispersing noise over the many low-spectrum tail components, as the tails span an ``effective space'' larger than the noise dimension $n$. In this situation, the MNI behaves similarly to ridge regression, exhibiting the effect known as \emph{implicit regularization} \citep{Kobak2020_implicitridge}. \citet{Hastie2022_ridgeless} explored the asymptotic excess risk of MNI as $p/n \to \gamma \in (0, \infty)$ based on random matrix theory \citep{BaiSilverstein2009_RMTtextbook}, and \citet{Tsigler2023_benignRidge} extended this \emph{ridgeless} interpolator to benign overfitting in ridge regression. \citet{Zhou2020_unifconverge} and \citet{Koehler2021_unifconverge} interpreted the benign overfitting of MNI under the lens of uniform convergence. Additionally, \citet{Muthukumar2020_Harmless} popularized the notion of \emph{harmless interpolation}, demonstrating that under certain conditions, the MNI trained on noisy data does not severely degrade generalization, as estimation variance vanishes with increasing $p/n$.

In parallel, distribution shift in transfer learning for regression has been characterized by the two main pillars: (1) \textbf{model shift}, where the conditional distribution of response given covariates differs between the target and source tasks;~(2) \textbf{covariate shift}, where the marginal distribution of covariates varies. \citet{Wang2015_modelshift} established a generalization bound under model shift, provided that the shift is smooth conditional on covariates. Recently,~\citet{Tahir2025_modelshiftcossim} quantified model shift in high-dimensional regression by the cosine similarity between~target and source model coefficients, showing that transfer performance depends critically on this alignment. Covariate shift has been studied along several lines, such as minimax risk formulations~\citep{Lei2021_covshiftminimax} and importance weighting~\citep{Chen2024_covshiftIW}, among others. Recent analyses of linear models under model and covariate shifts have examined optimal ridge regularization parameter, which can be negative~\citep{Patil2024_optimalridgeOOD}.

Despite the potential significance of the aforementioned~\hyperref[intro:question]{question}, its exploration within transfer learning for MNIs remains severely limited. \citet{Mallinar2024_MNIcovshift} studied an \textbf{out-of-distribution (OOD)}~\citep{Liu2023_OOD} setting\footnote{Concurrently with our work, \citet{Tang2025_benignOOD} analyze OOD generalization of ridge regression and MNI and show that principal component regression can attain a faster convergence rate under certain conditions.}, where the MNI is trained exclusively on source (in-distribution, ID) data and tested on the target distribution. They provided conditions where a beneficial covariate shift yields lower OOD risk than ID risk, focusing on shifts in the eigenvalues of spiked covariances~\citep{Johnstone2001_spikedcov} and degree of overparameterization. However, their approach does not use target samples during training, even though at least limited target data are often available. In contrast, \citet{Wu2022_SGD} proposed an online \textbf{stochastic gradient descent (SGD)}\footnote{SGD initialized at zero with suitable learning rates is \emph{implicitly biased} toward the MNI \citep{Wu2021_SGDimplicitbias}.} algorithm pre-trained with the source and then fine-tuned on the target, encompassing OOD settings by allowing no fine-tuning. They showed that pre-training with $O(n^2)$ can be comparable to $n$-target-supervised learning under certain covariate shifts. Despite using target data, their work, like \citet{Mallinar2024_MNIcovshift}, did not examine model shift, and its experiments are primarily focused on underparameterized regimes. Notably, \citet{Song2024_pooledMNI} developed the \textbf{pooled-MNI}, explicitly accounting for model shift. They derived its non-asymptotic excess risk and proposed a data-driven estimate for the optimal target sample size under model shift. However, pooling multiple sources can be susceptible to distribution shifts, as shown in our numerical evaluations (Section~\ref{sec:simulation}). Additional literature review is deferred to Appendix~\ref{sec:add_litreview}, which discusses minimum-norm interpolators beyond the Euclidean norm.

\textbf{Summary of contributions.}~~Our study bridges the gap by investigating how transfer learning can enhance the generalization of target-only MNI through a novel knowledge transfer scheme. The main contributions are as follows.
\begin{itemize}[left=5pt]
\item We propose a novel two-step \textbf{Transfer MNI}: pre-train a source-only MNI and then fine-tune it by interpolating target data while ``staying close'' to the pre-trained model. Under model shift with isotropic covariates, we identify when Transfer MNI outperforms target-only MNI, specify the optimal transfer size, and quantify the maximal improvement in excess risk. We provide non-asymptotic excess risk bounds under both model and covariate shifts when each single-task MNI is benignly overfitted. We further uncover \emph{free-lunch} covariate shifts that alleviate the cost of knowledge transfer while preserving its generalization gain for Transfer MNI.
\item Based on a data-driven procedure for detecting \emph{informative} sources inducing positive transfer, we propose an ensemble method aggregating multiple informative Transfer MNIs with data-adaptive weights determined by source informativeness.
\item Finite-sample experiments demonstrate robustness to distribution shifts and superior performance over transfer baselines, including the pooled-MNI~\citep{Song2024_pooledMNI} and SGD-based transfer~\citep{Wu2022_SGD}, thereby confirming the empirical advantage of our approach under overparameterization.
\end{itemize}

\textbf{Notation.}~~Bold upper- (e.g., $\bM$ and $\bLambda$) and lower-case (e.g., $\bv$ and $\bbeta$) letters denote matrices and vectors, respectively, with $\|\bM\|$ and $\|\bv\|$ denoting the operator and $\ell_2$-norms. For $Q \in \mathbb{N}$, define the sets $[Q] := \{1,2,\ldots,Q\}$ and $[Q]_0 := \{0,1,2,\ldots,Q\}$. Write $a \vee b := \max(a,b)$ and $a \wedge b := \min(a,b)$. For $x \geq 0$, let $\lfloor x \rfloor$ denote the floor function of $x$. Denote by $\indicator(A)$ the indicator function of an event $A$. For sequences $\{a_n\}_{n \in \bbN}$ and $\{b_n\}_{n \in \bbN}$, write $a_n = O(b_n)$, $a_n \lesssim b_n$, or $b_n \gtrsim a_n$ if $|a_n/b_n| \leq c$ for some constant $c > 0$ and all sufficiently large $n$; write $a_n \asymp b_n$ if $a_n = O(b_n)$ and $b_n = O(a_n)$; write $a_n = o(b_n)$, $a_n \ll b_n$, or $b_n \gg a_n$ if $\abs{a_n/b_n} \to 0$ as $n \to \infty$.

\section{Preliminaries}
\textbf{Task setup.}~~Consider an overparameterized linear regression setting involving one target task and $Q$ source tasks. We have access to the training datasets $\{(\bX^{(q)}, \by^{(q)})\}_{q=0}^Q$, each with $n_q$ samples of $p$-dimensional covariate generated from the following well-specified linear model, with $q=0$ corresponding to the target: 
\begin{equation}
\label{eq:ground_truth}
\by^{(q)} = \bX^{(q)}\bbeta^{(q)} + \beps^{(q)}, \quad q \in \zerotoQ.
\end{equation}
Here, $\by^{(q)}\in\bbR^{n_q}$ is the response vector; $\bX^{(q)}\in\bbR^{n_q \times p}$ is the design matrix with $i$-th row (sample) $\bx_i^{(q)} \in \bbR^p$; $\bbeta^{(q)} \in \bbR^p$ is the fixed model coefficient; $\beps^{(q)} \in \bbR^{n_q}$ is the random noise. Provided $n_q \leq p$, the MNI trained on each dataset is uniquely given by
\begin{align}
\hbeta_\M^{(q)} &:= \argmin_{\bbeta \in \bbR^p} \big\{ \|\bbeta\|:\bX^{(q)} \bbeta = \by^{(q)}\big\} = \bX^{(q)\top} \big(\bX^{(q)} \bX^{(q)\top} \big)^{\dagger} \by^{(q)}, \quad q \in \zerotoQ, \label{def:MNI}   
\end{align}
where $\bM^\dagger \in \bbR^{d \times m}$ denotes the Moore–Penrose inverse \citep{Penrose1955_Mpinv} of $\bM \in \bbR^{m \times d}$; Appendix~\ref{proof:MNI} provides a proof of its uniqueness and minimality in $\ell_2$-norm. The MNIs $\hbeta_\M^{(0)}$ and $\hbeta_\M^{(q)}$ correspond to our target and ($q$-th) source tasks, respectively.

\textbf{Distribution shift.}~~For well-specified linear models, \textbf{model shift} is attributed to the model \emph{contrast} ~\citep{Li2021_Ahtranslasso} defined as 
\[
\bdelta^{(q)} := \bbeta^{(q)} - \bbeta^{(0)}, \quad q \in [Q].
\]
On the other hand, \textbf{covariate shift} is characterized by discrepancies in the covariance structure $\bSigma = \bbE\bx\bx^\top$. While \citet{Kausik2024_simuldiag, LeJeune2024_simuldiag, Mallinar2024_MNIcovshift} assumed simultaneous diagonalizability, which gives spectral decompositions $\bSigma^{(0)}=\bV\bLambda^{(0)}\bV^\top$ and $\bSigma^{(q)}=\bV\bLambda^{(q)}\bV^\top$ for some common orthogonal matrix $\bV \in \bbR^{p \times p}$, Assumption \ref{assumption:model} and our numerical studies (Section \ref{sec:simulation}) do not necessarily impose such a shared eigenbasis.

In Assumptions \ref{assumption:model} and \ref{assumption:effective_rank} and Definitions \ref{def:effective_rank} and \ref{def:benign}, the index $q$ applies for all $q \in \zerotoQ$.

\begin{assumption}
\label{assumption:model}
Let the design matrices $\cX := \{\bX^{(q)}\}_{q=0}^Q$ and random noises $\cE := \{\beps^{(q)}\}_{q=0}^Q$ be mutually independent, and the following holds.
\begin{itemize}[left=5pt]
\item Each design is of the form $\bX^{(q)} = \bZ^{(q)}(\bSigma^{(q)})^{1/2}$, where the rows of $\bZ^{(q)} \in \bbR^{n_q \times p}$ are i.i.d.~$\nu_x$-sub-Gaussian vectors\footnote{A random vector $\bz \in \bbR^p$ is $\nu$-sub-Gaussian if for any fixed unit vector $\bu \in \bbR^p$, the random variable $\bu^\top\bz$ is $\nu$-sub-Gaussian (with variance proxy $\nu^2$), i.e., $\bbE\left[\exp\big(t \bu^\top(\bz-\bbE\bz)\big) \right] \leq \exp\big(t^2\nu^2/2\big)$ for all $t \in \bbR$.}~with i.i.d.~mean-zero, unit-variance components, and $\bSigma^{(q)} \in \bbR^{p \times p}$ is deterministic, symmetric positive definite with eigenvalues $\lambda_1^{(q)} \geq \lambda_2^{(q)} \geq \ldots \geq \lambda_p^{(q)} > 0$. 
\item Each $\| \bSigma^{(0)}(\bSigma^{(q)})^{-1} \|$ is bounded above by a universal constant $C_{\scriptscriptstyle \bSigma^{(q)}}>0$ with $C_{\scriptscriptstyle \bSigma^{(0)}}=1$.
\item Each noise $\beps^{(q)}$ has i.i.d.~mean-zero components with finite variance $\sigma_q^2 > 0$.
\end{itemize}
\end{assumption}
The sub-Gaussianity of covariates is standard in recent transfer learning research~\citep{Lei2021_covshiftminimax,Chen2024_covshiftIW,Mallinar2024_MNIcovshift,He2024_TransFusion}. When $\bSigma^{(0)}$ and $\bSigma^{(q)}$ are simultaneously diagonalizable, the condition $\| \bSigma^{(0)}(\bSigma^{(q)})^{-1} \|=O(1)$ reduces to upper-bounded spectral ratios, i.e., $\max_{j \in [p]} \lambda_j^{(0)}/\lambda_j^{(q)} = O(1)$, as in the multiplicative spectral shift in~\citet{Mallinar2024_MNIcovshift}. Lastly, a finite second moment of the noise suffices as the formulation of mean squared loss in \eqref{eq:risk} only requires that each $\bbE\beps^{(q)}\beps^{(q)\top}$ is well-defined.

\textbf{Effective ranks.}~~We additionally introduce the key condition underpinning the benign overfitting of MNI, characterized by the \emph{effective ranks}~\citep{Koltchinskii20117_opnromconcentr,Bartlett2020_PNAS} of $\bSigma^{(q)}$. 

\begin{definition}
\label{def:effective_rank}
If $\lambda_{k+1}^{(q)} > 0$ for $k \geq 0$, the effective ranks of $\bSigma^{(q)}$ are defined as
\[
r_k(\bSigma^{(q)}) := \frac{\sum_{j > k} \lambda_j^{(q)}}{\lambda_{k+1}^{(q)}}, \qquad R_k(\bSigma^{(q)}) := \frac{\big(\sum_{j > k} \lambda_j^{(q)}\big)^2}{\sum_{j > k} \big(\lambda_j^{(q)}\big)^2}.
\]
\end{definition}

\begin{assumption}
\label{assumption:effective_rank}
There exists universal constants $b_q, c_q \geq 1$ such that the minimal index
\[
\kstar_q := \min \Big\{k \geq 0: r_k(\bSigma^{(q)}) \geq b_q n_q \Big\}
\]
is well-defined with $0 \leq \kstar_q \leq n_q/c_q$.
\end{assumption}

Under Assumption \ref{assumption:effective_rank}, \citet{Bartlett2020_PNAS} formalized the benign overfitting of MNI as follows.
\begin{definition}
\label{def:benign}
The single-task MNI $\hbeta_\M^{(q)}$ with covariance $\bSigma^{(q)}$ and $n_q < p$ is benign if 
\[
\lim_{n_q \to \infty} \frac{r_0(\bSigma^{(q)})}{n_q} = \lim_{n_q \to \infty} \frac{\kstar_q}{n_q} = \lim_{n_q \to \infty} \frac{n_q}{R_{\kstar_q}(\bSigma^{(q)})} = 0.
\]
\end{definition}

\section{Single-Source Transfer Task and Out-of-Sample Generalization}
\label{sec:generalization}
We propose a single-source transfer method in the form of \emph{late-fusion}~\citep{Song2024_pooledMNI}, where the $q$-th source task is learned independently and then integrated with the target task. In the first step, we pre-train the $q$-th source-only MNI $\hbeta_\M^{(q)}$. The two-step \textbf{Transfer MNI (TM)} $\hbeta_\TM^{(q)}$ then fine-tunes by interpolating the target dataset while minimizing the Euclidean distance from the pre-trained $\hbeta_\M^{(q)}$; that is,
\[
\hbeta_\TM^{(q)} := \argmin_{\bbeta \in \bbR^p} \bigl\{ \|\bbeta - \hbeta_\M^{(q)}\|: \bX^{(0)}\bbeta = \by^{(0)} \bigl\}, \quad q \in [Q].
\]

The TM estimate admits the following interpretable decomposition structure that clarifies the knowledge transfer mechanism (see Appendix~\ref{proof:TM} for the derivation):
\begin{align}
\label{eq:TM}
\underbrace{\hbeta_\TM^{(q)}}_{\text{\shortstack{transfer task}}} = \underbrace{\hbeta_\M^{(0)}}_{\text{target task}} + \underbrace{\big(\bI_p - \bH^{(0)} \big) \hbeta_\M^{(q)}}_{\text{\shortstack{late-fusion knowledge transfer}}}.
\end{align}
Here, $\bH^{(q)} := \bX^{(q)\top} \big( \bX^{(q)} \bX^{(q)\top} \big)^{\dagger}\bX^{(q)}$ denotes the orthogonal projection onto the design row space $\cS_q := \mathrm{span}\{\bx_i^{(q)}\}_{i=1}^{n_q}$ for $q \in \zerotoQ$; conversely, $\bI_p - \bH^{(q)}$ is the orthogonal projection onto the null space $\cS_q^{\perp}$ such that $\bbR^{p} = \cS_q \oplus \cS_q^{\perp}$. Denoting by $(\bups)_{\cS_0}=\bH^{(0)}\bups$ and $(\bups)_{\cS_0^{\perp}}=(\bI_p-\bH^{(0)})\bups$ the projections of a vector $\bups \in \bbR^p$ onto the target row and null spaces respectively, we obtain
\begin{equation}
\label{eq:retain_transfer}
\big(\hbeta_\TM^{(q)}\big)_{\cS_0} = \hbeta_\M^{(0)}, \qquad \big(\hbeta_\TM^{(q)}\big)_{\cS_0^{\perp}} = \big(\bI_p - \bH^{(0)} \big) \hbeta_\M^{(q)}.
\end{equation}
That is, the fine-tuning step for TM retains target-learned signal in the span of $n_0$ target samples (i.e., $\cS_0$) where benign overfitting ensures high prediction accuracy for the target-only MNI, while transferring source information only into the null space $\cS_0^{\perp}$ where the target samples provide no information with $\big(\hbeta_\M^{(0)}\big)_{\cS_0^{\perp}} = \zeros_p$.

Given this ``retain-plus-transfer'' mechanism exhibited by TM, we next analyze its generalization behaviors by comparing the excess risks of target and transfer tasks. Suppose we are given an out-of-sample target instance $\bx_0 \in \bbR^p$. The excess risk of an estimate $\hbeta \equiv \hbeta(\cX, \cE)$ measured on the target distribution is defined by the following conditional mean squared loss:
\begin{align}
\label{eq:risk}
\cR (\hbeta) &:= \bbE_{(\bx_0,\cE)} \Bigr[ \big(\bx_0^\top\hbeta - \bx_0^\top\bbeta^{(0)} \big)^2 \bigm| \cX \Bigr] = \bbE_{\cE} \Bigr[\big(\hbeta - \bbeta^{(0)}\big)^\top\bSigma^{(0)}\big(\hbeta - \bbeta^{(0)}\big) \bigm| \cX \Bigr]. 
\end{align}

The canonical bias-variance decomposition shows that the excess risk is the sum of the (squared) bias $\cB$ and variance $\cV$, i.e., $\cR(\hbeta)=\cB(\hbeta)+\cV(\hbeta)$. The bias $\cB_\M^{(0)}$ and variance $\cV_\M^{(0)}$ of the target-only MNI are obtained by \citet{Hastie2022_ridgeless} as follows:
\begin{align}
\cB_\M^{(0)} := \bbeta^{(0)\top} \bPi^{(0)} \bbeta^{(0)}, \qquad \cV_\M^{(0)} := \frac{\sigma_0^2}{n_0}\Tr\big(\hSigma^{(0)\dagger}\bSigma^{(0)}\big), \label{eq:baseline_BV}
\end{align}
where we write $\bPi^{(0)} := (\bI_p-\bH^{(0)}) \bSigma^{(0)} (\bI_p-\bH^{(0)})$ and $\hSigma^{(q)} := (1/n_q)\bX^{(q)\top}\bX^{(q)}$ for $q \in \zerotoQ$. The following lemma extends the bias-variance decomposition to our proposed TM estimate.

\begin{lemma}
\label{lemma:TM_bv}
Under the mutual independence and mean-zero condition of~$(\cX,\cE)$ in Assumption~\ref{assumption:model}, the excess risk of the TM estimate is the sum of bias $\cB_\TM^{(q)}$ and variance $\cV_\TM^{(q)}$ such that
\begin{align*}
\cB_\TM^{(q)} &:= \bbeta^{(0)\top} (\bI_p-\bH^{(q)}) \bPi^{(0)} (\bI_p-\bH^{(q)}) \bbeta^{(0)} + \bdelta^{(q)\top} \bH^{(q)}\bPi^{(0)} \bH^{(q)} \bdelta^{(q)}  \\
& \quad -2 \bdelta^{(q)\top} \bH^{(q)}\bPi^{(0)}(\bI_p-\bH^{(q)})\bbeta^{(0)},  \\
\cV_\TM^{(q)} &:= \underbrace{\frac{\sigma_q^2}{n_q}\Tr\big(\hSigma^{(q)\dagger}\bPi^{(0)}\big)}_{~=:~ \cV_{\uparrow}^{(q)}~\textnormal{(variance inflation)}} +~ \cV_\M^{(0)}.
\end{align*}
\end{lemma}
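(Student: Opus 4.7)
My plan is to start from the retain-plus-transfer decomposition in \eqref{eq:TM} and derive an explicit residual representation for $\hbeta_\TM^{(q)} - \bbeta^{(0)}$. Writing $\bA^{(q)} := \bX^{(q)\top}(\bX^{(q)}\bX^{(q)\top})^\dagger$ as a local shorthand so that $\hbeta_\M^{(q)} = \bA^{(q)}\by^{(q)}$ and $\bH^{(q)} = \bA^{(q)}\bX^{(q)}$, substituting the linear model \eqref{eq:ground_truth} together with the shift $\bbeta^{(q)} = \bbeta^{(0)} + \bdelta^{(q)}$, and combining the two occurrences of $\bbeta^{(0)}$ via $\bH^{(q)} - \bI_p = -(\bI_p - \bH^{(q)})$, I obtain
\[
\hbeta_\TM^{(q)} - \bbeta^{(0)} = -(\bI_p-\bH^{(0)})(\bI_p-\bH^{(q)})\bbeta^{(0)} + (\bI_p-\bH^{(0)})\bH^{(q)}\bdelta^{(q)} + \bA^{(0)}\beps^{(0)} + (\bI_p-\bH^{(0)})\bA^{(q)}\beps^{(q)}.
\]
The first two summands are deterministic given $\cX$, while the last two have conditional mean zero under Assumption~\ref{assumption:model}.

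By independence of $\cX$ and $\cE$ together with mean-zero noise, conditioning on $\cX$ gives the standard bias-variance split $\cR(\hbeta_\TM^{(q)}) = \cB_\TM^{(q)} + \cV_\TM^{(q)}$. For the bias, I expand the quadratic form of the two deterministic summands weighted by $\bSigma^{(0)}$; since each summand carries the factor $(\bI_p-\bH^{(0)})$ on the left, the sandwiched $\bSigma^{(0)}$ collapses to $\bPi^{(0)} = (\bI_p-\bH^{(0)})\bSigma^{(0)}(\bI_p-\bH^{(0)})$ in every squared term and cross term, reproducing the three expressions in $\cB_\TM^{(q)}$ (with the two cross terms merged via the symmetry $\bPi^{(0)\top}=\bPi^{(0)}$).

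For the variance, independence of $\beps^{(0)}$ and $\beps^{(q)}$ eliminates the cross term, leaving
\[
\cV_\TM^{(q)} = \sigma_0^2\Tr\bigl(\bSigma^{(0)}\bA^{(0)}\bA^{(0)\top}\bigr) + \sigma_q^2\Tr\bigl(\bPi^{(0)}\bA^{(q)}\bA^{(q)\top}\bigr)
\]
after applying cyclicity of trace and the idempotence $(\bI_p-\bH^{(0)})^2=\bI_p-\bH^{(0)}$ in the second summand. The only step that demands genuine care is the identity $\bA^{(q)}\bA^{(q)\top} = (1/n_q)\hSigma^{(q)\dagger}$, which follows by substituting a thin SVD of $\bX^{(q)}$ into $\bA^{(q)}\bA^{(q)\top}$ to obtain $(\bX^{(q)\top}\bX^{(q)})^\dagger$, and then noting that the Moore–Penrose inverse scales reciprocally with the constant $n_q^{-1}$ in $\hSigma^{(q)} = n_q^{-1}\bX^{(q)\top}\bX^{(q)}$. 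Plugging this in and applying cyclicity of trace once more reproduces $\cV_\M^{(0)}$ and $\cV_\uparrow^{(q)}$, respectively, completing the proof.
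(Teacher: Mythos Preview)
Your proposal is correct and follows essentially the same approach as the paper: both derive the residual $\hbeta_\TM^{(q)}-\bbeta^{(0)}$ from the decomposition~\eqref{eq:TM}, separate the $\cX$-measurable part from the mean-zero noise part to obtain the bias--variance split, and then expand each quadratic form using the factorization through $(\bI_p-\bH^{(0)})$ to produce $\bPi^{(0)}$. The only cosmetic difference is that the paper reduces $\Cov(\hbeta_\M^{(q)})$ via the Moore--Penrose identity $\bM^\dagger\bM\bM^\dagger=\bM^\dagger$ applied to $(\bX^{(q)\top}\bX^{(q)})^\dagger$, whereas you obtain the equivalent relation $\bA^{(q)}\bA^{(q)\top}=(1/n_q)\hSigma^{(q)\dagger}$ by a thin SVD; both are standard and yield the same result.
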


\begin{remark}[Bias reduction versus variance inflation]
\label{remark:var_inflation}
The variance inflation $\cV_{\uparrow}^{(q)}$ in Lemma \ref{lemma:TM_bv} is positive almost surely, so the knowledge transfer always requires a higher variance as its cost. If $\hbeta_\TM^{(q)}$ reduces its estimation bias enough to outweigh the variance inflation, it attains a lower excess risk than the target-only MNI, referred to as positive transfer.
\end{remark}

\subsection{Model Shift under Isotropic Covariates}
\label{sec:isotropic}
We analyze the effect of model shift under the isotropic case where $\bSigma^{(0)} = \bSigma^{(q)} = \bI_p$. Although not ``benign,'' this case ensures at least a ``harmless'' \citep{Muthukumar2020_Harmless} performance for the single-task MNI, making it a worthwhile subject of investigation. In addition, our analysis provides an important insight: the dynamics between bias reduction and variance inflation (Remark \ref{remark:var_inflation}) is determined by the interplay among $p$, $n_q$, shift-to-signal ratio (SSR), and signal-to-noise ratio (SNR), where SSR and SNR are defined for each $q$-th source as
\[
\SSR_q := \frac{\|\bdelta^{(q)}\|^2}{\|\bbeta^{(0)}\|^2} \geq 0, \qquad
\SNR_q := \frac{\|\bbeta^{(0)}\|^2}{\sigma_q^2} > 0, \quad q \in [Q],
\]
provided $\bbeta^{(0)} \neq \zeros_p$.
\begin{theorem}
\label{thm:expected_risk}
Under Assumption \ref{assumption:model}, and further assuming that $(\bZ^{(0)},\bZ^{(q)})$ have i.i.d.~standard Gaussian entries with $p > (n_0+1) \vee (n_q+1)$ and $\bSigma^{(0)}=\bSigma^{(q)}=\bI_p$, the expected bias and variance (expectation over $\cX$) of the target-only MNI and TM estimate are as follows:
\begin{align*}
\bbE_{\cX} \cB_\M^{(0)} &= \frac{p-n_0}{p} \|\bbeta^{(0)}\|^2, &\quad \bbE_{\cX} \cB_\TM^{(q)} &= \frac{p-n_0}{p} \Big( \frac{p-n_q}{p} \|\bbeta^{(0)}\|^2 + \frac{n_q}{p} \|\bdelta^{(q)}\|^2 \Big), \\
\bbE_{\cX} \cV_\M^{(0)} &=  \frac{\sigma_0^2 n_0}{p-(n_0+1)}, &\quad \bbE_{\cX} \cV_\TM^{(q)} &= \underbrace{ \Big(\frac{p-n_0}{p}\Big) \frac{\sigma_q^2 n_q}{p-(n_q+1)}}_{=~ \bbE_{\cX} \cV_{\uparrow}^{(q)}} ~+~ \bbE_{\cX} \cV_\M^{(0)}.
\end{align*}
\end{theorem}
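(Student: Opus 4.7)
The plan is to exploit three standard ingredients in the isotropic Gaussian setting. First, since $\bSigma^{(0)}=\bI_p$, the matrix $\bPi^{(0)}$ reduces to $\bI_p-\bH^{(0)}$, so every term in the baseline formulas~\eqref{eq:baseline_BV} and in Lemma~\ref{lemma:TM_bv} becomes a polynomial in the orthogonal projections $\bH^{(0)}$ and $\bH^{(q)}$. Second, because the rows of $\bZ^{(0)}$ and $\bZ^{(q)}$ are i.i.d.\ standard Gaussian, each $\bH^{(q)}$ projects onto a uniformly random $n_q$-dimensional subspace of $\bbR^p$; by rotational invariance its expectation must be a scalar multiple of $\bI_p$, and taking traces yields $\bbE\bH^{(q)}=\tfrac{n_q}{p}\bI_p$. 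Third, Assumption~\ref{assumption:model} makes $\bX^{(0)}$ and $\bX^{(q)}$ independent, so $\bH^{(0)}$ and $\bH^{(q)}$ are independent and expectations of their products factorize.

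For the biases, the target-only case $\bbE_\cX\cB_\M^{(0)} = \bbeta^{(0)\top}\bbE(\bI_p-\bH^{(0)})\bbeta^{(0)} = \tfrac{p-n_0}{p}\|\bbeta^{(0)}\|^2$ is immediate. For $\cB_\TM^{(q)}$, I take the inner expectation over $\bH^{(0)}$ inside each of the three terms of Lemma~\ref{lemma:TM_bv}, replacing $\bPi^{(0)}$ by $\tfrac{p-n_0}{p}\bI_p$. Using the projection identities $(\bI_p-\bH^{(q)})^2=\bI_p-\bH^{(q)}$ and $\bH^{(q)}(\bI_p-\bH^{(q)})=\zeros_p$, the first two terms collapse to $\tfrac{p-n_0}{p}\bbeta^{(0)\top}(\bI_p-\bH^{(q)})\bbeta^{(0)}$ and $\tfrac{p-n_0}{p}\bdelta^{(q)\top}\bH^{(q)}\bdelta^{(q)}$ respectively, while the cross term vanishes. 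Taking the outer expectation over $\bH^{(q)}$ via the scalar identity above then produces the stated form.

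For the variances, writing $\hSigma^{(q)\dagger}=n_q(\bX^{(q)\top}\bX^{(q)})^\dagger$ gives $\cV_\M^{(0)} = \sigma_0^2\,\Tr((\bX^{(0)\top}\bX^{(0)})^\dagger)$ and $\cV_{\uparrow}^{(q)} = \sigma_q^2\,\Tr((\bX^{(q)\top}\bX^{(q)})^\dagger\bPi^{(0)})$. Since $\bX^{(q)}$ is almost surely of full row rank under the Gaussian assumption with $p>n_q$, the pseudoinverse identity $\Tr((\bX^{(q)\top}\bX^{(q)})^\dagger)=\Tr((\bX^{(q)}\bX^{(q)\top})^{-1})$ applies. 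Conditioning on $\bX^{(q)}$ and taking the inner expectation of $\bPi^{(0)}$ yields the factor $\tfrac{p-n_0}{p}$, reducing the calculation to the inverse Wishart mean $\bbE(\bX^{(q)}\bX^{(q)\top})^{-1}=\tfrac{1}{p-n_q-1}\bI_{n_q}$, which is finite precisely because $p>n_q+1$. The analogous computation handles $\cV_\M^{(0)}$ under $p>n_0+1$, giving the announced formulas. The main obstacle is not any deep argument but careful bookkeeping: verifying the pseudoinverse trace identity on the almost-sure full-rank event, invoking Fubini to separate the two independent projections cleanly in each term, and matching the Wishart finiteness condition with the theorem's hypothesis $p>(n_0+1)\vee(n_q+1)$.
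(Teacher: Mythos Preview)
Your proposal is correct and follows essentially the same route as the paper: reduce $\bPi^{(0)}$ to $\bI_p-\bH^{(0)}$, use rotational invariance to get $\bbE\bH^{(q)}=\tfrac{n_q}{p}\bI_p$, factor the independent projections via iterated expectation (with the cross term vanishing by $\bH^{(q)}(\bI_p-\bH^{(q)})=\zeros_{p\times p}$), and finish the variance via the inverse-Wishart mean. The only cosmetic difference is that the paper derives $\bbE\,\Tr\big((\bX^{(q)}\bX^{(q)\top})^{-1}\big)=\tfrac{n_q}{p-(n_q+1)}$ by noting each diagonal entry has a $\chi^2_{p-(n_q-1)}$ reciprocal, whereas you invoke the inverse-Wishart expectation directly; both are standard and equivalent.
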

From Theorem \ref{thm:expected_risk}, we observe a trade-off between $p$ and $n_q$. If $\|\bbeta^{(0)}\| > \|\bdelta^{(q)}\|$, increasing $n_q$ further reduces the bias of TM. An increase in $p$ mitigates the variance inflation, but only at the expense of compromising this bias reduction effect. Based on Theorem \ref{thm:expected_risk}, we formalize the regime where TM is expected to outperform the target-only MNI and specify the optimal transfer size maximizing the improvement in expected excess risk $\bbE_{\cX}\cR_\M^{(0)} - \bbE_{\cX}\cR_\TM^{(q)}$.

\begin{corollary}
\label{cor:isotropic}
Under the setup in Theorem \ref{thm:expected_risk}, the TM estimate satisfies
\[
\bbE_{\cX}\cR_\TM^{(q)} < \bbE_{\cX}\cR_\M^{(0)} ~~\iff~~ \SSR_q < 1 ~~~and~~~ \SNR_q(1-\SSR_q) > \frac{p}{p-(n_q+1)}.
\]
The improvement in expected excess risk $\Delta(n_q) := \bbE_{\cX}\cR_\M^{(0)} - \bbE_{\cX}\cR_\TM^{(q)}$ as a function of $n_q$ is strictly concave on $n_q \in [1, p-1)$. If and only if $\SSR_q < 1$ and $\SNR_q(1-\SSR_q) \geq \frac{p(p-1)}{(p-2)^2}$, the optimal transfer size $n_q^{\ast}$ maximizing $\Delta(n_q)$ exists and equals $n_q^{\ast} = p-1-\sqrt{\frac{p(p-1)}{\SNR_q(1-\SSR_q)}} \in [1, p-1)$, with $\Delta(n_q^{\ast})=\left(\frac{p-n_0}{p}\right)\left(\frac{(n_q^{\ast})^2(1-\SSR_q)}{p(p-1)}\right)  \|\bbeta^{(0)}\|^2 > 0$ being the maximal improvement.
\end{corollary}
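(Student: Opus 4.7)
The plan is to leverage the closed-form expressions for the expected excess risks from Theorem~\ref{thm:expected_risk} and reduce everything to a one-dimensional calculus problem in $n_q$. First I would form $\Delta(n_q)=\bbE_{\cX}\cR_\M^{(0)}-\bbE_{\cX}\cR_\TM^{(q)}$; the baseline variance terms $\bbE_{\cX}\cV_\M^{(0)}$ cancel, and after writing $\sigma_q^2=\|\bbeta^{(0)}\|^2/\SNR_q$ and $\|\bdelta^{(q)}\|^2=\SSR_q\|\bbeta^{(0)}\|^2$, pulling out the common factor $(p-n_0)/p$ yields the compact expression
\[
\Delta(n_q)=\frac{p-n_0}{p}\|\bbeta^{(0)}\|^2\Bigl[\frac{n_q(1-\SSR_q)}{p}-\frac{n_q}{\SNR_q(p-n_q-1)}\Bigr].
\]
All three claims will follow from this single formula.

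For the positive-transfer characterization, I note that $(p-n_0)/p>0$ and $\|\bbeta^{(0)}\|^2>0$, so the sign of $\Delta(n_q)$ coincides with the sign of the bracket. Multiplying through by $\SNR_q\, p\,(p-n_q-1)/n_q>0$ turns $\Delta(n_q)>0$ into $\SNR_q(1-\SSR_q)(p-n_q-1)>p$. Since the right-hand side is strictly positive, $\SSR_q<1$ is necessary; given this, the inequality rearranges exactly into the stated $\SNR_q(1-\SSR_q)>p/(p-n_q-1)$, yielding both directions.

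For strict concavity on $[1,p-1)$, the linear part contributes zero to the second derivative; for the variance term I compute
\[
\frac{d^2}{dn_q^2}\Bigl[\frac{n_q}{p-n_q-1}\Bigr]=\frac{2(p-1)}{(p-n_q-1)^3}>0,
\]
so the bracket has second derivative $-2(p-1)/[\SNR_q(p-n_q-1)^3]<0$, and multiplication by the positive prefactor preserves strict concavity.

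Finally, the first-order condition $\Delta'(n_q)=0$ reads $(1-\SSR_q)/p=(p-1)/[\SNR_q(p-n_q-1)^2]$. A real positive root exists iff $\SSR_q<1$, giving $n_q^{\ast}=p-1-\sqrt{p(p-1)/[\SNR_q(1-\SSR_q)]}$, automatically in $(-\infty, p-1)$, and the constraint $n_q^{\ast}\geq 1$ rearranges to $\SNR_q(1-\SSR_q)\geq p(p-1)/(p-2)^2$; if either condition fails, the (real or ``virtual'') critical point lies strictly below $1$, so by concavity $\Delta'<0$ throughout $[1,p-1)$ and no interior maximizer exists. For the maximal value, I substitute $n_q^{\ast}$ using $(p-n_q^{\ast}-1)^{-1}=\sqrt{\SNR_q(1-\SSR_q)/[p(p-1)]}$ to rewrite the variance term as $n_q^{\ast}(p-n_q^{\ast}-1)(1-\SSR_q)/[p(p-1)]$, then telescope
\[
\frac{n_q^{\ast}}{p}-\frac{n_q^{\ast}(p-n_q^{\ast}-1)}{p(p-1)}=\frac{n_q^{\ast}\bigl[(p-1)-(p-n_q^{\ast}-1)\bigr]}{p(p-1)}=\frac{(n_q^{\ast})^2}{p(p-1)}
\]
to recover $\Delta(n_q^{\ast})=\frac{p-n_0}{p}\cdot\frac{(n_q^{\ast})^2(1-\SSR_q)}{p(p-1)}\|\bbeta^{(0)}\|^2$. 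The main obstacle is not conceptual but the careful bookkeeping of this last algebraic substitution, together with handling the ``only if'' direction of existence via strict concavity, which rules out any boundary or interior alternative when the SSR/SNR conditions fail.
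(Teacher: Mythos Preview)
Your proposal is correct and follows essentially the same approach as the paper: form the closed-form $\Delta(n_q)$ from Theorem~\ref{thm:expected_risk}, read off the sign condition, differentiate twice for concavity, solve the first-order condition for $n_q^{\ast}$, and substitute back for $\Delta(n_q^{\ast})$. The only cosmetic differences are that the paper works with $\widetilde\Delta=-\Delta$ and obtains $\Delta(n_q^{\ast})$ by plugging the first-order relation into $\widetilde\Delta$ rather than your telescoping identity; your explicit concavity argument for the ``only if'' direction of existence is a small clarification the paper leaves implicit.
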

In Corollary \ref{cor:isotropic}, negative transfer occurs when model shift dominates the signal with $\SSR_q \geq 1$. The improvement $\Delta(n_q)$ strictly increases in $n_q$ up to the optimal threshold $n_q^{\ast}$ but strictly decreases for $n_q > n_q^{\ast}$; that is, transferring more source samples helps only up to $n_q^{\ast}$, and beyond $n_q^{\ast}$, it always degrades the efficacy of knowledge transfer. While \citet{Song2024_pooledMNI} also investigated the same isotropic Gaussian setting under model shift, only Corollary \ref{cor:isotropic} provides necessary and sufficient conditions for positive transfer and identifies the maximal improvement $\Delta(n_q^{\ast})$.

\subsection{Convergence under Benign Covariates}
Taking both model and covariate shifts into account, we now consider general sub-Gaussian covariates satisfying Assumption \ref{assumption:effective_rank} and analyze non-asymptotic excess risk bounds and their implication.

\begin{theorem}
\label{thm:convergence_rate}
Let Assumptions \ref{assumption:model} and \ref{assumption:effective_rank} hold. For each $q \in \zerotoQ$ and any $t \geq \log(2)$, define 
\begin{align*}
\psi_q(t) := \sqrt{\frac{r_0(\bSigma^{(q)}) + t}{n_q}} + \frac{r_0(\bSigma^{(q)}) + t}{n_q}, \qquad
\Upsilon_q := \frac{\kstar_q}{n_q} + \frac{n_q}{R_{\kstar_q}(\bSigma^{(q)})}.
\end{align*}
With probability at least $1-2e^{-\delta}$ and $1-4e^{-\eta}$ respectively, the biases are bounded above by
\begin{align*}
\cB_\M^{(0)} &\lesssim  \psi_0(\delta) \|\bSigma^{(0)}\| \|\bbeta^{(0)}\|^2, \\
\cB_\TM^{(q)} &\lesssim \psi_0(\eta) \|\bSigma^{(0)}\| \|\bdelta^{(q)}\|^2 + \psi_q(\eta) C_{\scriptscriptstyle \bSigma^{(q)}} \|\bSigma^{(q)}\| \|\bbeta^{(0)}\|^2.
\end{align*}
Furthermore, there exist universal constants $c_0, c_q \geq 1$ such that with probability at least $1-7e^{-n_q/c_q}-2e^{-\xi}$, the variance inflation of the TM estimate is bounded above by
\begin{align*}
\cV_{\uparrow}^{(q)} \lesssim \sigma_q^2 \Upsilon_q\psi_0(\xi) \big(\lambda_{p}^{(q)}\big)^{-1} \|\bSigma^{(0)}\|,
\end{align*}
and with probability at least $1-10e^{-n_0/c_0}$, the excess risk of the TM estimate is bounded below by
\begin{align*}
\cR_\TM^{(q)} \gtrsim \sigma_0^2 \Upsilon_0,
\end{align*}
where $\cV_{\M}^{(0)} \asymp \sigma_0^2\Upsilon_0$ on the same high-probability event lower-bounding $\cR_\TM^{(q)}$.
\end{theorem}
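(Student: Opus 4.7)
All four bounds decouple cleanly and rely on two pillars: (i) the Bartlett-type concentration machinery \citep{Bartlett2020_PNAS} applied independently to the target projection $\bPi^{(0)}$ and the source projection $\bPi^{(q)}$, and (ii) the spectral comparison $\bSigma^{(0)}\preceq C_{\scriptscriptstyle \bSigma^{(q)}}\bSigma^{(q)}$, which follows from $\|\bSigma^{(0)}(\bSigma^{(q)})^{-1}\|\leq C_{\scriptscriptstyle \bSigma^{(q)}}$ via the similarity of $\bSigma^{(0)}(\bSigma^{(q)})^{-1}$ to the symmetric matrix $(\bSigma^{(q)})^{-1/2}\bSigma^{(0)}(\bSigma^{(q)})^{-1/2}$. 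The target-only bias is then immediate: invoke $\|\bPi^{(0)}\|\lesssim \psi_0(\delta)\|\bSigma^{(0)}\|$, valid with probability at least $1-2e^{-\delta}$, and bound $\cB_\M^{(0)} = \bbeta^{(0)\top}\bPi^{(0)}\bbeta^{(0)} \leq \|\bPi^{(0)}\|\|\bbeta^{(0)}\|^2$.

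\textbf{TM bias.} Completing the square in the three summands of Lemma~\ref{lemma:TM_bv} collapses $\cB_\TM^{(q)}$ into the compact identity
\[
\cB_\TM^{(q)} \;=\; \bigl\|(\bI_p-\bH^{(q)})\bbeta^{(0)} \;-\; \bH^{(q)}\bdelta^{(q)}\bigr\|_{\bPi^{(0)}}^2,
\]
and applying the quadratic triangle inequality $\|u-v\|_M^2\leq 2\|u\|_M^2+2\|v\|_M^2$ splits the problem into a $\bbeta^{(0)}$-piece and a $\bdelta^{(q)}$-piece. The $\bdelta^{(q)}$-piece is dispatched by $\|\bPi^{(0)}\|\|\bH^{(q)}\bdelta^{(q)}\|^2\leq \|\bPi^{(0)}\|\|\bdelta^{(q)}\|^2$ together with the target bias concentration, yielding the $\psi_0(\eta)\|\bSigma^{(0)}\|\|\bdelta^{(q)}\|^2$ summand. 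For the $\bbeta^{(0)}$-piece, I would upgrade the target bias control from its operator-norm form to the PSD refinement $\bPi^{(0)}\preceq c\,\bSigma^{(0)}$ (available on the same Bartlett event), then apply $\bSigma^{(0)}\preceq C_{\scriptscriptstyle \bSigma^{(q)}}\bSigma^{(q)}$ to transform the quadratic form into $C_{\scriptscriptstyle \bSigma^{(q)}}\bbeta^{(0)\top}\bPi^{(q)}\bbeta^{(0)}$; the source bias bound $\|\bPi^{(q)}\|\lesssim \psi_q(\eta)\|\bSigma^{(q)}\|$ then delivers the desired $\psi_q(\eta)C_{\scriptscriptstyle \bSigma^{(q)}}\|\bSigma^{(q)}\|\|\bbeta^{(0)}\|^2$ factor. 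A union bound over the target and source Bartlett events produces the stated probability $1-4e^{-\eta}$.

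\textbf{Variance inflation and lower bound.} For $\cV_\uparrow^{(q)}=(\sigma_q^2/n_q)\Tr(\hSigma^{(q)\dagger}\bPi^{(0)})$, the inequality $\bI_p\preceq (\lambda_p^{(q)})^{-1}\bSigma^{(q)}$ gives $\bPi^{(0)}\preceq (\|\bPi^{(0)}\|/\lambda_p^{(q)})\bSigma^{(q)}$, reducing the trace to $(\|\bPi^{(0)}\|/\lambda_p^{(q)})\Tr(\hSigma^{(q)\dagger}\bSigma^{(q)})$. Bartlett's variance concentration on the source side yields $\Tr(\hSigma^{(q)\dagger}\bSigma^{(q)})\lesssim n_q\Upsilon_q$ with probability at least $1-7e^{-n_q/c_q}$, and combining this with the $\|\bPi^{(0)}\|\lesssim \psi_0(\xi)\|\bSigma^{(0)}\|$ event (cost $2e^{-\xi}$) produces the claimed inflation bound. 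The lower bound then follows immediately from $\cR_\TM^{(q)}\geq \cV_\TM^{(q)} = \cV_\M^{(0)} + \cV_\uparrow^{(q)} \geq \cV_\M^{(0)}$ (Lemma~\ref{lemma:TM_bv}) together with the two-sided Bartlett variance estimate $\cV_\M^{(0)}\asymp \sigma_0^2\Upsilon_0$, which holds with probability at least $1-10e^{-n_0/c_0}$.

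\textbf{Main obstacle.} The delicate step is the $\bbeta^{(0)}$-piece of the TM bias, where $(\bI_p-\bH^{(q)})$ and $\bPi^{(0)}$ come from independent designs living under different covariances. A naive operator-norm split of $\|\bPi^{(0), 1/2}(\bI_p-\bH^{(q)})\bbeta^{(0)}\|^2$ retains only the target factor $\psi_0(\eta)$ and cannot deliver the source-side $\psi_q(\eta)$ demanded by the statement. The PSD-level refinement $\bPi^{(0)}\preceq c\,\bSigma^{(0)}$---strictly stronger than $\|\bPi^{(0)}\|\lesssim \psi_0(\eta)\|\bSigma^{(0)}\|$---is what allows $\bSigma^{(q)}$ to enter through the covariance comparison so that $\|\bPi^{(q)}\|$ can supply the $\psi_q(\eta)$ in the end. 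Establishing this PSD refinement on the same high-probability event is the technical linchpin of the argument.
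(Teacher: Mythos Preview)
Your plan for the target-only bias, the $\bdelta^{(q)}$-piece, the variance inflation, and the lower bound matches the paper's proof: all four rely on the identity $(\bI_p-\bH^{(0)})\hSigma^{(0)}=\zeros_{p\times p}$ to replace $\bPi^{(0)}$ by $(\bI_p-\bH^{(0)})(\bSigma^{(0)}-\hSigma^{(0)})(\bI_p-\bH^{(0)})$, then invoke Lemma~\ref{lemma:opnorm} for the covariance-deviation norm and Corollary~\ref{cor:PNAS_upperlowerbound} for the trace terms. The paper's variance-inflation step is the same H\"{o}lder/PSD-comparison argument you sketch, routed through $\Tr(\hSigma^{(q)\dagger}\bSigma^{(q)})$ and $\|(\bSigma^{(q)})^{-1/2}\|^2=(\lambda_p^{(q)})^{-1}$.

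The gap is precisely the step you flag. The PSD refinement $\bPi^{(0)}\preceq c\,\bSigma^{(0)}$ with a universal constant is \emph{not} a consequence of the Bartlett concentration results, and in fact cannot hold uniformly: it is equivalent to $\|(\bSigma^{(0)})^{1/2}(\bI_p-\bH^{(0)})(\bSigma^{(0)})^{-1/2}\|^2\le c$, an oblique-projector bound that scales with the conditioning of $\bSigma^{(0)}$ rather than with any effective-rank quantity. A two-dimensional check with $\bSigma^{(0)}=\Diag(\lambda_1,\lambda_2)$ and $\bI_p-\bH^{(0)}$ the rank-one projection onto $(1,-1)/\sqrt{2}$ already forces $c\gtrsim\lambda_1/\lambda_2$, so the refinement is unavailable in the benign-overfitting regime where the spectrum is highly spread. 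Without it, your chain from $\bPi^{(0)}$ to $\bPi^{(q)}$ does not close, and you cannot extract the source-side factor $\psi_q(\eta)$.

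The paper handles $U_2$ by a different device. After passing to $\|(\bI_p-\bH^{(q)})\bSigma^{(0)}(\bI_p-\bH^{(q)})\|$, it exploits the \emph{source} null-space identity $(\bI_p-\bH^{(q)})\hSigma^{(q)}=\zeros_{p\times p}$ to subtract $\bA\hSigma^{(q)}$ for arbitrary $\bA$ at zero cost:
\[
\|(\bI_p-\bH^{(q)})\bSigma^{(0)}(\bI_p-\bH^{(q)})\|
=\|(\bI_p-\bH^{(q)})(\bSigma^{(0)}-\bA\hSigma^{(q)})(\bI_p-\bH^{(q)})\|
\le\|\bSigma^{(0)}-\bA\hSigma^{(q)}\|.
\]
Choosing $\bA^\ast=\bSigma^{(0)}(\bSigma^{(q)})^{-1}$ yields $C_{\scriptscriptstyle\bSigma^{(q)}}\|\bSigma^{(q)}-\hSigma^{(q)}\|$, and Lemma~\ref{lemma:opnorm} on the source side then supplies $\psi_q(\eta)\|\bSigma^{(q)}\|$. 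This free-parameter subtraction---not a PSD ordering between $\bPi^{(0)}$ and $\bSigma^{(0)}$---is the mechanism that transports the estimate from target to source quantities.
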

The bias dynamics in Theorem \ref{thm:convergence_rate} mirrors that in the isotropic case. If $r_0(\bSigma^{(0)}) \ll n_0 $, which is sufficient for a vanishing target-only bias, and $r_0(\bSigma^{(0)}) \asymp r_0(\bSigma^{(q)})$, $\cB_\TM^{(q)}$ also vanishes. Moreover, it can achieve a faster convergence if $\|\bdelta^{(q)}\|^2 \ll \|\bbeta^{(0)}\|^2$ and $n_0 < n_q$. As for variance, the target- and source-only variances are within a constant factor of $\Upsilon_0$ and $\Upsilon_q$ respectively (see Corollary \ref{cor:PNAS_upperlowerbound} in Appendix), and hence the terms vanish when each single-task is benign. The upper bound on $\cV_{\uparrow}^{(q)}$ is a product of the two ``benign'' terms $\Upsilon_q$ and $\psi_0$ and the reciprocal of eigenvalue that reflects the lack of simultaneous diagonalizability in Assumption \ref{assumption:model}. While the reciprocal may loosen the bound, our numerical experiments (Section \ref{sec:simulation}) show rapid decay in variances as $p \gg n_0 \vee n_q$. Exploring a tractable covariance structure to refine this bound is left for future work. Finally, the lower bound on $\cR_\TM^{(q)}$ follows from Lemma \ref{lemma:TM_bv}, where the TM variance is no smaller than the target-only variance; the bias reduction effect cannot further improve the lower bound.

\subsection{Free-Lunch Covariate Shift}
\label{sec:free_lunch}
In light of the invariance of spectral decay rates to uniform upscaling, we propose a type of covariate shift that yields a lower TM excess risk than without any covariate shift. To proceed, define the following minimal index for $\bSigma^{(0)}$ satisfying Assumption \ref{assumption:model}, which is always well-defined:
\begin{align}
\label{def:tau_star}
\tau^{\ast} := \min \Big\{k < p :  \lambda_{k+1}^{(0)} \asymp \lambda_{p}^{(0)} \Big\}.
\end{align}
We may expect $\tau^{\ast} \approx \kstar_0$ for some benign covariance structures, where $\kstar_0$ is as specified in Assumption \ref{assumption:effective_rank} for $\bSigma^{(0)}$; the beginning of Appendix \ref{proof:cor2} illustrates a case where $\tau^{\ast} \ll p$ and $\tau^{\ast} \ll n_0$ indeed.

\begin{corollary}[\emph{Free-lunch} covariate shift]
\label{cor:freelunch}
For each $q \in \zerotoQ$, let the spectral decomposition of $\bSigma^{(q)}$ be $\bSigma^{(q)}=\bV^{(q)}\bLambda^{(q)}\bV^{(q)\top}$, where $\bV^{(q)} \in \bbR^{p \times p}$ is orthogonal and $\bLambda^{(q)} = \Diag(\lambda_1^{(q)},\ldots,\lambda_p^{(q)})$. Suppose $\bLambda^{(q)}=\alpha \bLambda^{(0)}$ for some $\alpha > 1$, and under \textnormal{(A)}, the leading $\tau^{\ast}$ eigenvector pairs in $(\bV^{(q)},\bV^{(0)})$ align; under \textnormal{(B)}, eigenvector pairs fully align with $\bV^{(q)}=\bV^{(0)}$, i.e., $\bSigma^{(q)}=\alpha\bSigma^{(0)}$.

Compared to the homogeneous case $\bSigma^{(q)} = \bSigma^{(0)}$, the following holds for each covariate shift case:
\begin{itemize}[left=15pt]
    \item[\textnormal{(A)}] The upper bound on $\cB_\TM^{(q)}$ remains identical up to a constant factor independent of $\alpha$, while the upper bound on $\cV_{\uparrow}^{(q)}$ is multiplied by $\alpha^{-1}$ (i.e., reduced by a factor of $\alpha$).
    \item[\textnormal{(B)}]  The exact bias remains identical, while the exact variance inflation is multiplied by $\alpha^{-1}$.
\end{itemize}
\end{corollary}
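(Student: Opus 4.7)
The plan rests on two structural observations: every quantity in Theorem~\ref{thm:convergence_rate} built from the effective ranks is scale-invariant under $\bLambda^{(q)}=\alpha\bLambda^{(0)}$, and in Case~(B) the sample-level projection $\bH^{(q)}$ is itself scale-invariant under $\bSigma^{(q)}=\alpha\bSigma^{(0)}$. I would first record the scale-invariance facts common to both cases: since each effective rank in Definition~\ref{def:effective_rank} is a ratio of sums of eigenvalues, $r_0(\bSigma^{(q)})=r_0(\bSigma^{(0)})$, $\kstar_q=\kstar_0$, and $R_{\kstar_q}(\bSigma^{(q)})=R_{\kstar_0}(\bSigma^{(0)})$, so $\psi_q\equiv\psi_0$ and $\Upsilon_q=\Upsilon_0$; meanwhile $\|\bSigma^{(q)}\|=\alpha\|\bSigma^{(0)}\|$ and $\lambda_p^{(q)}=\alpha\lambda_p^{(0)}$. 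After this reduction, Case~(A) is driven entirely by the constant $C_{\scriptscriptstyle\bSigma^{(q)}}$, while Case~(B) is best handled through the exact expressions in Lemma~\ref{lemma:TM_bv}.

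For \textbf{Case~(A)}, the key claim is $C_{\scriptscriptstyle\bSigma^{(q)}}\lesssim\alpha^{-1}$. Granting this, substituting into Theorem~\ref{thm:convergence_rate} yields $\psi_q\,C_{\scriptscriptstyle\bSigma^{(q)}}\|\bSigma^{(q)}\|\lesssim\psi_0\cdot\alpha^{-1}\cdot\alpha\|\bSigma^{(0)}\|=\psi_0\|\bSigma^{(0)}\|$, so the bias bound is identical up to a constant independent of $\alpha$, and $(\lambda_p^{(q)})^{-1}=\alpha^{-1}(\lambda_p^{(0)})^{-1}$ explicitly multiplies the variance-inflation bound by $\alpha^{-1}$. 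To prove the claim, I would let $\mathbf{P}_A$ be the orthogonal projection onto the span of the shared leading $\tau^{\ast}$ eigenvectors and $\mathbf{P}_B:=\bI_p-\mathbf{P}_A$. The alignment hypothesis makes both $\bSigma^{(0)}$ and $\bSigma^{(q)}$ block-diagonal with respect to $(\mathbf{P}_A,\mathbf{P}_B)$, so $\bSigma^{(0)}(\bSigma^{(q)})^{-1}$ splits as $\alpha^{-1}\mathbf{P}_A\oplus(\mathbf{P}_B\bSigma^{(0)}\mathbf{P}_B)(\mathbf{P}_B\bSigma^{(q)}\mathbf{P}_B)^{-1}$. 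The leading-block contribution has operator norm $\alpha^{-1}$ exactly; for the tail block, the definition of $\tau^{\ast}$ in \eqref{def:tau_star} gives $\lambda_j^{(0)}\asymp\lambda_p^{(0)}$ and $\lambda_j^{(q)}\asymp\alpha\lambda_p^{(0)}$ for every $j>\tau^{\ast}$, and a crude product bound on the two factors gives operator norm $\lesssim\alpha^{-1}$. Taking the maximum of the two block norms gives the claim.

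For \textbf{Case~(B)}, with $\bSigma^{(q)}=\alpha\bSigma^{(0)}$ we can write $\bX^{(q)}=\sqrt{\alpha}\,\bZ^{(q)}(\bSigma^{(0)})^{1/2}$, after which a direct algebraic cancellation of the $\alpha$ factors in $\bH^{(q)}=\bX^{(q)\top}(\bX^{(q)}\bX^{(q)\top})^{\dagger}\bX^{(q)}$ shows that $\bH^{(q)}$ coincides pathwise with the homogeneous ($\alpha=1$) projection. Since the exact bias formula in Lemma~\ref{lemma:TM_bv} depends on the source only through $\bH^{(q)}$ (while $\bbeta^{(0)}$, $\bdelta^{(q)}$, and $\bPi^{(0)}$ are untouched), the bias is pathwise identical. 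For the variance inflation, the same cancellation gives $\hSigma^{(q)}=\alpha\tilde{\bSigma}$ with $\tilde{\bSigma}:=(1/n_q)(\bSigma^{(0)})^{1/2}\bZ^{(q)\top}\bZ^{(q)}(\bSigma^{(0)})^{1/2}$, hence $\hSigma^{(q)\dagger}=\alpha^{-1}\tilde{\bSigma}^{\dagger}$, and plugging into $(\sigma_q^2/n_q)\Tr(\hSigma^{(q)\dagger}\bPi^{(0)})$ pulls out an exact $\alpha^{-1}$.

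The \textbf{main obstacle} is the tail-block step in Case~(A): the leading $\tau^{\ast}$ eigenvectors align by assumption, but those in the tail need not, so $(\mathbf{P}_B\bSigma^{(0)}\mathbf{P}_B)(\mathbf{P}_B\bSigma^{(q)}\mathbf{P}_B)^{-1}$ is a product in possibly mismatched orthonormal bases. The point one must leverage is that the eigenvalue flatness implied by $\tau^{\ast}$ makes each factor comparable to a scalar multiple of the identity on the tail subspace, forcing the operator norm to scale as $\alpha^{-1}$ regardless of any basis mismatch; carrying this through without a needless loss of constants that would pollute the $\alpha$-dependence is where the argument needs care.
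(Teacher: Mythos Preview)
Your proposal is correct and follows essentially the same approach as the paper's proof: both cases hinge on the scale-invariance of effective ranks, the bound $C_{\scriptscriptstyle\bSigma^{(q)}}\lesssim\alpha^{-1}\lambda_{\tau^{\ast}+1}^{(0)}/\lambda_p^{(0)}\asymp\alpha^{-1}$ obtained via the block decomposition into the aligned top-$\tau^{\ast}$ subspace and its complement, and the pathwise scale-invariance of $\bH^{(q)}$ for Case~(B). The paper carries out the block-diagonal computation by writing $\bV^{(0)\top}\bV^{(q)}$ explicitly and bounding $\|\bLambda_{-\tau^{\ast}}\bV_{-\tau^{\ast}}^{(0)\top}\bV_{-\tau^{\ast}}^{(q)}\bLambda_{-\tau^{\ast}}^{-1}\|\leq\lambda_{\tau^{\ast}+1}^{(0)}/\lambda_p^{(0)}$, which is exactly your tail-block product bound in different notation; your identified ``main obstacle'' is therefore not an obstacle at all, and the crude product bound you describe is precisely what the paper uses.
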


The covariate shift \textnormal{(A)} in Corollary \ref{cor:freelunch} shows that as long as each $\lambda_j^{(q)}$ is uniformly upscaled from $\lambda_j^{(0)}$, preserving their decay rates, any misalignment in ``noisy'' eigendirections between $\bSigma^{(q)}$ and $\bSigma^{(0)}$ beyond the leading $\tau^{\ast}$ high-signal components does not affect the convergence rate of bias, while still promoting faster convergence in variance inflation. With all $p > \tau^{\ast}$ eigenvector pairs aligning between $\bSigma^{(q)}$ and $\bSigma^{(0)}$, covariate shift \textnormal{(B)} offers a greater advantage than \textnormal{(A)} by specifying the exact impact on the bias and variance inflation, rather than on their upper bounds. 

\begin{remark}[Relaxed \emph{free-lunch} condition]
\label{remark:relaxed_freelunch}
The $\tau^{\ast}$-alignment condition for \textnormal{(A)} in Corollary \ref{cor:freelunch} can be relaxed: even if not all of leading $\tau^{\ast}$ source eigenvectors align with the target counterparts, we can still achieve the same free-lunch effect as in \textnormal{(A)} whenever
\[
\|\bV_{\tau^{\ast}}^{(0)} - \bV_{\tau^{\ast}}^{(q)}\| \lesssim \lambda_{\tau^{\ast}+1}^{(0)}/\lambda_1^{(0)},
\]
where $\bV_{\tau^{\ast}}^{(0)} \in \bbR^{p\times \tau^{\ast}}$ (resp. $\bV_{\tau^{\ast}}^{(q)} \in \bbR^{p\times \tau^{\ast}}$) comprises the leading $\tau^{\ast}$ eigenvectors in $\bV^{(0)} $ (resp. $\bV^{(q)}$) as specified in Corollary~\ref{cor:freelunch}. 
\end{remark}

\section{Informative Multi-Source Transfer Task}
\label{sec:WTR}
In this section, we propose a transfer task that incorporates multiple sources identified as \emph{informative}, those that induce positive transfer. The index set of informative sources is given by
\begin{align}
\label{eq:true_infosource} 
\cI := \Big\{q \in [Q]: \cR_\TM^{(q)} - \cR_\M^{(0)} < 0\Big\},
\end{align}
which, however, is unknown in practice. Hence, it is of crucial interest in transfer learning to develop a data-driven procedure for detecting informative sources. Inspired by \citet{Tian2023_transglmcv}, we utilize the $K$-fold cross-validation (CV)~\citep{Allen1974_CV} to detect source transferability by comparing a ``proxy'' of excess risks.

First, partition the target dataset into $K$ folds of equal size, each denoted by $(\bX^{(0)[k]}, \by^{(0)[k]})$ for $k \in [K]$; a common choice suggests $K=5$ \citep{Hastie2009_ESL}. At each training step, we use the left-out folds $(\bX^{(0)[-k]}, \by^{(0)[-k]}) := \{(\bX^{(0)[k]}, \by^{(0)[k]})\}_{k=1}^K \setminus (\bX^{(0)[k]}, \by^{(0)[k]})$ to train an estimate $\hbeta^{[-k]}$ and then evaluate the squared loss on the $k$-th fold given by
\begin{align*}
\widehat{\cL}^{[k]} \big( \hbeta^{[-k]} \big) := \frac{1}{n_0/K} \big\|\by^{(0)[k]} - \bX^{(0)[k]} \hbeta^{[-k]}\big\|^2.    
\end{align*}
We repeat this across all $K$ folds to obtain the terminal CV loss $\widehat{\cL}\big(\hbeta\big) := \sum_{k=1}^K \widehat{\cL}^{[k]} \big( \hbeta^{[-k]} \big) / K$, which estimates the prediction risk $\cR\big(\hbeta\big) + \sigma_0^2$. We then estimate the oracle set \eqref{eq:true_infosource} by
\begin{align}
\label{eq:est_infosource}
\widehat{\cI} := \Big\{q \in [Q]: \widehat{\cL} \big( \hbeta_\TM^{(q)} \big) - \widehat{\cL} \big( \hbeta_\M^{(0)} \big) \leq  D^{(0)} \Big\},
\end{align}
with some detection threshold $D^{(0)} > 0$ to be specified later that depends on the target CV loss.

If $\widehat\cI$ is non-empty, we train the TM estimate $\hbeta_\TM^{(i)}$ for each $i \in \widehat\cI$ and form a weighted linear combination of $\hbeta_\TM^{(i)}$. Each weight $w_i$ is initialized by the inverse of the CV loss $\widehat{\cL}\big( \hbeta_\TM^{(i)} \big)$ and then normalized so that $\sum_{i \in \widehat\cI} w_i = 1$, which serves as a data-adaptive measure of source informativeness. This allows us to leverage $|\widehat\cI|$ sources for knowledge transfer, which we name \textbf{Informative-Weighted Transfer MNI (WTM)}. The entire procedure, from detecting informative sources to computing the WTM estimate, is outlined in Algorithm \ref{alg:CV_WTM} (Appendix \ref{sec:algorithm}), specifying $D^{(0)}$ in the set \eqref{eq:est_infosource}.

\section{Numerical Experiments}
\label{sec:simulation}
We evaluate the finite-sample performance of our proposed TM and WTM estimates by comparing them to the target-only MNI as a baseline. For the WTM estimate, we set $K=5$ and $\varepsilon_0=1/2$ in Algorithm \ref{alg:CV_WTM}. Additionally, we test the pooled-MNI (PM) proposed by \citet{Song2024_pooledMNI} as a benchmark transfer task, which has an analytical solution:
\begin{align*}
\hbeta_\PM &:= \argmin_{\bbeta \in \bbR^p} \big\{ \norm{\bbeta}: \bX^{(q)} \bbeta = \by^{(q)},~\forall q \in \zerotoQ \big\}.
\end{align*}
To consider a fine-tuning-based benchmark, we also test $\textnormal{SGD}_q$ \citep{Wu2022_SGD} each pre-trained with the $q$-th source, adjusting the initial learning rate from their sourcecode to ensure convergence.

Each setup is replicated over 50 independent simulations, and the average excess risk over the 50 simulations is plotted against each value of $p \in \{300,400,\ldots,1000\}$. The mutually independent noise has i.i.d.~mean-zero Gaussian components with common variance $\sigma^2=1$. We adopt the parametric configurations described below to simulate distribution shifts. Since only $\{(\bZ^{(q)},\beps^{(q)})\}_{q=0}^Q$ in Assumption \ref{assumption:model} are subject to randomness in our setup, all other parameters are generated with a fixed seed across the 50 simulations to ensure their deterministic nature; see Appendix~\ref{sec:parameter_generation} for details.

\textbf{Model shift.}~~Let $\bbeta^{(0)} = S^{1/2}\bu^{(0)}$ and $\bdelta^{(q)} = (\SSR_q\cdot S)^{1/2}\bu^{(q)}$ for $S > 0$ and $q \in [Q]$, where each $\bu^{(q)}$ is randomly generated from the $p$-dimensional unit sphere $\mathbb{S}^{p-1}$ without any structural assumption on the coefficients, e.g., sparsity. We write $\mathbf{SSR} = (\SSR_1,\SSR_2,\ldots,\SSR_Q)$.

\textbf{Covariate shift.}~~Let $\bX^{(q)} = \bZ^{(q)}(\bSigma^{(q)})^{1/2}$ where $\{\bZ^{(q)}\}_{q=0}^Q$ are mutually independent and have i.i.d.~standard Gaussian entries. In Section \ref{sec:benign_experiments} with $Q=3$ sources, the ``benign'' spectral matrices  $\bLambda^{(q)}=\Diag\big(\lambda_1^{(q)},\ldots,\lambda_p^{(q)}\big)$ are as follows: $\lambda_j^{(0)} = 15 j^{-1}\log^{-\beta}\left(\frac{(j+1)e}{2}\right)$; $\lambda_j^{(1)} = 15 j^{-\gamma}$;~ $\lambda_j^{(2)}= 15j^{-(1 + \log(n_2)/n_2)}$;~ $\lambda_j^{(3)} = 15\bigg[\indicator(j=1) + \indicator(j>1) \varepsilon \frac{1+\theta^2-2\theta \cos\big( j\pi/(p+1) \big)}{1+\theta^2-2\theta \cos\big(\pi/(p+1) \big)}\bigg]$, where we set 
\footnote{According to \citet{Bartlett2020_PNAS}, $\bLambda^{(0)}$ and $\bLambda^{(1)}$ are benign if and only if $\beta>1$ and $\gamma \in (0,1)$, and $\bLambda^{(3)}$ is benign if and only if $\theta < 1$, $p\gg n_3$, and $p \varepsilon \ll n_3$.} $\beta=3/2$, $\gamma=\theta = 1/2$, and $\varepsilon = 10^{-5}$. While we always fix the target covariance diagonal so that $\bSigma^{(0)}=\bLambda^{(0)}$, under covariate shift, we assign for each source covariance the spectral decomposition $\bSigma^{(q)}=\bV^{(q)}\bLambda^{(q)}\bV^{(q)\top}$, where $\{\bV^{(q)}\}_{q=1}^3$ are independently sampled from the orthogonal group $\mathcal{O}_p := \{\bQ \in \bbR^{p \times p}: \bQ^\top\bQ=\bQ\bQ^\top=\bI_p\}$. By doing so, we test the robustness of our methods to covariate shift without simultaneous diagonalizability, as discussed prior to Assumption \ref{assumption:model}. We evaluate the effect of free-lunch covariate shift (Corollary \ref{cor:freelunch}) by adjusting the source covariances by $\bV^{(q)}(\alpha\bLambda^{(q)})\bV^{(q)\top}$ with $\alpha>1$, so we do not alter any source eigendirection to align with the target. In Section \ref{sec:harmless_experiments}, we set $Q=2$ and $\bSigma^{(0)}=\bSigma^{(1)}=\bSigma^{(2)}=\bI_p$, and under free-lunch covariate shift, $\bSigma^{(1)}=\bSigma^{(2)}=\alpha\bI_p$ with $\alpha > 1$. 

\subsection{Benign Overfitting Experiments}
\label{sec:benign_experiments}
\begin{figure*}[h]
    \centering
    \includegraphics[width=0.96\linewidth]{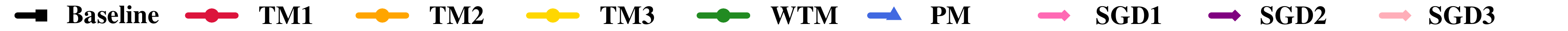}\\
    \subfigure[$\mathbf{SSR}=(0,0.3,0.6)$]{\includegraphics[width=0.32\linewidth]{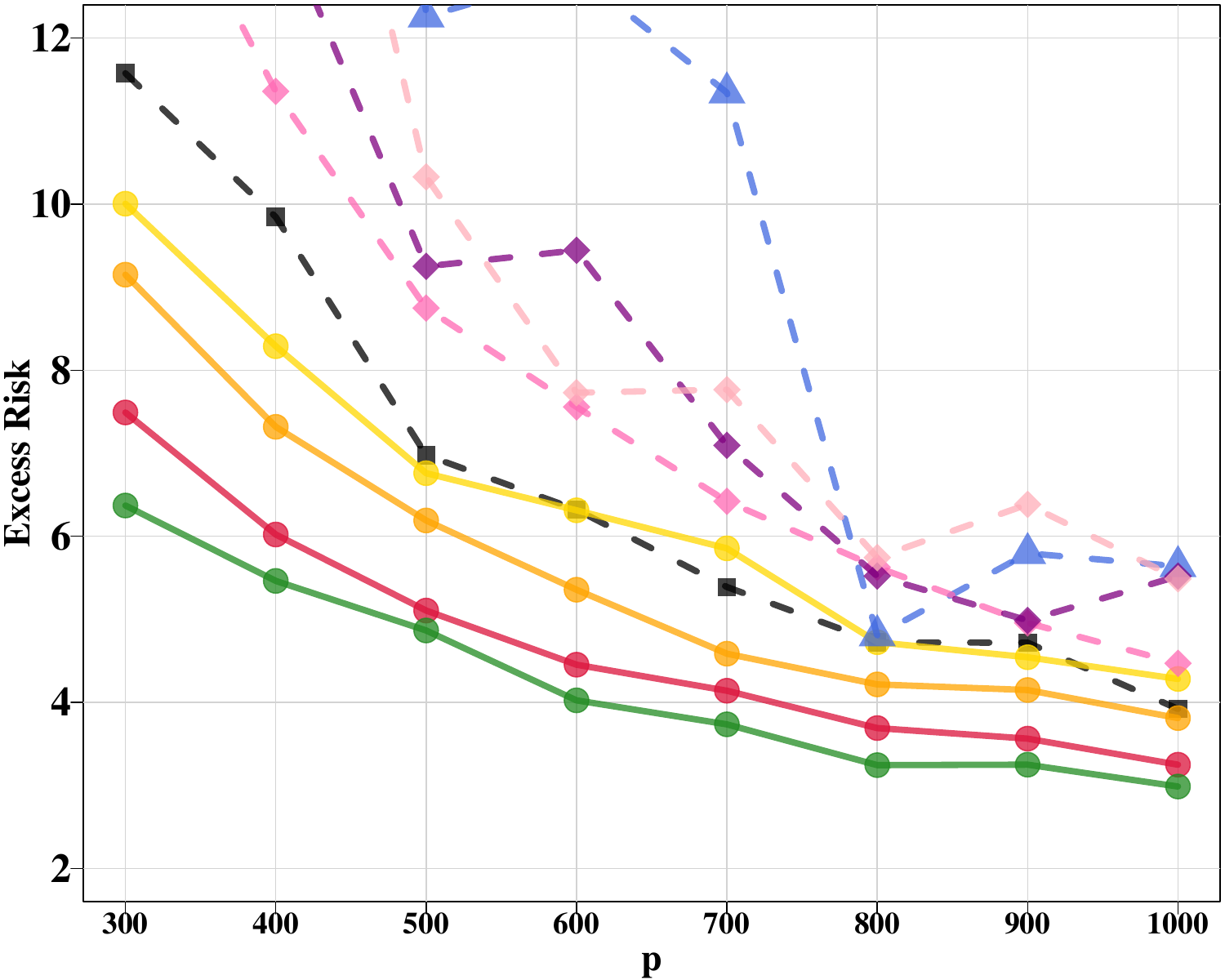}}
    \hfill
    \subfigure[common $\SSR=0.3$]{\includegraphics[width=0.32\linewidth]{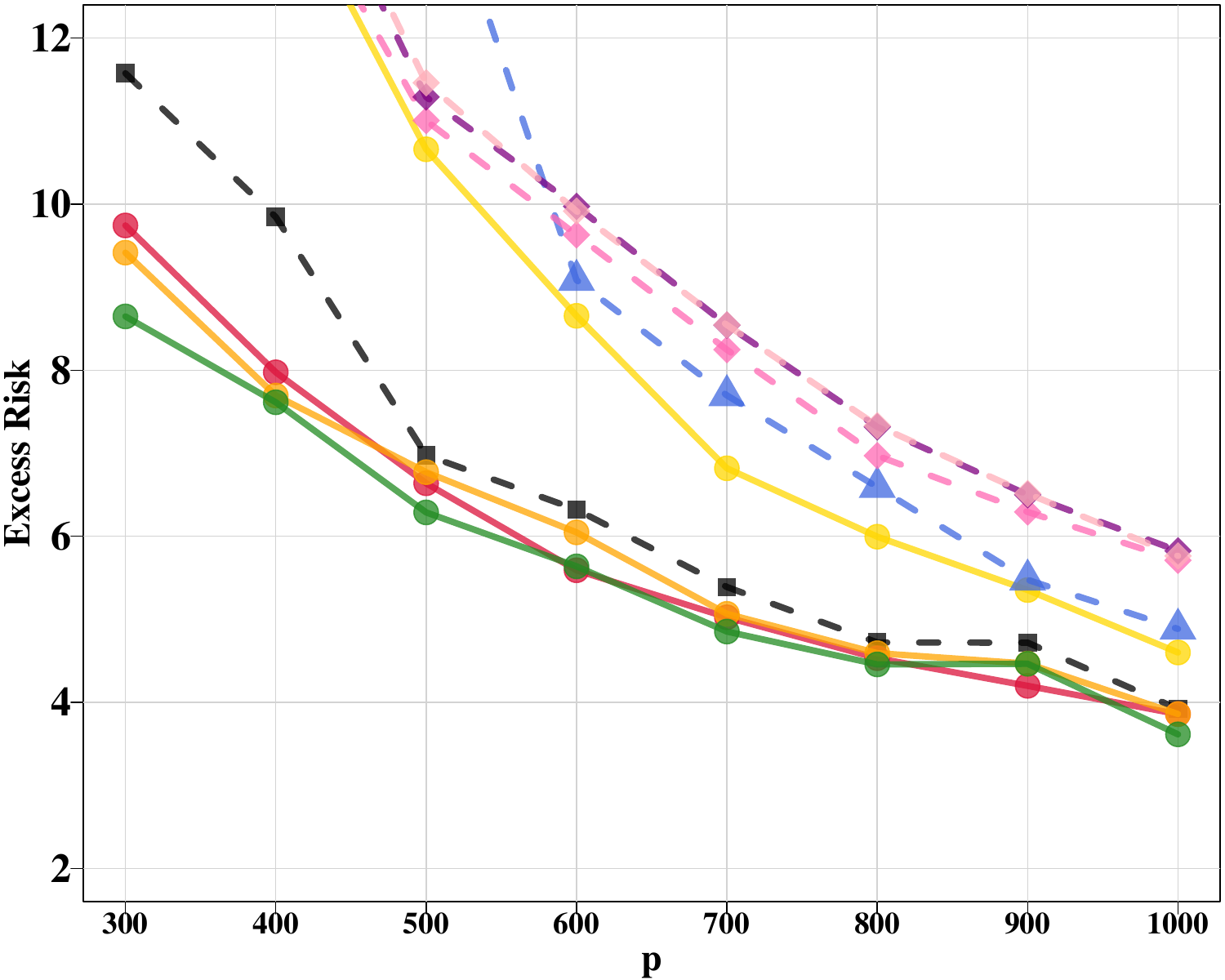}}
    \hfill
    \subfigure[common $\SSR=0.3$, $\alpha=8$]{\includegraphics[width=0.32\linewidth]{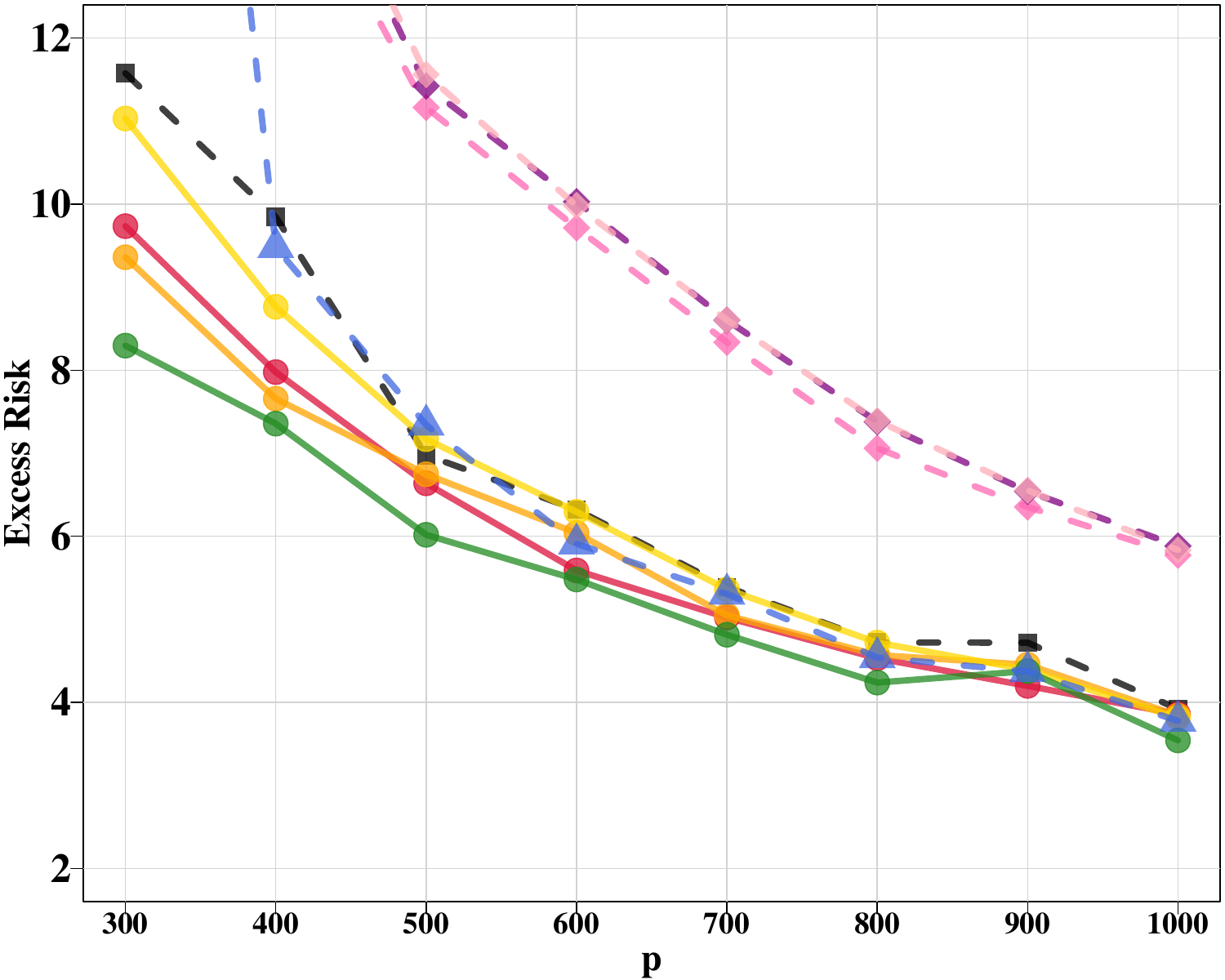}}
    \\
    \caption{Let $n_0=25$ and $n_1=n_2=n_3=75$ for overfitting with $S=500$. Figures (b) and (c) incorporate covariate shifts as detailed, with (c) additionally benefiting from the ``free-lunch'' effect.}
    \label{fig:benign}
\end{figure*}

While $\hbeta_\PM$ can outperform our estimates when it only learns source data without any distribution shift from the target (see Appendix~\ref{sec:benign_experiments_appendix}), the aggregate impact of distribution shifts for $\hbeta_\PM$ is catastrophic throughout Fig.~\ref{fig:benign}. On the contrary, in Fig.~\hyperref[fig:benign]{1.(a)}, each $\hbeta_\TM^{(q)}$ remains robust, outperforming the target-only MNI even under severe model shift ($\SSR=0.6$). Notably, the ensemble $\hbeta_\WTM$ outperforms each individual $\hbeta_\TM^{(q)}$, validating our data-adaptive weighting scheme based on CV losses.

Figures~\hyperref[fig:benign]{1.(b)} and \hyperref[fig:benign]{1.(c)} additionally introduce covariate shift in the broadest sense, as not only the spectrum of each $\bSigma^{(q)}$ differs from that of $\bSigma^{(0)}$, but also all eigenvectors of $\bSigma^{(q)}$ arbitrarily misalign with those of $\bSigma^{(0)}$. As a result, $\hbeta_\TM^{(3)}$ evidently suffers negative transfer in Fig.~\hyperref[fig:benign]{1.(b)}; nevertheless, $\beta_\WTM$ efficiently leverages only \emph{informative} TM estimates by filtering out $\hbeta_\TM^{(3)}$, consistently outperforming all competitors. When the \emph{free-lunch} factor $\alpha > 1$ is additionally applied in Fig.~\hyperref[fig:benign]{1.(c)}, $\hbeta_\TM^{(3)}$ achieves significantly faster convergence and now performs comparably to the target-only MNI. 

\subsection{Harmless Interpolation Experiments}
\label{sec:harmless_experiments}
\begin{figure*}[h]
    \centering
    \includegraphics[width=0.96\linewidth]{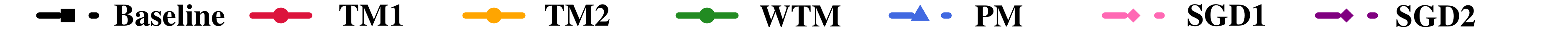}\\
    \subfigure[common $\SSR=0.4$]{\includegraphics[width=0.32\linewidth]{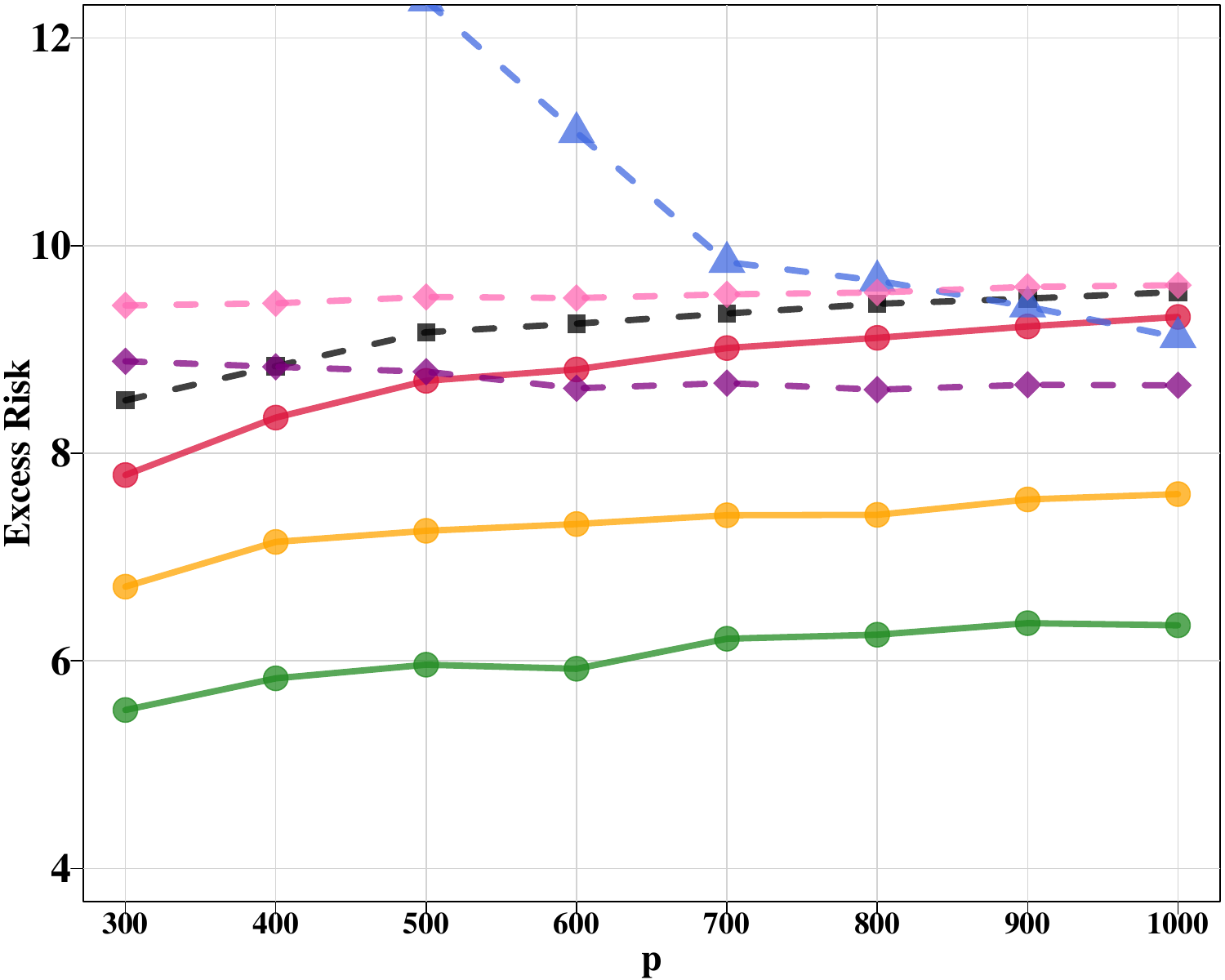}}
    \hfill
    \subfigure[common $\SSR=0.4$, $\alpha=8$]{\includegraphics[width=0.32\linewidth]{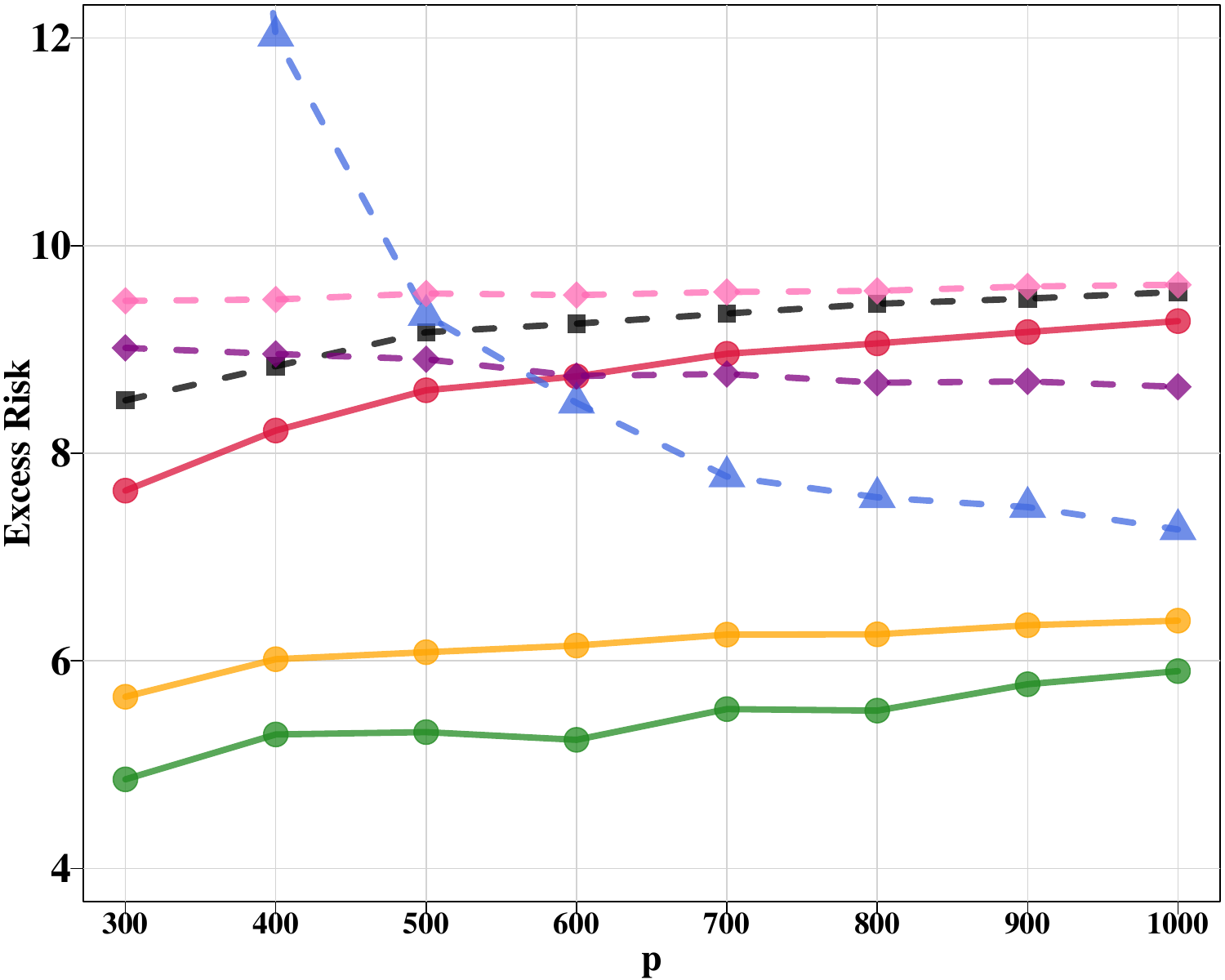}}
    \hfill
    \subfigure[common $\SSR=0.4$, $\alpha=8$]{\includegraphics[width=0.32\linewidth]{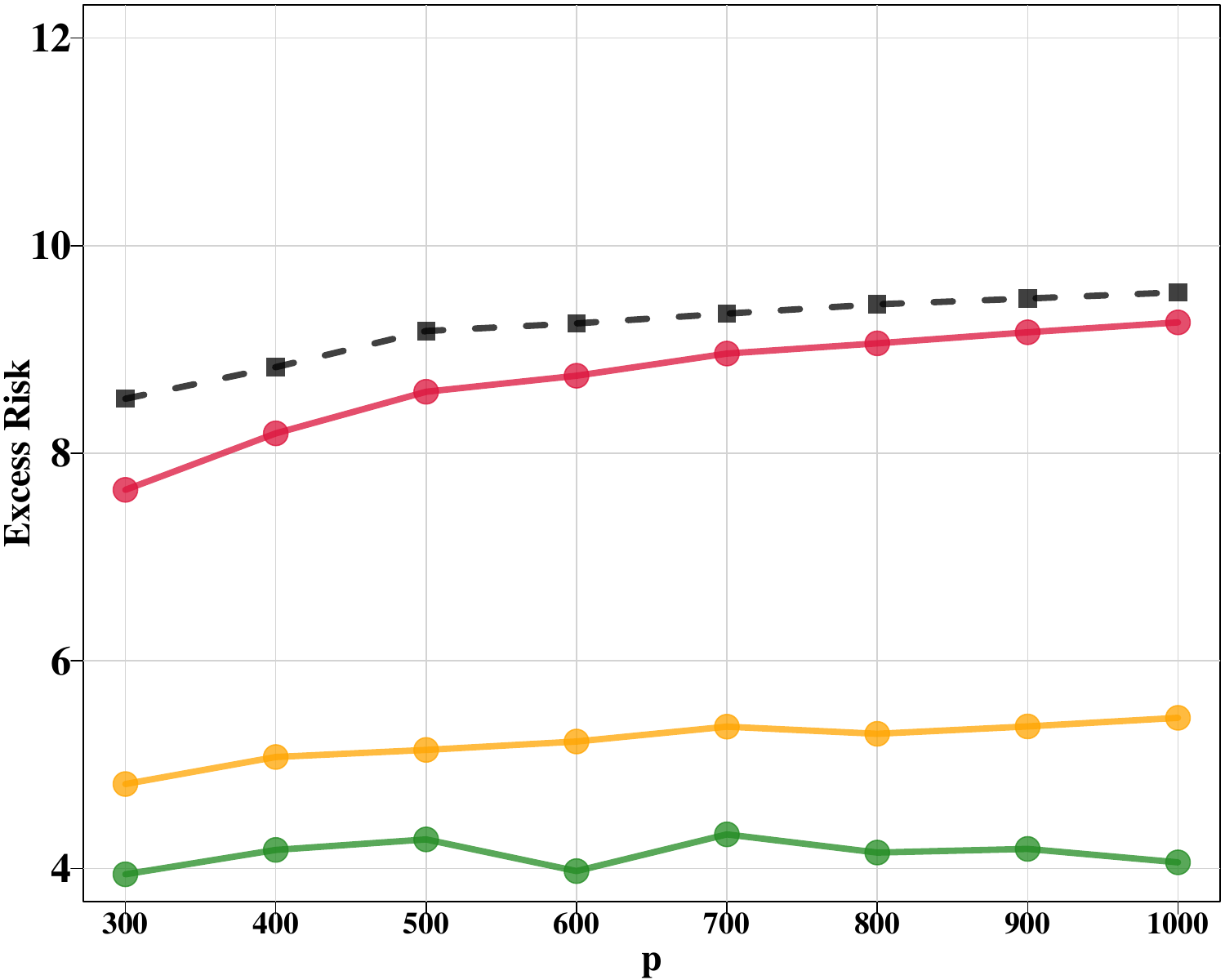}}
    \vskip -0.1in
    \caption{Let $n_0=n_1=50$, and $n_2=\lfloor n_2^{\ast} \rfloor$, the optimal transfer size for $\hbeta_\TM^{(2)}$, with $S=10$. Figure (c) adjusts $n_2^{\ast}$ in Corollary~\ref{cor:isotropic} via the modified signal-to-noise ratio $\SNR_{\alpha} := \alpha \|\bbeta^{(0)}\|^2/\sigma^2$, which allows $\hbeta_\TM^{(2)}$ to leverage even more  source samples than the original $\lfloor n_2^{\ast} \rfloor$ in (a) and (b).}
    \label{fig:harmless}
\end{figure*}

In Fig.~\ref{fig:harmless}, the target-only MNI with isotropic covariance demonstrates \emph{harmless interpolation} \citep{Muthukumar2020_Harmless}, with its excess risk steadily converging to $\|\bbeta^{(0)}\|^2 = 10$. Therefore, the primary evaluation is whether each transfer task can prevent, or at least delay, its excess risk from converging to 10. In Fig.~\hyperref[fig:harmless]{2.(a)}, despite considerable model shift ($\SSR= 0.4$) and a sub-optimal transfer size, $\hbeta_\TM^{(1)}$ slightly outperforms $\hbeta_\M^{(0)}$. While $\hbeta_\PM$ and $\textnormal{SGD}_2$ eventually surpass the baseline as $p$ grows, they still lag behind $\hbeta_\TM^{(2)}$ by a significant margin. Notably, even though $\hbeta_\TM^{(1)}$ performs worse than $\hbeta_\TM^{(2)}$ (while still inducing positive transfer), their ensemble $\hbeta_\WTM$ substantially further reduces the excess risk. 

Our methods benefit from the free-lunch covariate shift in Fig.~\hyperref[fig:harmless]{2.(b)}. However, in this scenario, the transfer size $\lfloor n_2^{\ast} \rfloor$ is no longer optimal because the variance inflation of TM is reduced by a factor of $\alpha$, essentially reducing the source noise level from $\sigma_2^2 = \sigma^2$ to $\sigma_2^2=\sigma^2/\alpha$. Thus, transferring even more than $\lfloor n_2^{\ast} \rfloor$ samples becomes advantageous. Taking this into account, we use $\SNR_{\alpha} := \alpha \|\bbeta^{(0)}\|^2/\sigma^2$, which results in a new optimal transfer size that depends on $\SNR_{\alpha}$ and is larger than $\lfloor n_2^{\ast} \rfloor$ used in Fig.~\hyperref[fig:harmless]{2.(a)-(b)}. By incorporating this new optimum, $\hbeta_\TM^{(2)}$ and $\hbeta_\WTM$ in Fig.~\hyperref[fig:harmless]{2.(c)} further outperform their counterparts in Fig.~\hyperref[fig:harmless]{2.(b)}. We present in Appendix~\ref{sec:harmless_experiments_appendix} additional experiment results under model shift with isotropic covariates, where the target sample size $n_0$ is optimized for $\hbeta_\PM$ as identified by \citet{Song2024_pooledMNI}. To enhance the clarity of plotted figures, we separately report in Appendix~\ref{sec:simulation_sd} the empirical mean and standard deviation of excess risks computed over the 50 independent simulations.

\section{Conclusion and Future Work}
\label{sec:conclusion}
We propose a novel two-step Transfer MNI and a data-adaptive ensemble aggregating multiple \emph{informative} Transfer MNIs. These methods address the posed \hyperref[intro:question]{question} highlighting the underexplored nature of transfer learning for 
MNIs, supported by both theoretical characterizations and numerical validations under various distribution shift conditions. We further identify \emph{free-lunch} covariate shift regimes for Transfer MNI where the trade-off between bias reduction and variance inflation is neutralized, allowing us to ``cherry-pick'' the benefit of knowledge transfer at limited cost.

\textbf{Consistency of informative source detection.}~~Beyond the scope of this paper, we aim to provide a theoretical foundation for the empirical success of the WTM estimate prominently demonstrated throughout our experiments. Specifically, we aim to establish the consistency of informative source detection via cross-validation in Algorithm \ref{alg:CV_WTM} by proving that the event $\cI = \widehat\cI$ holds with high probability, where $\cI$ is the oracle set in~\eqref{eq:true_infosource} and $\widehat\cI$ is its CV-driven estimate in \eqref{eq:est_infosource}. This agenda connects to the literature on source transferability detection, a pivotal aspect of transfer learning research. While classification settings are well studied~\citep{Bao2019_classificationtransferability,Nguyen2020_classificationtransferability,Tan2021_classificationtransferability,Huang2022_classificationtransferability}, less is known for regression. Recently, \citet{Nguyen2023_regressiontransferability} introduced linear MSE as a regression transferability metric, and \citet{Tian2023_transglmcv} established cross-validation consistency for transfer generalized linear models. However, both analyses rely on explicit regularization, whereas our setting features benign overfitting with implicit regularization. Establishing consistency of transferability detection in this regime remains open and would meaningfully advance the relevant literature.

\textbf{Extension to minimum-RKHS-norm interpolator.}~~Another promising direction is to analyze the extension of Transfer MNI from finite-dimensional linear regression to nonlinear, infinite-dimensional regression in a reproducing kernel Hilbert space (RKHS)~\citep{Scholkopf2001_kernel} via the minimum-RKHS-norm interpolator, including both fixed kernels and network-induced kernels such as the Neural Tangent Kernel (NTK)~\citep{Jacot2018_NTKgen}. Extensive literature has explored benign overfitting in single-task cases; see Appendix~\ref{sec:add_litreview} for a detailed review. However, to the best of our knowledge, transfer learning for the minimum-RKHS-norm interpolator beyond OOD settings still remains underexplored.

For the RKHS $\cH \equiv \cH_K$ associated with a positive semi-definite kernel $K: \bbX \times \bbX \to \bbR$ and norm $\|\cdot\|_{\cH}$, let $\hat{f}_\M^{(0)}$ and $\hat{f}_\M^{(q)}$ be the minimum-RKHS-norm interpolators trained on the target and $q$-th source datasets, respectively; their analytical forms with empirical kernel (Gram) matrices $\bK^{(q)} \in \bbR^{n_q \times n_q}$ for $q \in [Q]_0$ are well-known, where the $(i,j)$-th entry is $\bK_{ij}^{(q)}=K(\bx_i^{(q)},\bx_j^{(q)})$. This naturally motivates the RKHS analogue of Transfer MNI:
\[
\hat{f}_\TM^{(q)} := \argmin_{f \in \cH} \Big\{ \big\|f - \hat{f}_\M^{(q)}\big\|_{\cH}:f\big(\bx_i^{(0)}\big) = y_i^{(0)},~ \forall i \in [n_0] \Big\}, \quad q \in [Q],
\]
i.e., $\hat{f}_\TM^{(q)}$ interpolates the target dataset $\{(\bx_i^{(0)},y_i^{(0)})\}_{i=1}^{n_0}$ while minimizing the RKHS-norm distance from the pre-trained $\hat{f}_\M^{(q)}$. With the target evaluation operator $E_0: \cH \to \bbR^{n_0}$ and its adjoint $E_0^{\ast}: \bbR^{n_0} \to \cH$ such that $(E_0f)_i = f(\bx_i^{(0)})$, $E_0E_0^{\ast}=\bK^{(0)}$, and $\hat{f}_\M^{(0)} = E_0^{\ast}\bK^{(0)\dagger}\by^{(0)}$, the projection operator $P_0$ onto the target-induced span $\cS_{0,\cH}:=\mathrm{span}\{K(\cdot,\bx_i^{(0)})\}_{i=1}^{n_0}$ is given by $P_0 := E_0^{\ast}\bK^{(0)\dagger}E_0$. The operator acts as the RKHS counterpart to the projection matrix $\bH^{(0)}$ for linear regression in that $P_0^{\ast} = P_0$ and $P_0^2 = P_0$ (i.e., self-adjoint and idempotent). Hence, the analytical form of Transfer MNI in Equation~\eqref{eq:TM} systematically extends to provide the unique closed form of $\hat{f}_\TM^{(q)}$:
\[
\hat{f}_\TM^{(q)} = \hat{f}_\M^{(0)} + (I_{\cH} - P_0)\hat{f}_\M^{(q)},
\]
where $I_{\cH}$ is the identity operator. Indeed, the RKHS TM estimate $\hat{f}_\TM^{(q)}$ fully inherits the ``retain-plus-transfer'' mechanism in \eqref{eq:retain_transfer}:~it matches $\hat{f}_\M^{(0)}$ on $\cS_{0,\cH}$ and the fine-tuning step transfers knowledge learned by $\hat{f}_\M^{(q)}$ only into the complement space, i.e., $(I_{\cH}-P_0)\hat{f}_\TM^{(q)} = (I_{\cH}-P_0)\hat{f}_\M^{(q)}$.

Given the above formulation, it arises as an important direction to analyze the generalization gains of RKHS Transfer MNI with respect to factors that govern benign overfitting in an RKHS-such as the ground truth function class, kernel type, ambient input dimension, and sample size-as well as distribution shifts in the ground truth function and covariates between target and source tasks. We anticipate that our proposed scheme can serve as a versatile baseline for further studies on transfer learning with \emph{implicitly regularized} minimum-norm interpolators that span a broad class of norms and, on a separate axis, extend beyond linear models, encompassing fixed RKHS kernels, NTK-type network-induced kernels, and data-adaptive kernels from deep representation learning.

\newpage
\section*{Acknowledgment}
The authors are grateful to the anonymous reviewers for their constructive comments and suggestions, which helped improve the quality of this paper. Yeichan Kim and Ilmun Kim acknowledge support from the Basic Science Research Program through the National Research Foundation of Korea (NRF) funded by the Ministry of Education (2022R1A4A1033384), and the Korea government (MSIT) RS-2023-00211073. Seyoung Park’s work was supported by the National Research Foundation of Korea (NRF) grant funded by the MSIT (No. RS-2025-00517793) and by the Yonsei University Research Fund of 2025-22-0071.

\bibliographystyle{plainnat}
\bibliography{references}


\newpage
\appendix

\section{Additional Literature Review}
\label{sec:add_litreview}

\textbf{Minimum-$\ell_p$-norm interpolators with $p \neq 2$.}~~For the minimum-$\ell_1$-norm interpolator, i.e., Basis Pursuit (BP)~\citep{Chen1998_BP}, its consistency hinges on the ambient input dimension $d$, sample size $n$, and model sparsity level $s$. \citet{LiWei2021_asymptoticBP} analyzed the asymptotic excess risk as $d/n \to \gamma \in (0,\infty)$ with isotropic design and linear sparsity, illustrating multiple descent phases with increasing degree of overparameterization. \citet{WDY2021_tightboundBP} demonstrated matching excess risk bounds of order $\sigma^2/\log(d/n)$ for noisy BP with isotropic design and noise variance $\sigma^2$, provided $s \lesssim n/\log(d/n)^{5}$ and $n \lesssim d \lesssim \exp(n^{1/5})$. Complementary to these results, \citet{DRSY2022_minnormLp} showed an interpolation-specific bias-variance tension across $\ell_p$: interpolation achieves fast polynomial rates near $1/n$ for $p>1$, but only logarithmic rates for $p=1$. As an instructive counterpoint for BP, \citet{Chatterji2022_foolishcrowdsbenign} proved lower bounds under Gaussian design, showing that the excess risk of BP can converge exponentially more slowly than that of the minimum-$\ell_2$-norm interpolator even when the ground truth is sparse.

A separate, algorithm-agnostic line controls the population error of \emph{any} interpolator via uniform convergence: \citet{Koehler2021_unifconverge} analyzed generalization in terms of Gaussian width and, instantiating the result on the $\ell_1$ simplex, obtained BP consistency. Beyond Hilbert-focused arguments, \citet{Chinot2022_minnormRERM} developed a robustness framework for minimum-norm interpolators with potentially adversarial errors, where their bounds track localized complexity, the norm of interpolated noise, and the subdifferential at the ground truth. Finally, \citet{Kur2024_minnormBanach}~placed minimum-norm interpolators in general Banach spaces: under $2$-uniform convexity, the bias is controlled by Gaussian complexity, while under cotype~2, they established a reverse Efron–Stein variance lower bound and sharpness of $\ell_p$ regression with $p \in [1,2]$. 

\textbf{Minimum-RKHS-norm interpolator.}~~Within a reproducing kernel Hilbert space (RKHS)~\citep{Scholkopf2001_kernel}, a line of recent works has examined when the minimum-RKHS-norm interpolator, often called kernel \emph{ridgeless} regression (KRR)\footnote{While the acronym KRR often refers to kernel ridge regression with a regularization parameter $\lambda > 0$ in the literature, we use KRR to denote the ridgeless minimum-norm interpolator with $\lambda=0$.}, generalizes well. \citet{LiangRakhlin2020_justinterpolateRKHS} attributed implicit regularization in an RKHS to high input dimension, kernel curvature, and favorable spectral decays and provided out-of-sample guarantees under such conditions. 

In fixed dimension, interpolation can fundamentally fail: \citet{RakhlinZhai2019_laplacekernelRKHS} showed that Laplace-kernel-based KRR is inconsistent for any bandwidth selection, making consistency a high-dimensional phenomenon. Refining this fixed-$d$ paradigm, \citet{Haas2023_mindspikesRKHS} proved that for ``smooth'' kernels, benign overfitting is impossible in fixed $d$, yet~``spiky-smooth'' kernels (with large derivatives) can achieve rate-optimal benign overfitting; they further transferred the mechanism to wide networks by modifying activations. For the widely used Gaussian kernel,~\citet{Medvedev2024_gaussiankernelRKHS} showed that in fixed $d$, KRR is never consistent even when the bandwidth is tuned and, with sufficiently large noise, is often worse than the null predictor; with increasing $d$, they tracked transitions among benign, tempered, and catastrophic regimes, where a sub-polynomial-$d$ example typically exhibits benign overfitting. 

In high-dimensional regimes, \citet{DWY2019_rotationinvarianceRKHS} identified a structural barrier for a broad class of rotationally invariant kernels (RBF, inner product, and fully connected NTKs): KRR is consistent only for low-degree polynomial targets, so spectral decay alone does not guarantee benign overfitting. Finally, \citet{Mallinar2022_benigntaxonomy} proposed a unifying taxonomy of \emph{benign}, \emph{tempered}, and \emph{catastrophic} overfitting linked to the kernel eigenspectrum and ridge regularization parameter $\lambda \geq 0$: overfitting is benign with a positive $\lambda > 0$, whereas for KRR with $\lambda =0$, it depends on spectral decay rates; Laplace kernels and ReLU NTKs typically lay in the tempered regime. 

\newpage
\section{Preliminary Lemmas}
\label{sec:prelimlemmas}
\subsection{Matrix Properties}
\begin{lemma}[Expectation of quadratic forms]
\label{lemma:quadratic_form}
Let $\htheta \in \bbR^p$ be a random vector with mean $\bbE\big[\htheta\big]$ and covariance $\Cov\big(\htheta\big)$, and $\bM \in \bbR^{p \times p}$ be a deterministic (or conditioned) symmetric matrix. The expectation of the quadratic form $\htheta^\top \bM \htheta$ is given by
\[
\bbE \big[ \htheta^\top \bM \htheta \big] = \bbE \big[\htheta\big]^\top \bM \bbE \big[\htheta \big]  + \Tr \Big( \bM \Cov\big(\htheta\big) \Big).
\]
\end{lemma}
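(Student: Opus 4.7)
The plan is to invoke the standard mean/centered decomposition and then exploit linearity of expectation together with the cyclic property of the trace. Let $\bmu := \bbE[\htheta]$ and $\bxi := \htheta - \bmu$, so that $\bbE[\bxi]=\zeros_p$ and $\Cov(\htheta) = \bbE[\bxi\bxi^\top]$. First, I would write
\[
\htheta^\top \bM \htheta = (\bmu + \bxi)^\top \bM (\bmu + \bxi) = \bmu^\top \bM \bmu + \bmu^\top \bM \bxi + \bxi^\top \bM \bmu + \bxi^\top \bM \bxi,
\]
using the symmetry of $\bM$ only to keep the bookkeeping clean (though symmetry is not even strictly necessary here, since one can replace $\bM$ by $(\bM+\bM^\top)/2$ inside a quadratic form without changing the value).

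Next, I would take expectations termwise. Since $\bmu$ and $\bM$ are deterministic (or conditioned), the first term contributes $\bmu^\top \bM \bmu = \bbE[\htheta]^\top \bM \bbE[\htheta]$, and the two cross terms vanish because $\bbE[\bxi] = \zeros_p$ and expectation commutes with the fixed linear maps $\bmu^\top \bM (\cdot)$ and $(\cdot)^\top \bM \bmu$. The only remaining piece is the centered quadratic $\bbE[\bxi^\top \bM \bxi]$.

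For that term, I would apply the scalar-trace identity $\bxi^\top \bM \bxi = \Tr(\bxi^\top \bM \bxi)$, the cyclic property of the trace $\Tr(\bxi^\top \bM \bxi) = \Tr(\bM \bxi \bxi^\top)$, and then swap expectation with trace (a finite sum) to obtain
\[
\bbE\bigl[\bxi^\top \bM \bxi\bigr] = \Tr\bigl(\bM\, \bbE[\bxi\bxi^\top]\bigr) = \Tr\bigl(\bM\, \Cov(\htheta)\bigr).
\]
Summing the surviving contributions yields the stated identity. There is no substantive obstacle; the only subtlety worth flagging in the write-up is the distinction between the deterministic and the conditional case, which is handled uniformly by interpreting all expectations as conditional on whatever $\sigma$-algebra makes $\bM$ measurable (and the mean/covariance of $\htheta$ well-defined), so that linearity of expectation and the Fubini-type swap with the finite-dimensional trace both go through without modification.
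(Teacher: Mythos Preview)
Your proof is correct and uses essentially the same ingredients as the paper's proof: the scalar-as-trace identity, cyclic invariance of the trace, and linearity of expectation. The only cosmetic difference is ordering—you center $\htheta$ first and apply the trace trick to the centered piece, whereas the paper applies the trace trick to the full quadratic and then invokes $\bbE[\htheta\htheta^\top]=\Cov(\htheta)+\bbE[\htheta]\bbE[\htheta]^\top$; both routes are standard and equivalent.
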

\begin{proof}
Since trace is invariant to cyclic permutation, we have
\begin{align*}
\bbE \Big[ \htheta^\top \bM \htheta \Big] = \bbE \Big[ \Tr \big(\htheta^\top \bM \htheta \big) \Big] &= \Tr \Big( \bM  \bbE \big[ \htheta \htheta^\top \big] \Big) \\
&= \Tr \left( \bM  \Big( \Cov\big(\htheta\big) + \bbE\big[\htheta\big] \bbE\big[\htheta\big]^\top \Big) \right) \\
&=  \Tr \left( \bM \Cov\big(\htheta\big) \right) + \Tr \left( \bM \bbE\big[\htheta\big] \bbE\big[\htheta\big]^\top \right) \\
&= \Tr \left( \bM \Cov\big(\htheta\big) \right) + \bbE \big[\htheta \big]^\top \bM  \bbE \big[\htheta \big].
\end{align*}
\end{proof}

The following lemma is an adjustment of Lemma 3 in \citet{Ju2023_genperformTR} and Proposition 3 and Lemma 16 in \citet{Ju2023_benignmetalearning}, tailored to align with our setting of well-specified linear models. The result follows from the rotational invariance of Gaussian distribution.
\begin{lemma}[Standard Gaussian orthogonal projection]
\label{lemma:gaussian_projection}
Let $\bX^{(q)} \in \bbR^{n_q \times p}$ have i.i.d.~standard Gaussian entries with $p > n_q+1$. Since the matrix has full row rank almost surely, the orthogonal projection onto the $n_q$-dimensional row space of $\bX^{(q)}$ is given by
\[
\bH^{(q)} = \bX^{(q)\top} \big(\bX^{(q)} \bX^{(q)\top}\big)^{-1} \bX^{(q)},
\]
and $\bI_p - \bH^{(q)}$ is the orthogonal projection onto the $(p-n_q)$-dimensional null space, with expectations
\[
\bbE\bH^{(q)} = \frac{n_q}{p} \bI_p, \qquad \bbE\big[\bI_p - \bH^{(q)}\big] = \Big(1-\frac{n_q}{p}\Big)\bI_p.
\] 
Thus, for any fixed vector $\ba \in \bbR^p$, we have
\[
\bbE \big\|\bH^{(q)}\ba\big\|^2 = \frac{n_q}{p} \norm{\ba}^2, \qquad \bbE \big\|\big(\bI_p - \bH^{(q)}\big)\ba\big\|^2 = \Big(1-\frac{n_q}{p}\Big) \norm{\ba}^2.
\]
\end{lemma}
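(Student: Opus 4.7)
The plan is to combine the rotational invariance of the standard Gaussian law with a trace identity and the idempotency of orthogonal projections.

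First, I would verify that the displayed formula for $\bH^{(q)}$ is well-defined: since the rows of $\bX^{(q)}$ are i.i.d.\ $N(\zeros_p,\bI_p)$ and $p > n_q$, the matrix has full row rank almost surely, so $\bX^{(q)}\bX^{(q)\top}$ is invertible and $\bH^{(q)}$ is indeed the (self-adjoint, idempotent) orthogonal projection onto the $n_q$-dimensional row space $\cS_q$.

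The central step is to show $\bbE\bH^{(q)} = c\,\bI_p$ for some scalar $c$. For any orthogonal $\bQ \in \bbR^{p\times p}$, rotational invariance of the Gaussian law gives $\bX^{(q)}\bQ \stackrel{d}{=} \bX^{(q)}$, and substituting into the closed form for the projector shows that the projection onto the row space of $\bX^{(q)}\bQ$ equals $\bQ^\top \bH^{(q)}\bQ$. Hence $\bQ^\top \bH^{(q)}\bQ \stackrel{d}{=} \bH^{(q)}$, and taking expectations yields $\bQ^\top (\bbE\bH^{(q)})\bQ = \bbE\bH^{(q)}$ for every orthogonal $\bQ$. A symmetric matrix commuting (by conjugation) with the entire orthogonal group must be a scalar multiple of the identity, so $\bbE\bH^{(q)} = c\,\bI_p$.

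To pin down $c$, I would use the trace: $\Tr\bH^{(q)} = n_q$ almost surely because $\bH^{(q)}$ projects onto an $n_q$-dimensional subspace, hence $pc = \bbE\Tr\bH^{(q)} = n_q$, giving $c = n_q/p$. Linearity then delivers $\bbE[\bI_p - \bH^{(q)}] = (1 - n_q/p)\,\bI_p$. For the quadratic-form identities, I would exploit $(\bH^{(q)})^2 = \bH^{(q)}$: $\bbE\|\bH^{(q)}\ba\|^2 = \bbE[\ba^\top \bH^{(q)}\ba] = \ba^\top (\bbE\bH^{(q)})\,\ba = (n_q/p)\|\ba\|^2$, and the identity for $\bI_p - \bH^{(q)}$ follows analogously. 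No substantive obstacle arises; the only step requiring any care is the invariance argument pinning down $\bbE\bH^{(q)}$ as a scalar matrix, and this can be verified elementarily by conjugating first with permutation matrices (to equalize all diagonal entries and all off-diagonal entries) and then with a $2$-plane rotation (to eliminate the off-diagonal part).
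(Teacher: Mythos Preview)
Your proposal is correct and follows exactly the approach the paper indicates: the paper does not give a detailed proof but simply states that ``the result follows from the rotational invariance of Gaussian distribution'' and defers to external references, which is precisely the mechanism you spell out in full (orthogonal-conjugation invariance forces $\bbE\bH^{(q)}$ to be scalar, the trace identity fixes the constant, and idempotency handles the quadratic forms).
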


The following lemma can be generalized to a broad class of norms, with our focus on the norms of a real matrix.
\begin{lemma}[H\"{o}lder's inequality for Schatten $p$-norm]
\label{lemma:Holder_inequality}
Given two real matrices $\bA$ and $\bB$, H\"{o}lder's inequality yields
\[
\Big| \Tr\big(\bA^\top \bB\big) \Big| \leq \norm{\bA}_p \norm{\bB}_q,
\]
where $\frac{1}{p} + \frac{1}{q} = 1$ and $\norm{\bA}_p$ is the Schatten p-norm of $\bA$ defined as 
\[
\norm{\bA}_p := \Big( \Tr \big( \abs{\bA}^p \big) \Big)^{1/p}: \qquad \abs{\bA} := \big(\bA^\top \bA \big)^{1/2}.
\]
The above inequality also holds when $p=1$ and $q=\infty$, where the infinity norm corresponds to the operator norm, i.e,
\[
\norm{\bB}_{\infty} = \norm{\bB}.
\]
In particular, if both $\bA$ and $\bB$ are symmetric and positive semi-definite, we have
\[
\Tr(\bA \bB) \leq \Tr(\bA) \norm{\bB}.
\]
\end{lemma}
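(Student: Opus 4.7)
The plan is to deduce the matrix-valued bound from the classical scalar Hölder inequality via von Neumann's trace inequality, which I would invoke as a known result rather than reprove. Denoting by $\sigma_1(\bM) \geq \sigma_2(\bM) \geq \ldots$ the singular values of a matrix $\bM$ arranged in decreasing order, von Neumann's inequality gives $|\Tr(\bA^\top \bB)| \leq \sum_i \sigma_i(\bA)\sigma_i(\bB)$, which reduces the matrix-level statement to bounding a sum of products of nonnegative scalars.

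Applying the classical scalar Hölder inequality to the sequences $\{\sigma_i(\bA)\}$ and $\{\sigma_i(\bB)\}$ with conjugate exponents $1/p + 1/q = 1$ then yields $\sum_i \sigma_i(\bA)\sigma_i(\bB) \leq (\sum_i \sigma_i(\bA)^p)^{1/p} (\sum_i \sigma_i(\bB)^q)^{1/q} = \|\bA\|_p \|\bB\|_q$, where the last equality is the definition of the Schatten norm combined with the identity $\Tr(|\bM|^r) = \sum_i \sigma_i(\bM)^r$. For the endpoint case $p=1, q=\infty$, I would simply factor out $\sigma_1(\bB) = \|\bB\|$ from the sum, obtaining $\sum_i \sigma_i(\bA)\sigma_i(\bB) \leq \|\bB\|\sum_i \sigma_i(\bA) = \|\bB\|\|\bA\|_1$, which is the operator-norm form.

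For the PSD specialization, I would bypass singular values and argue directly via the Loewner order, which is cleaner than tracking how $\Tr(\bA) = \|\bA\|_1$ emerges for PSD $\bA$. Symmetric positive semi-definiteness of $\bB$ implies $\|\bB\|\bI_p - \bB \succeq 0$; conjugation by $\bA^{1/2}$, which exists since $\bA$ is symmetric PSD, preserves positive semi-definiteness, so $\bA^{1/2}(\|\bB\|\bI_p - \bB)\bA^{1/2} \succeq 0$ and its trace is nonnegative. Cyclicity of the trace then gives $\|\bB\|\Tr(\bA) - \Tr(\bA\bB) \geq 0$, i.e., $\Tr(\bA\bB) \leq \Tr(\bA)\|\bB\|$. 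The only genuinely nontrivial ingredient is von Neumann's trace inequality, so I would cite a standard reference for it; the remaining steps reduce to routine scalar Hölder and spectral identities, and I do not anticipate a substantive obstacle.
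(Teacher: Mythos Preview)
Your proposal is correct. The paper does not actually prove this lemma: it is stated in the preliminary lemmas section as a standard result and invoked without proof, so there is nothing to compare against. Your argument via von Neumann's trace inequality followed by scalar H\"older is the textbook route, and the Loewner-order derivation of the PSD specialization is a clean, self-contained alternative to deducing it from the $p=1$ case.
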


\begin{lemma}[Sherman-Morrison-Woodbury formula] 
\label{lem:SMWformula}
Let $\bA \in \bbR^{n \times n}$ be an invertible matrix and $\bZ \in \bbR^{n \times k}$ be a matrix with $k < n$ such that $\bZ\bZ^\top + \bA$ is invertible. By the Sherman-Morrison-Woodbury formula, it follows that
\begin{align*}
\bZ^\top \big(\bZ\bZ^\top + \bA \big)^{-1} \bZ &= \bZ^\top \left(  \bA^{-1} - \bA^{-1} \bZ \big( \bI_k + \bZ^\top\bA^{-1}\bZ \big)^{-1} \bZ^\top \bA^{-1} \right) \bZ \\
&= \bZ^\top \bA^{-1} \bZ \big( \bI_k + \bZ^\top\bA^{-1}\bZ \big)^{-1}
\end{align*}
and
\[
\bZ^\top \big(\bZ\bZ^\top + \bA \big)^{-2}\bZ = \big( \bI_k + \bZ^\top\bA^{-1}\bZ \big)^{-1} \bZ^\top \bA^{-2} \bZ \big( \bI_k + \bZ^\top\bA^{-1}\bZ \big)^{-1}.
\]
\end{lemma}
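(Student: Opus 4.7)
The plan is to derive both displays as purely algebraic consequences of the standard Sherman-Morrison-Woodbury identity
\[
(\bA + \bZ\bZ^\top)^{-1} = \bA^{-1} - \bA^{-1}\bZ\big(\bI_k + \bZ^\top\bA^{-1}\bZ\big)^{-1}\bZ^\top\bA^{-1},
\]
combined with one elementary simplification. Throughout, I will abbreviate $\bM := \bZ^\top\bA^{-1}\bZ$, so the identity $\bM(\bI_k+\bM)^{-1} = \bI_k - (\bI_k+\bM)^{-1}$ (immediate from $(\bI_k+\bM)(\bI_k+\bM)^{-1} = \bI_k$) will do the real work of collapsing the nested inverses.

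For the first display, I will left- and right-multiply the standard SMW identity by $\bZ^\top$ and $\bZ$, which yields the first equality directly. The resulting expression equals $\bM - \bM(\bI_k+\bM)^{-1}\bM$; factoring out $\bM$ on the right and applying the simplification identity collapses this to $\bM(\bI_k+\bM)^{-1}$, giving the second equality. For the $(\bZ\bZ^\top + \bA)^{-2}$ display, I will first establish the ``half-sandwich'' identity $\bZ^\top(\bZ\bZ^\top+\bA)^{-1} = (\bI_k+\bM)^{-1}\bZ^\top\bA^{-1}$ by the same SMW-plus-collapse argument; transposing (all matrices involved are symmetric or appear in transposed pairs) gives $(\bZ\bZ^\top+\bA)^{-1}\bZ = \bA^{-1}\bZ(\bI_k+\bM)^{-1}$. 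Writing $\bZ^\top(\bZ\bZ^\top+\bA)^{-2}\bZ = \bigl[\bZ^\top(\bZ\bZ^\top+\bA)^{-1}\bigr]\bigl[(\bZ\bZ^\top+\bA)^{-1}\bZ\bigr]$ and substituting the two half-sandwich identities produces $(\bI_k+\bM)^{-1}\bZ^\top\bA^{-2}\bZ(\bI_k+\bM)^{-1}$, matching the claim.

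The only point worth checking is the invertibility of $\bI_k + \bM$, but the matrix determinant lemma $\det(\bA+\bZ\bZ^\top) = \det(\bA)\det(\bI_k + \bM)$ ensures this is automatic from the assumed invertibility of $\bA$ and $\bZ\bZ^\top + \bA$, so no additional hypothesis is required. Consequently there is no substantive obstacle in the argument: it reduces entirely to careful bookkeeping of left-versus-right multiplication by $\bZ^\top$ and $\bZ$, together with one scalar-style rewrite of $\bM(\bI_k+\bM)^{-1}$.
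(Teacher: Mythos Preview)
The paper states this lemma without proof (it is presented as a standard identity in the preliminary section), so there is no paper argument to compare against. Your derivation is correct in substance: sandwiching the SMW identity by $\bZ^\top$ and $\bZ$ and using $\bM(\bI_k+\bM)^{-1}=\bI_k-(\bI_k+\bM)^{-1}$ yields the first display, and the half-sandwich identities assemble the second.

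One small slip: the transposition shortcut for the second half-sandwich tacitly assumes $\bA$ is symmetric, which the lemma does not require. Transposing $\bZ^\top(\bZ\bZ^\top+\bA)^{-1}=(\bI_k+\bM)^{-1}\bZ^\top\bA^{-1}$ gives an identity involving $\bA^\top$ and $\bM^\top$, not the one you need. This is harmless, though: the companion identity $(\bZ\bZ^\top+\bA)^{-1}\bZ=\bA^{-1}\bZ(\bI_k+\bM)^{-1}$ follows by the same SMW-plus-collapse argument applied on the right (factor out $\bA^{-1}\bZ$ on the left and use $(\bI_k+\bM)^{-1}\bM=\bI_k-(\bI_k+\bM)^{-1}$), with no symmetry needed. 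With that one-line fix your proof is complete.
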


\subsection{Concentration Inequalities}
For simplicity, we present the subsequent lemmas using the target design matrix $\bX \equiv \bX^{(0)}$ and covariance $\bSigma \equiv \bSigma^{(0)}$, whose eigenvalues are denoted by
\[
\lambda_1 \geq \lambda_2 \geq \ldots \geq \lambda_p > 0,
\]
and assume without loss of generality that $\bX$ has full row rank so that $\bX\bX^\top$ is invertible. We further define the target sample covariance $\hSigma = (1/n_0)\bX^\top\bX$, and the eigenvalues of a matrix $\bM \in \bbR^{d \times d}$ are denoted by 
\[
\mu_1(\bM) \geq \mu_2(\bM) \geq \ldots \geq \mu_d(\bM).
\]
All subsequent results readily extend to the source counterparts $\bX^{(q)}$, $\bSigma^{(q)}$, $\lambda_j^{(q)}$, and $\hSigma^{(q)}$ under Assumptions \ref{assumption:model} and \ref{assumption:effective_rank}. 

The following lemma provides a high-probability upper bound on the operator norm of a sub-Gaussian sample covariance, characterized by the effective ranks in Definition \ref{def:effective_rank}.
\begin{lemma}[Exercise 9.2.5,~\citet{Vershynin2018_HDP}]
\label{lemma:opnorm}
Let Assumption \ref{assumption:model} hold. There exists a constant $u > 0$ that depends only on the sub-Gaussian parameter $\nu_x$ in Assumption \ref{assumption:model} such that with probability at least $1 - 2e^{-t}$ for $t \geq \log(2)$,
\begin{align*}
\|\hSigma - \bSigma \| \leq u \left(\sqrt{\frac{r_0(\bSigma) + t}{n_0}} + \frac{r_0(\bSigma) + t}{n_0} \right) \|\bSigma\|.
\end{align*}
\end{lemma}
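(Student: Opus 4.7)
My plan is to prove the bound by the standard dual-to-covering route: write the operator norm as a supremum of quadratic forms over the unit sphere, obtain sub-exponential concentration for each fixed direction via Bernstein's inequality, and then extend pointwise concentration to a uniform bound through discretization. The effective rank $r_0(\bSigma)$ (as opposed to the ambient dimension $p$) will enter only at the covering-number stage.

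\textbf{Step 1 (symmetrization in $\bu$).} Since $\hSigma - \bSigma$ is symmetric, I would start from
\[
\|\hSigma - \bSigma\| = \sup_{\bu \in \mathbb{S}^{p-1}} \bigl| \bu^\top (\hSigma - \bSigma) \bu \bigr| = \sup_{\bu \in \mathbb{S}^{p-1}} \Bigl| \tfrac{1}{n_0} \textstyle\sum_{i=1}^{n_0} \bigl( (\bu^\top \bx_i)^2 - \bbE (\bu^\top \bx_i)^2 \bigr) \Bigr|.
\]
\textbf{Step 2 (pointwise Bernstein).} For each fixed $\bu$, the variable $\bu^\top \bx_i$ is $\nu_x \|\bSigma^{1/2}\bu\|$-sub-Gaussian by Assumption~\ref{assumption:model}, so $(\bu^\top \bx_i)^2 - \bbE(\bu^\top \bx_i)^2$ is centered sub-exponential with Orlicz-$1$ norm $\lesssim \nu_x^2 \|\bSigma^{1/2}\bu\|^2 \le \nu_x^2 \|\bSigma\|$. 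Bernstein's inequality then yields, for each such $\bu$ and every $s \ge 0$, a deviation bound of order $\nu_x^2 \|\bSigma\| \bigl( \sqrt{s/n_0} + s/n_0 \bigr)$ with probability at least $1 - 2e^{-s}$. This already gives the correct functional form of the bound; what remains is to make it uniform in $\bu$.

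\textbf{Step 3 (chaining with intrinsic complexity).} Applying a standard $1/4$-net argument on $\mathbb{S}^{p-1}$ and a union bound would give $s \asymp p + t$, producing a factor $p/n_0$ rather than $r_0(\bSigma)/n_0$. To replace $p$ by $r_0(\bSigma) = \Tr(\bSigma)/\|\bSigma\|$ I would reparametrize through the whitened vectors $\bz_i = \bSigma^{\dagger/2}\bx_i$, so that the supremum is now taken over the ellipsoid $\{\bSigma^{1/2}\bu : \bu \in \mathbb{S}^{p-1}\}$ and the natural complexity measure becomes Talagrand's $\gamma_2$ functional of this ellipsoid with respect to the Euclidean metric. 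Alternatively, I would invoke Tropp's matrix Bernstein inequality with intrinsic dimension applied to $\bX_i = \bx_i\bx_i^\top - \bSigma$, using $\bbE\bX_i^2 \preceq C\nu_x^4 \|\bSigma\| \bSigma$, so that $\|\bbE \bX_i^2\| \lesssim \|\bSigma\|^2$ and the intrinsic-dimension pre-factor is $\Tr(\bbE\bX_i^2)/\|\bbE\bX_i^2\| \lesssim r_0(\bSigma)$. Either route yields a uniform deviation controlled by $\nu_x^2 \|\bSigma\| \bigl( \sqrt{(r_0(\bSigma) + t)/n_0} + (r_0(\bSigma) + t)/n_0 \bigr)$ on an event of probability at least $1 - 2e^{-t}$ for $t \geq \log 2$.

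\textbf{Main obstacle.} The routine pieces are Bernstein and the $\epsilon$-net; the delicate part is arranging the covering so that only the intrinsic dimension $r_0(\bSigma)$ appears. A naive sphere covering loses the benign-spectrum advantage entirely, so the argument must either pass through chaining in an anisotropic metric, or piggyback on a matrix Bernstein result in which the dimensional factor is already intrinsic. Once that is handled, absorbing constants into the universal $u$ (depending only on $\nu_x$) is immediate.
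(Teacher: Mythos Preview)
The paper does not prove this lemma; it is stated as a citation to Exercise~9.2.5 in Vershynin's \emph{High-Dimensional Probability} and used as a black box throughout Section~\ref{sec:prelimlemmas} and the proof of Theorem~\ref{thm:convergence_rate}. Your outline is the standard route to this result and is correct: the pointwise Bernstein bound in Step~2 is immediate from Assumption~\ref{assumption:model}, and you have correctly isolated the only nontrivial point, namely that a naive $\epsilon$-net on $\mathbb{S}^{p-1}$ yields $p$ rather than $r_0(\bSigma)$. Either of your two proposed fixes works---chaining over the ellipsoid $\bSigma^{1/2}\mathbb{S}^{p-1}$ is essentially Vershynin's intended solution, while the intrinsic-dimension matrix Bernstein of Tropp is a legitimate alternative (one must handle the unbounded summands $\bx_i\bx_i^\top$ via truncation or a sub-exponential variant, but this is routine). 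Since the paper offers no proof to compare against, there is nothing further to contrast.
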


Next, we restates some key results from \citet{Bartlett2020_PNAS} and \citet{Mallinar2024_MNIcovshift} for ease on the reader and completeness. The central focus of their analysis is the normalized variance of target-only MNI $\cV_\M^{(0)}$ in Equation~\eqref{eq:baseline_BV}, where
\begin{equation}
\label{eq:variance_trace}
\cV_\M^{(0)}/\sigma_0^2 = \frac{1}{n_0}\Tr\big(\hSigma^{\dagger}\bSigma\big) = \Tr\Big( \bX^\top (\bX\bX^\top)^{-2} \bX \bSigma \Big).
\end{equation}
Following \citet{Bartlett2020_PNAS}, we fix $\bX = \bZ\bSigma^{1/2}$ in the eigenbasis of $\bSigma$, where $\bZ \equiv \bZ^{(0)} \in \bbR^{n_0 \times p}$ is the sub-Gaussian whitened design matrix as specified in Assumption \ref{assumption:model}. Then, the gram matrix and its reduced-rank variants can be expressed as
\begin{equation}
\label{eq:gram}
\bA := \bX\bX^\top = \sum_{j=1}^p \lambda_j \bz_j \bz_j^\top, \qquad \bA_{-j} := \sum_{i \neq j} \lambda_i \bz_i \bz_i^\top, \qquad \bA_{k} := \sum_{j > k} \lambda_j \bz_j \bz_j^\top,
\end{equation}
where $\{\bz_j\}_{j=1}^{p} \subset \bbR^{n_0}$ are i.i.d.~columns of $\bZ$; each $j$-th column $\bz_j$ has i.i.d.~$\nu_x$-sub-Gaussian mean-zero, unit-variance components. Consequently, the trace term in Equation~\eqref{eq:variance_trace} can be written as
\begin{align}
\Tr\Big( \bX^\top (\bX\bX^\top)^{-2} \bX \bSigma \Big) &= \sum_{j=1}^p \lambda_j^2 \bz_j^\top \left( \sum_{i=1}^p \lambda_i \bz_i \bz_i^\top \right)^{-2} \bz_j  \notag \\
&= \sum_{j=1}^p \lambda_j^2 \bz_j^\top \left( \lambda_j \bz_j \bz_j^\top + \bA_{-j} \right)^{-2} \bz_j \notag \\
&= \sum_{j=1}^p \frac{\lambda_j^2 \bz_j^\top \bA_{-j}^{-2} \bz_j}{(1 + \lambda_j \bz_j^\top \bA_{-j}^{-1} \bz_j)^2}, \label{eq:SMW_variance_trace}
\end{align}
where the last equality holds by Lemma \ref{lem:SMWformula}. Opening the discussion, the following lemma gives the concentration of the spectrum of the matrices in \eqref{eq:gram}.
\begin{lemma}[Lemma 5,~\citet{Bartlett2020_PNAS}]
\label{lemma:eigval_A}
Let Assumption \ref{assumption:model} hold. There exist universal constants $b, c \geq 1$ such that for any $k \geq 0$, with probability at least $1-2e^{-n/c}$, 
\begin{enumerate}
\item for all $j \geq 1$,
\[ 
\mu_{k+1}(\bA_{-j}) \leq \mu_{k+1}(\bA) \leq \mu_1(\bA_k) \leq c\left(\sum_{i>k} \lambda_i + n_0\lambda_{k+1} \right), 
\]
\item for all $1 \leq j \leq k$,
\[ 
\mu_{n_0}(\bA) \geq \mu_{n_0}(\bA_{-j}) \geq \mu_{n_0}(\bA_k) \geq \frac1c \sum_{i>k} \lambda_i -c n_0\lambda_{k+1}, 
\]
\item if $r_k(\bSigma) \geq bn_0$, then
\[ 
\frac1c \lambda_{k+1}r_k(\bSigma) \leq \mu_{n_0}(\bA_k) \leq \mu_1(\bA_k) \leq c \lambda_{k+1} r_k(\bSigma).
\]
\end{enumerate}
\end{lemma}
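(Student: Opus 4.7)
The strategy is to reduce everything to a single concentration bound on the tail Gram matrix $\bA_k = \sum_{j>k}\lambda_j \bz_j\bz_j^\top$ and then propagate the resulting eigenvalue control to $\bA$ and $\bA_{-j}$ using Weyl's inequality and PSD monotonicity. The crucial observation is that since the $\bz_j$ are i.i.d.~mean-zero with identity covariance, $\bbE[\bA_k] = \left(\sum_{j>k}\lambda_j\right)\bI_{n_0}$ is a scaled identity. Thus, by Weyl's inequality,
\[
\Big|\mu_i(\bA_k) - \textstyle\sum_{j>k}\lambda_j\Big| \leq \big\|\bA_k - \bbE[\bA_k]\big\| \quad \text{for every } i \in [n_0],
\]
so a single operator-norm deviation bound on $\bA_k - \bbE[\bA_k]$ simultaneously yields both the upper bound $\mu_1(\bA_k) \leq c\big(\sum_{j>k}\lambda_j + n_0 \lambda_{k+1}\big)$ in item~1 and the lower bound $\mu_{n_0}(\bA_k) \geq \tfrac{1}{c}\sum_{j>k}\lambda_j - c n_0 \lambda_{k+1}$ in item~2.

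For the concentration itself, one route is to fix $\bu \in \mathbb{S}^{n_0-1}$ and write the quadratic form $\bu^\top(\bA_k - \bbE\bA_k)\bu = \sum_{j>k}\lambda_j\big((\bu^\top\bz_j)^2 - 1\big)$ as a sum of independent centered sub-exponentials with variance proxy $\sum_{j>k}\lambda_j^2 \leq \lambda_{k+1}\sum_{j>k}\lambda_j$ and sub-exponential scale $\lambda_{k+1}$; a Bernstein inequality followed by a union bound over a $1/4$-net of $\mathbb{S}^{n_0-1}$ (of cardinality at most $9^{n_0}$) then yields
\[
\big\|\bA_k - \bbE[\bA_k]\big\| \lesssim \lambda_{k+1}\big(\sqrt{n_0 \, r_k(\bSigma)} + n_0\big)
\]
with probability at least $1 - 2e^{-n_0/c}$, absorbing the $\log 9$ factor from the net into the Bernstein exponent. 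A shortcut for the upper bound alone is to observe that $\mu_1(\bA_k) = \mu_1(\bW_k^\top\bW_k) = n_0 \|\hSigma_k\|$ for $\bW_k = \bZ_k \bLambda_k^{1/2}$ and apply Lemma~\ref{lemma:opnorm} with $\bSigma$ replaced by $\bLambda_k$ (so $r_0(\bLambda_k) = r_k(\bSigma)$, $\|\bLambda_k\| = \lambda_{k+1}$) and $t \asymp n_0$. To turn the two-sided deviation into item~2's lower bound, split the cross term via AM--GM, $\lambda_{k+1}\sqrt{n_0 r_k(\bSigma)} \leq \tfrac{1}{2}\lambda_{k+1}r_k(\bSigma) + \tfrac{1}{2}n_0\lambda_{k+1}$, so that $\mu_{n_0}(\bA_k) \geq \tfrac{1}{2}\sum_{j>k}\lambda_j - c' n_0 \lambda_{k+1}$ after renaming constants.

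The remaining inequalities in items~1 and~2 follow by pure linear algebra on the same event. Decomposing $\bA = \bR_k + \bA_k$ with $\bR_k := \sum_{j\leq k}\lambda_j\bz_j\bz_j^\top$ of rank at most $k$, Weyl's inequality gives $\mu_{k+1}(\bA) \leq \mu_{k+1}(\bR_k) + \mu_1(\bA_k) = \mu_1(\bA_k)$; writing $\bA = \bA_{-j} + \lambda_j\bz_j\bz_j^\top$ as a PSD rank-$1$ perturbation yields $\mu_{k+1}(\bA_{-j}) \leq \mu_{k+1}(\bA)$ and $\mu_{n_0}(\bA) \geq \mu_{n_0}(\bA_{-j})$. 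For $1 \leq j \leq k$, the identity $\bA_{-j} = \bA_k + \sum_{i \leq k,\, i \neq j}\lambda_i \bz_i\bz_i^\top \succeq \bA_k$ then forces $\mu_{n_0}(\bA_{-j}) \geq \mu_{n_0}(\bA_k)$, closing the chain. Item~3 is an immediate consequence: if $r_k(\bSigma) \geq b n_0$ with $b$ chosen large enough relative to the absolute constants in items~1--2, the additive terms $\pm c n_0\lambda_{k+1} \leq (c/b)\lambda_{k+1}r_k(\bSigma)$ can be absorbed into multiplicative constants on both sides, collapsing the bounds on $\mu_{n_0}(\bA_k)$ and $\mu_1(\bA_k)$ to $\lambda_{k+1}r_k(\bSigma)$ up to a universal factor.

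The main obstacle is the concentration step: the net argument contributes a $\log(9) n_0$ price in the union bound that must be dominated by the Bernstein exponent, which pins the scale of the deviation at exactly $\sqrt{n_0\sum\lambda_j^2} + n_0\lambda_{k+1}$ and requires the sub-Gaussian hypothesis in Assumption~\ref{assumption:model} to control the sub-exponential tails of $(\bu^\top\bz_j)^2 - 1$ uniformly in $\bu$. Everything else is deterministic linear algebra on the concentration event.
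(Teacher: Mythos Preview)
Your proposal is correct. The paper does not give its own proof of this lemma---it is stated as a preliminary result with a direct citation to \citet{Bartlett2020_PNAS}---and your argument reconstructs essentially the same route as the original: concentrate $\bA_k$ around its scaled-identity mean $\big(\sum_{j>k}\lambda_j\big)\bI_{n_0}$ via an $\varepsilon$-net plus Bernstein (or equivalently a Koltchinskii--Lounici operator-norm bound), then propagate to $\bA$ and $\bA_{-j}$ by Weyl interlacing and PSD monotonicity, with item~3 following by absorbing $n_0\lambda_{k+1}\leq b^{-1}\lambda_{k+1}r_k(\bSigma)$ once $b$ is large enough.
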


\subsubsection{Upper Bound on Single-Task Variance}
The following lemma provides high-probability bounds on the norm of random sub-Gaussian vector with independent components. The result is used to provide an upper bound on the trace term in Equation~\eqref{eq:SMW_variance_trace}.
\begin{lemma}[Corollary 1,~\citet{Bartlett2020_PNAS} and Corollary B.5,~\citet{Mallinar2024_MNIcovshift}]
\label{lemma:projection}
There exists a universal constant $c > 0$ such that for a random vector $\bz \in \bbR^{n_0}$ with independent $\nu_x$-sub-Gaussian mean-zero, unit-variance components, any random subspace $\cG \subset \bbR^{n_0}$ of co-dimension $k$ that is independent of $\bz$, and any $t > 0$, with probability at least $1-3e^{-t}$,
\begin{align*}
\|\bz\|^2 &\leq n_0 + c\nu_x^2(t+\sqrt{n_0t}), \\
\|\bPi_{\cG} \bz \|^2 &\geq n_0 - c\nu_x^2(k + t+\sqrt{n_0t}),
\end{align*}
where $\bPi_{\cG}$ is the orthogonal projection onto $\cG$. Furthermore, let $t \in (0,n_0/c_0]$ and $k \in [0, n_0/c_1]$ for $c_1 > c_0$ with sufficiently large $c_0$. Then, with probability at least $1-3e^{-t}$,
\begin{align*}
\|\bz\|^2 &\leq c_2n_0, \\
\|\bPi_{\cG} \bz \|^2 &\geq n_0/c_3,
\end{align*}
where $c_2$ and $c_3$ only depend on $c$, $c_0$, and $\nu_x$. For each $\bz_j \in \bbR^{n_0}$ and the corresponding subspace $\cG_j$ defined analogously to $\bz_j$ and $\cG_j$, define the event
\[
E_j = \Big\{ \|\bz_j\|^2 \leq c_2n_0 ~\cap~ \|\bPi_{\cG_j} \bz_j \|^2 \geq n_0/c_3 \Big\}, \quad j \in [\ell].
\]
Then, the union bound argument implies
\begin{align*}
\bbP\left(\bigcup_{j=1}^{\ell} (E_j)^c\right) &\leq  \sum_{j=1}^{\ell} \bbP\Big((E_j)^c\Big) \\
&\leq \sum_{j=1}^{\ell} 3e^{-t} = 3\ell e^{-t} \\
\implies \bbP\left(\bigcap_{j=1}^{\ell} E_j\right) &\geq 1-3\ell e^{-t} = 1-3e^{-\big(t-\log(\ell)\big)},
\end{align*}
which necessitates $0 < t - \log(\ell) \leq n/c_0$ to complete the bound. Since each event $E_j$ is defined for $t \in (0, n/c_0]$, by taking $t=n/c_0$ pre-event and requiring that
\[
\log(\ell) \leq n/c_0 \iff \ell \leq e^{n/c_0},
\]
all events $\{E_j\}_{j=1}^{\ell}$ hold simultaneously with probability at least $1-3e^{n/c_0}$.
\end{lemma}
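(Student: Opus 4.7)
The plan is to derive both bounds from a single device: the Hanson--Wright concentration inequality for quadratic forms in independent sub-Gaussian coordinates. For any symmetric $\bM \in \bbR^{n_0 \times n_0}$, this inequality gives
\[
\bbP\bigl(\bigl|\bz^\top \bM \bz - \Tr(\bM)\bigr| > s\bigr) \leq 2\exp\Bigl(-c'\min\bigl(s^2/(\nu_x^4 \|\bM\|_F^2),\,s/(\nu_x^2 \|\bM\|)\bigr)\Bigr),
\]
and choosing $s = c\nu_x^2\bigl(\sqrt{\|\bM\|_F^2\,t} + \|\bM\|\,t\bigr)$ absorbs the two regimes of the minimum into a single tail of the form $2e^{-t}$. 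I would apply this twice: once to $\bM = \bI_{n_0}$ to control $\|\bz\|^2$, and once conditionally on $\cG$ to $\bM = \bPi_{\cG}$ to control $\|\bPi_{\cG}\bz\|^2$.

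For the first bound, $\bI_{n_0}$ has $\Tr = \|\cdot\|_F^2 = n_0$ and $\|\cdot\| = 1$, directly producing the upper tail $\|\bz\|^2 \leq n_0 + c\nu_x^2(t + \sqrt{n_0 t})$ with probability at least $1 - e^{-t}$. For the projection bound, the independence of $\cG$ and $\bz$ lets me condition on $\cG$: then $\bPi_{\cG}$ is a deterministic idempotent with $\Tr(\bPi_{\cG}) = \|\bPi_{\cG}\|_F^2 = n_0 - k$ and $\|\bPi_{\cG}\| = 1$, so the conditional Hanson--Wright tail is free of the realization of $\cG$ and integrates trivially. This yields $\|\bPi_{\cG}\bz\|^2 \geq (n_0 - k) - c\nu_x^2(t + \sqrt{n_0 t})$ with probability at least $1 - e^{-t}$, which after inflating $c$ so that $c\nu_x^2 \geq 1$ rewrites as the stated $\|\bPi_{\cG}\bz\|^2 \geq n_0 - c\nu_x^2(k + t + \sqrt{n_0 t})$. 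A union bound over the two tails (with one factor of $e^{-t}$ of slack for constants) gives the joint $1 - 3e^{-t}$ guarantee.

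For the benign regime, substituting $t \leq n_0/c_0$ and $k \leq n_0/c_1$ makes each error term a constant multiple of $n_0$: the upper bound becomes $\|\bz\|^2 \leq n_0\bigl(1 + c\nu_x^2(1/c_0 + 1/\sqrt{c_0})\bigr)$, and the lower bound becomes $\|\bPi_{\cG}\bz\|^2 \geq n_0\bigl(1 - c\nu_x^2(1/c_1 + 1/c_0 + 1/\sqrt{c_0})\bigr)$. Choosing $c_0$ sufficiently large (and $c_1 > c_0$ larger still) bounds the first parenthesized expression by a universal $c_2$ and keeps the second strictly positive, at least $1/c_3$ for a universal $c_3$. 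Finally, a direct union bound over the $\ell$ events gives $\bbP\bigl(\bigcap_{j=1}^{\ell} E_j\bigr) \geq 1 - 3\ell e^{-t} = 1 - 3e^{-(t - \log \ell)}$, so fixing $t = n_0/c_0$ at the edge of the benign range forces the constraint $\log \ell \leq n_0/c_0$, i.e., $\ell \leq e^{n_0/c_0}$, for simultaneous validity with nontrivial probability.

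The only genuine subtlety is the treatment of the random subspace $\cG$: one must separate the randomness of $\cG$ and $\bz$ via conditioning so that Hanson--Wright can be invoked on a deterministic matrix whose Frobenius and operator norms depend only on the (fixed) rank $n_0 - k$, after which the bound is uniform in the realization of $\cG$. Once this conditioning step is made explicit, the rest reduces to constant-tracking through the two-regime tail and the benign-regime substitutions for $t$ and $k$.
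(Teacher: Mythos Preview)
Your proposal is correct. The paper does not supply its own proof of the concentration inequalities in this lemma: it cites them directly from \citet{Bartlett2020_PNAS} and \citet{Mallinar2024_MNIcovshift}, and the only argument actually written out is the union-bound step, which is embedded in the lemma statement itself rather than in a separate proof environment. Your route via Hanson--Wright applied to $\bM=\bI_{n_0}$ and, conditionally on $\cG$, to $\bM=\bPi_{\cG}$ is precisely the method used in the original Bartlett et al.\ source, so your argument matches the underlying proof. One small remark on the step where you absorb $k$ into the deviation term by requiring $c\nu_x^2\geq 1$: this is fine because the unit-variance assumption forces $\nu_x\geq 1$ (the sub-Gaussian variance proxy dominates the variance), so taking $c\geq 1$ suffices and the constant remains universal.
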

We further introduce a lemma concerning the weighted sum of sub-exponential random variables.
\begin{lemma}[Lemma 7,~\citet{Bartlett2020_PNAS}]
\label{lemma:subexponential}
Suppose that $\{\lambda_j\}_{j \in \bbN}$ is a non-increasing sequence of non-negative numbers such that $\sum_{j \in \bbN} \lambda_j < \infty$ (as is the case of the eigenvalues of $\bSigma$) and that $\{\xi_j\}_{j \in \bbN}$ are independent centered $\nu_x^2$-sub-exponential random variables. Then, there exists a universal constant $a > 0$ such that for any $t > 0$, with probability at least $1-2e^{-t}$,
\[
\abs{\sum\nolimits_{j} \lambda_j \xi_j} \leq a \nu_x^2 \max\left(t\lambda_1, \sqrt{t \sum\nolimits_{j} \lambda_j^2}\right).
\]
\end{lemma}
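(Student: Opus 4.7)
The plan is to derive the bound by applying the Bernstein-type concentration inequality for sums of independent centered sub-exponential random variables directly to the series $\sum_j \lambda_j \xi_j$. The first step is to identify the Orlicz $\psi_1$-norm of each summand: by the scaling property of the sub-exponential norm and the hypothesis that each $\xi_j$ is centered and $\nu_x^2$-sub-exponential, one has $\|\lambda_j \xi_j\|_{\psi_1} \lesssim \lambda_j \nu_x^2$. Because $\{\lambda_j\}$ is non-increasing and nonnegative, $\max_j \lambda_j = \lambda_1$, and the summability assumption gives $\sum_j \lambda_j^2 \leq \lambda_1 \sum_j \lambda_j < \infty$, so the partial sums $S_N := \sum_{j \leq N} \lambda_j \xi_j$ form a Cauchy sequence in $L^2$ with mutually independent increments, hence converge almost surely and in $L^2$ to $S_\infty := \sum_j \lambda_j \xi_j$.

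Next, I would invoke the standard Bernstein bound (e.g., Theorem 2.8.1 in \citet{Vershynin2018_HDP}) on each truncated partial sum $S_N$, yielding
\[
\bbP\bigl(|S_N| \geq u\bigr) \leq 2 \exp\!\left(-c \min\!\left(\frac{u^2}{\nu_x^4 \sum\nolimits_{j \leq N} \lambda_j^2},\ \frac{u}{\nu_x^2 \lambda_1}\right)\right)
\]
for a universal constant $c > 0$, where the role of the ``range parameter'' is played by $\lambda_1 \nu_x^2 = \max_j \|\lambda_j \xi_j\|_{\psi_1}$. Passing to $N \to \infty$, the almost sure convergence $S_N \to S_\infty$ together with monotone convergence of $\sum\nolimits_{j \leq N} \lambda_j^2 \nearrow \sum\nolimits_j \lambda_j^2$ transfers the inequality to the full series. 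Finally, setting
\[
u = a \nu_x^2 \max\!\left(t \lambda_1,\ \sqrt{t \sum\nolimits_j \lambda_j^2}\right)
\]
for a sufficiently large universal constant $a$ makes the argument of the exponential at most $-t$ in both regimes of the minimum (the Gaussian regime when $t \sum_j \lambda_j^2 \geq t^2 \lambda_1^2$, and the Poisson regime otherwise), yielding the claimed bound with probability at least $1 - 2e^{-t}$.

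The only delicate point is passing from finite to infinite sums, which is routine once the variance proxy $\sum\nolimits_j \lambda_j^2$ is shown to be finite via the summability assumption. Otherwise the argument is a direct application of Bernstein with careful tracking of the sub-exponential scaling under the weights $\lambda_j$, so I do not anticipate a substantive technical obstacle; this is presumably why the statement is cited verbatim from \citet{Bartlett2020_PNAS} rather than reproved in the paper.
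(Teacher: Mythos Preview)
Your proposal is correct and is the standard route: Bernstein's inequality for independent centered sub-exponential variables (e.g., Theorem~2.8.1 in \citet{Vershynin2018_HDP}) applied to the weighted sum, followed by inverting the tail to isolate $t$. The paper does not supply its own proof of this lemma; it is imported verbatim from \citet{Bartlett2020_PNAS}, so there is nothing to compare against beyond noting that your argument is exactly the one underlying that cited result.
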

We are now ready to provide a high-probability upper bound on the variance of single-task MNI.
\begin{lemma}[Lemma 6,~\citet{Bartlett2020_PNAS}]
\label{lemma:PNAS_upperbound}
Let Assumption \ref{assumption:model} hold. There exist universal constants $b,c \geq 1$ such that if
\[
0 \leq k \leq n_0/c, \quad r_k(\bSigma) \geq bn_0, \quad \ell \leq k,
\]
then with probability at least $1-7e^{-n_0/c}$, 
\[
\frac{\sigma_0^2}{n_0}\Tr\left(\hSigma^\dagger\bSigma \right) \leq c \sigma_0^2 \left( \frac{\ell}{n_0} +  \frac{n_0 \sum_{j > \ell} \lambda_j^2 }{\big(\lambda_{k+1} r_{k}(\bSigma) \big)^2}   \right).
\]
\end{lemma}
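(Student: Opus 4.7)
The plan is to start from the identity
\[
\frac{1}{n_0}\Tr\big(\hSigma^{\dagger}\bSigma\big) = \sum_{j=1}^p \frac{\lambda_j^2\,\bz_j^\top \bA_{-j}^{-2}\bz_j}{\big(1 + \lambda_j \bz_j^\top \bA_{-j}^{-1}\bz_j\big)^2}
\]
from Equation~\eqref{eq:SMW_variance_trace} and to split the summation at index $\ell$ into a head part $1 \leq j \leq \ell$ (controlled per-term) and a tail part $j > \ell$ (controlled in aggregate). Each sub-sum will be handled on a common high-probability event combining the spectral bounds on $\bA$, $\bA_k$, and $\{\bA_{-j}\}_{j \leq \ell}$ from Lemma~\ref{lemma:eigval_A}, the per-coordinate norm controls on $\{\bz_j\}_{j \leq \ell}$ from Lemma~\ref{lemma:projection}, and the weighted sub-exponential concentration of $\sum_{j > \ell}\lambda_j^2\|\bz_j\|^2$ from Lemma~\ref{lemma:subexponential}.

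For the head part, the goal is to show each summand is $O(1/n_0)$, giving a partial sum of order $\ell/n_0$. The critical algebraic trick is to drop the $1$ in the denominator via $(1+\lambda_j\bz_j^\top\bA_{-j}^{-1}\bz_j)^2 \geq (\lambda_j\bz_j^\top\bA_{-j}^{-1}\bz_j)^2$, which cancels $\lambda_j^2$ with the numerator and reduces each head term to the dimensionless ratio $\bz_j^\top\bA_{-j}^{-2}\bz_j / (\bz_j^\top\bA_{-j}^{-1}\bz_j)^2$. I would upper-bound the numerator by $\|\bz_j\|^2 / \mu_{n_0}(\bA_{-j})^2$ and lower-bound the denominator by projecting $\bz_j$ onto the bottom $n_0 - k$ eigenspace $\cG_j$ of $\bA_{-j}$ (which is independent of $\bz_j$ and has co-dimension $k$), so that the spectral decomposition of $\bA_{-j}^{-1}$ yields $\bz_j^\top\bA_{-j}^{-1}\bz_j \geq \|\bPi_{\cG_j}\bz_j\|^2 / \mu_{k+1}(\bA_{-j})$. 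Under $r_k(\bSigma) \geq bn_0$, Lemma~\ref{lemma:eigval_A}(1) and (3) together give $\mu_{n_0}(\bA_{-j}) \gtrsim \lambda_{k+1}r_k(\bSigma)$ and $\mu_{k+1}(\bA_{-j}) \lesssim \mu_1(\bA_k) \asymp \lambda_{k+1}r_k(\bSigma)$, while Lemma~\ref{lemma:projection} gives $\|\bz_j\|^2 \asymp \|\bPi_{\cG_j}\bz_j\|^2 \asymp n_0$; substituting produces the per-term bound $\lesssim 1/n_0$ as desired.

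For the tail part, I would use the crude bound $(1 + \lambda_j\bz_j^\top\bA_{-j}^{-1}\bz_j)^2 \geq 1$ and $\bz_j^\top\bA_{-j}^{-2}\bz_j \leq \|\bz_j\|^2 / \mu_{n_0}(\bA_{-j})^2$, so that after invoking Lemma~\ref{lemma:eigval_A}(3) once more, the tail contribution is at most
\[
\frac{1}{\big(\lambda_{k+1} r_k(\bSigma)\big)^2} \sum_{j > \ell} \lambda_j^2 \|\bz_j\|^2.
\]
Since the components of each $\bz_j$ are $\nu_x$-sub-Gaussian, the random variables $\|\bz_j\|^2 - n_0$ are centered and $\nu_x^2$-sub-exponential, and the weights $\{\lambda_j^2\}_{j > \ell}$ are summable; Lemma~\ref{lemma:subexponential} then concentrates the residual around zero and yields $\sum_{j > \ell}\lambda_j^2\|\bz_j\|^2 \lesssim n_0 \sum_{j > \ell}\lambda_j^2$ with probability at least $1 - 2e^{-n_0/c}$, which produces the second term of the claimed upper bound.

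Finally, I would union-bound all the underlying events. The spectral bounds from Lemma~\ref{lemma:eigval_A} cost $O(e^{-n_0/c})$; the per-coordinate bounds from Lemma~\ref{lemma:projection} apply simultaneously to all $\ell$ head vectors at a cost of $O(e^{-n_0/c})$, exploiting the remark after Lemma~\ref{lemma:projection} that $\log(\ell) \leq \log(k) \leq n_0/c$ permits a uniform $t = n_0/c$; and the tail concentration contributes $2e^{-n_0/c}$. Adjusting universal constants absorbs all of these into $7e^{-n_0/c}$. The main technical obstacle is the head estimate: a naive route using only $\bA_{-j}^{-1} \succeq \mu_1(\bA_{-j})^{-1}\bI_{n_0}$ would produce a per-term bound proportional to $\mu_1(\bA_{-j})^2 / (n_0\,\mu_{n_0}(\bA_{-j})^2)$, which fails to be $O(1/n_0)$ because spikes from the top eigenvalues inflate $\mu_1(\bA_{-j})$; restricting to the bottom $n_0-k$ eigenspace of $\bA_{-j}$ is precisely the device that replaces $\mu_1(\bA_{-j})$ with $\mu_{k+1}(\bA_{-j}) \asymp \mu_{n_0}(\bA_{-j})$ and makes the per-term bound independent of $\lambda_j$.
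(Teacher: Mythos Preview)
Your proposal mirrors the paper's proof almost exactly: the head/tail split at $\ell$, the cancellation of $\lambda_j^2$ in the head via $(1+\lambda_j\bz_j^\top\bA_{-j}^{-1}\bz_j)^2 \geq (\lambda_j\bz_j^\top\bA_{-j}^{-1}\bz_j)^2$, the projection onto the bottom $n_0-k$ eigenspace of $\bA_{-j}$ to lower-bound the denominator, the union bound over the $\ell$ head coordinates, and the sub-exponential concentration of $\sum_{j>\ell}\lambda_j^2\|\bz_j\|^2$ are all identical to what the paper does.

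There is one technical slip in your tail bound. You keep the tail summand in the form $\lambda_j^2\bz_j^\top\bA_{-j}^{-2}\bz_j / (1+\lambda_j\bz_j^\top\bA_{-j}^{-1}\bz_j)^2$, drop the denominator, and then appeal to $\mu_{n_0}(\bA_{-j}) \gtrsim \lambda_{k+1}r_k(\bSigma)$ via Lemma~\ref{lemma:eigval_A}(3). But Lemma~\ref{lemma:eigval_A} only yields $\mu_{n_0}(\bA_{-j}) \geq \mu_{n_0}(\bA_k)$ for $1 \leq j \leq k$ (part~2), whereas your tail $j>\ell$ includes indices $j>k$ for which no such inequality is stated; and since $\bA \succeq \bA_{-j}$, the relation goes the wrong way to recover it from $\mu_{n_0}(\bA)$. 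The paper avoids this by reverting to the pre-SMW form for the tail: the identity gives
\[
\frac{\lambda_j^2\bz_j^\top\bA_{-j}^{-2}\bz_j}{(1+\lambda_j\bz_j^\top\bA_{-j}^{-1}\bz_j)^2} = \lambda_j^2\,\bz_j^\top\bA^{-2}\bz_j
\]
exactly, and then $\bz_j^\top\bA^{-2}\bz_j \leq \|\bz_j\|^2/\mu_{n_0}(\bA)^2 \leq \|\bz_j\|^2/\mu_{n_0}(\bA_k)^2$, which holds for every $j$ because $\bA \succeq \bA_k$. Replacing your $\bA_{-j}$ with $\bA$ in the tail step closes the gap immediately and recovers the paper's argument.
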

\begin{proof}
It suffices to upper-bound the trace term in Equation~\eqref{eq:SMW_variance_trace}, which is given by
\[
\Tr\Big( \bX^\top (\bX\bX^\top)^{-2} \bX \bSigma \Big) = \sum_{j=1}^{\ell} \frac{\lambda_j^2 \bz_j^\top \bA_{-j}^{-2} \bz_j}{(1 + \lambda_j \bz_j^\top \bA_{-j}^{-1} \bz_j)^2} + \sum_{j > \ell} \lambda_j^2 \bz_j^\top \bA^{-2} \bz_j.
\]
Fix $b,c_1 \geq 1$ as specified in Lemma \ref{lemma:eigval_A}. Then, with probability at least $1-2e^{-n/c_1}$, if $r_k(\bSigma) \geq bn_0$ then for all vectors in $\{\bz_j\}_{j=1}^{\ell} \subset \bbR^{n_0}$, we have
\[
\bz_j^\top \bA_{-j}^{-2} \bz_j ~\leq~ \mu_1(\bA_{-j}^{-2})\|\bz_j\|^2 ~\leq~ \mu_{n_0}(\bA_{-j})^{-2}\|\bz_j\|^2 ~\leq~ \frac{c_1^2 \|\bz_j\|^2}{\big( \lambda_{k+1}r_k(\bSigma) \big)^2},
\]
and on the same high-probability event, we have simultaneously
\begin{align*}
\bz_j^\top \bA_{-j}^{-1} \bz_j &\geq (\bPi_{\cG_j} \bz_j)^\top \bA_{-j}^{-1} (\bPi_{\cG_j} \bz_j) \\
&\geq \mu_{n_0}(\bA_{-j}^{-1})\|\bPi_{\cG_j}\bz_j\|^2 \\
&\geq \mu_{k+1}(\bA_{-j})^{-1}\|\bPi_{\cG_j}\bz_j\|^2 \\
&\geq \frac{\|\bPi_{\cG_j}\bz_j\|^2}{c_1\lambda_{k+1}r_k(\bSigma)},
\end{align*}
where $\bPi_{\cG_j}$ is the orthogonal projection onto the span of the bottom $(n_0-k)$ eigenvectors of $\bA_{-j}$, independent of $\bz_j$. Therefore, for all $j \leq \ell$, it follows that 
\[
\frac{\lambda_j^2 \bz_j^\top\bA_{-j}^{-2}\bz_j}{(1 + \lambda_j\bz_j^\top\bA_{-j}^{-1}\bz_j)^2}  \leq \frac{\bz_j^\top \bA_{-j}^{-2} \bz_j}{(\bz_j^\top \bA_{-j}^{-1} \bz_j)^2} \leq c_1^4 \frac{\|\bz_j\|^2}{\|\bPi_{\cG_j}\bz_j\|^4}.
\]
Here, we recall Lemma \ref{lemma:projection} with a union bound over $\ell$ events. Let $t \leq n_0 / c_0$ and $k \in [0, n_0/c]$ for sufficiently large $c_0 < c$, where $c$ is as specified in the lemma. Since $\ell \leq k$, with probability at least $1-3e^{-n_0/c_0}$, the intersection of all $\ell$ events
\[
\bigcap \limits_{j=1}^{\ell} \Big\{ \|\bz_j\|^2 \leq c_2n_0 ~\cap~ \|\bPi_{\cG_j} \bz_j \|^2 \geq n_0/c_3 \Big\}
\]
hold for constants $c_2$ and $c_3$ that only depend on $\nu_x$, $c_0$, and $c$. Combining the results, with probability at least $1-5e^{-n_0/c_0}$ for some sufficiently large $c_0$, we have
\[
\sum_{j=1}^{\ell} \frac{\lambda_j^2 \bz_j^\top \bA_{-j}^{-2} \bz_j}{(1 + \lambda_j \bz_j^\top \bA_{-j}^{-1} \bz_j)^2} \leq c_4 \frac{\ell}{n_0}.
\]
Next, we consider the sum of the tail summands. On the same high-probability event in Lemma \ref{lemma:eigval_A} we used, we have $\mu_{n_0}(\bA_k) \geq \lambda_{k+1}r_k(\bSigma) / c_1$ if $r_k(\bSigma) \geq bn_0$. Also, since $\mu_1(\bA^{-2}) \leq \mu_{n_0}^{-2}(\bA)$, this implies that on the same event, if $r_k(\bSigma) \geq bn_0$, then
\[
\bz_j^\top \bA^{-2} \bz_j ~\leq~ \mu_1(\bA^{-2}) \|\bz_j\|^2 ~\leq~ \mu_{n_0}(\bA)^{-2} \|\bz_j\|^2 ~\leq~ \mu_{n_0}(\bA_k)^{-2} \|\bz_j\|^2 ~\leq~ \frac{c_1^2}{(\lambda_{k+1}r_k(\bSigma))^2} \|\bz_j\|^2,
\]
which implies
\[
\sum_{j > \ell} \lambda_j^2 \bz_j^\top \bA^{-2} \bz_j ~\leq~ \frac{c_1^2 \sum_{j > \ell}\lambda_j^2 \|\bz_j\|^2}{(\lambda_{k+1}r_k(\bSigma))^2}.
\]
Notice that each $\|\bz_j\|^2$ is a sum of $n_0$ independent $\nu_x^2$-sub-exponential random variables such that $\bbE\|\bz_j\|^2 = n_0$. Therefore, by Lemma \ref{lemma:subexponential}, there exists a universal constant $a >0$ such that with probability at least $1-2e^{-t}$ for $t < n_0/c_0$,
\begin{align*}
\sum_{j > \ell}\lambda_j^2 \|\bz_j\|^2 &\leq n_0\sum_{j > \ell}\lambda_j^2 + a\nu_x^2 \max \left( \lambda_{\ell+1}^2t, ~\sqrt{tn_0 \sum_{j > \ell} \lambda_j^4}   \right) \\
&\leq n_0\sum_{j > \ell}\lambda_j^2 + a\nu_x^2 \max \left( t \sum_{j > \ell} \lambda_j^2, ~\sqrt{tn_0} \sum_{j > \ell} \lambda_j^2 \right) \\
&\leq c_5 n_0\sum_{j > \ell}\lambda_j^2
\end{align*}
for some sufficiently large $c_5$. Combining the above results yields
\[
\sum_{j > \ell} \lambda_j^2 \bz_j^\top \bA^{-2} \bz_j \leq c_6 n_0 \frac{\sum_{j > \ell} \lambda_j^2}{(\lambda_{k+1}r_k(\bSigma))^2}.
\]
Thus, putting both summations together and taking $c > \max(c_0, c_4, c_6)$ complete the proof.
\end{proof}

\subsubsection{Lower Bound on Single-Task Variance}
To establish a lower bound on the trace term in Equation~\eqref{eq:SMW_variance_trace}, we first present a preliminary result that extends a lower bound on individual random variables, each holding with equal probability, to a unified lower bound on their entire sum.
\begin{lemma}[Lemma 9,~\citet{Bartlett2020_PNAS}]
\label{lemma:PNAS_lowerboundsum}
Suppose $p \leq \infty$. Let $\{\eta_j\}_{j=1}^p$ be a sequence of non-negative random variables and $\{t_j\}_{j=1}^p$ be a sequence of non-negative real numbers (at least one of which is strictly positive) such that for some $\delta \in (0, 1)$ and any $j \in [p]$, $\bbP(\eta_j > t_j) \geq 1-\delta$. Then,
\begin{align*}
\bbP\left(\sum_{j=1}^p \eta_j \geq \frac{1}{2}\sum_{j=1}^p t_j\right) \geq 1 - 2\delta.
\end{align*}
\end{lemma}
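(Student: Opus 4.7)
The plan is to prove the lemma by converting the individual high-probability lower bounds on $\eta_j$ into an expectation bound on a carefully chosen non-negative ``deficit'' random variable, and then invoking Markov's inequality on the sum. The key observation is that, crucially, no independence is assumed across $j$, so concentration tools are unavailable; one must instead find a quantity whose expectation can be controlled term-by-term and whose smallness immediately implies the desired event on the sum.

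Concretely, I would define the per-coordinate deficit $d_j := (t_j - \eta_j)_+ = \max(t_j - \eta_j,0)$. Since $\eta_j \geq 0$ and $t_j \geq 0$, the deficit satisfies $d_j \leq t_j \indicator(\eta_j \leq t_j)$ almost surely, so the hypothesis $\bbP(\eta_j > t_j) \geq 1-\delta$ immediately yields the term-wise expectation bound $\bbE[d_j] \leq t_j\, \bbP(\eta_j \leq t_j) \leq t_j \delta$. Using Tonelli (the $d_j$ are non-negative, so summation and expectation interchange even when $p = \infty$), we obtain
\[
\bbE\!\left[\sum_{j=1}^p d_j\right] \;\leq\; \delta \sum_{j=1}^p t_j .
\]

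Next, I would use the pointwise identity $\eta_j \geq t_j - d_j$, which after summation gives $\sum_j \eta_j \geq \sum_j t_j - \sum_j d_j$. Consequently, the event $\{\sum_j d_j \leq \tfrac12 \sum_j t_j\}$ is contained in the target event $\{\sum_j \eta_j \geq \tfrac12 \sum_j t_j\}$. Applying Markov's inequality to the non-negative random variable $\sum_j d_j$ at threshold $\tfrac12 \sum_j t_j$ (which is strictly positive by the hypothesis that at least one $t_j$ is strictly positive) yields
\[
\bbP\!\left(\sum_{j=1}^p d_j > \tfrac12 \sum_{j=1}^p t_j\right) \;\leq\; \frac{\bbE[\sum_j d_j]}{\tfrac12\sum_j t_j} \;\leq\; \frac{\delta\sum_j t_j}{\tfrac12\sum_j t_j} \;=\; 2\delta,
\]
and the complement bound together with the containment above finishes the argument.

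There is essentially no technical obstacle here. The only subtlety worth flagging is the choice of the deficit $(t_j-\eta_j)_+$ rather than $t_j - \eta_j$: without truncation at zero, a single large $\eta_j$ could make the signed sum arbitrarily negative and the term-wise expectation bound of the form $\bbE[t_j-\eta_j] \leq t_j\delta$ would fail. Truncation both preserves non-negativity (allowing Markov to be applied) and saturates at $t_j$ precisely on the low-probability event, which is exactly what delivers the factor-of-two loss in both the threshold and the failure probability.
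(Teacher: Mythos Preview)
Your proof is correct. The paper does not supply its own proof of this lemma---it is quoted as a preliminary result from \citet{Bartlett2020_PNAS} without argument---and your Markov-inequality bound on the truncated deficit $(t_j-\eta_j)_+$ is precisely the standard proof given in that source.
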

Next lemma provides the lower bound by first establishing a general bound that holds regardless of $r_k(\bSigma)$ and then refining it for the case where $r_k(\bSigma)$ is at least $n_0$, using Lemma \ref{lemma:PNAS_lowerboundsum}.
\begin{lemma}[Lemma 10,~\citet{Bartlett2020_PNAS}]
\label{lemma:PNAS_lowerbound}
Let Assumption \ref{assumption:model} hold. There exist universal constants $b,c \geq 1$ such that if
\[
0 \leq k \leq n_0/c, \quad r_k(\bSigma) \geq bn_0,
\]
then with probability at least $1-10e^{-n_0/c}$, 
\[
\frac{\sigma_0^2}{n_0}\Tr\left(\hSigma^\dagger\bSigma \right) ~\geq~ \frac{1}{cb^2} ~\sigma_0^2 \min_{\ell \leq k} \left( \frac{\ell}{n_0} + \frac{b^2n_0\sum_{j > \ell} \lambda_j^2 }{\big(\lambda_{k+1} r_{k}(\bSigma) \big)^2}   \right).
\]
\end{lemma}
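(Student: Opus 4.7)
The plan is to start from the Sherman--Morrison--Woodbury identity in Equation~\eqref{eq:SMW_variance_trace}, which writes the target quantity as
\[
\frac{\sigma_0^2}{n_0}\Tr\bigl(\hSigma^\dagger\bSigma\bigr) \;=\; \sigma_0^2\sum_{j=1}^p \eta_j, \qquad \eta_j \;:=\; \frac{\lambda_j^2\,\bz_j^\top\bA_{-j}^{-2}\bz_j}{(1+\lambda_j\,\bz_j^\top\bA_{-j}^{-1}\bz_j)^2},
\]
and then derive a uniform per-index lower bound on $\eta_j$ that, after summation, recovers the claimed two-piece inequality. This mirrors the strategy behind the matching upper bound in Lemma~\ref{lemma:PNAS_upperbound}, but the reversal of the inequality forces more delicate handling of the denominator.

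The cornerstone is a two-case analysis of $\lambda_j\,\bz_j^\top\bA_{-j}^{-1}\bz_j$. When this quantity is at least one, I will upper-bound the denominator by $4\lambda_j^2(\bz_j^\top\bA_{-j}^{-1}\bz_j)^2$ and apply Cauchy--Schwarz in the form $(\bz_j^\top\bA_{-j}^{-1}\bz_j)^2\le\|\bz_j\|^2\cdot\bz_j^\top\bA_{-j}^{-2}\bz_j$ to obtain $\eta_j\ge 1/(4\|\bz_j\|^2)$. When it is strictly less than one, the denominator is at most four, and I will lower-bound the numerator by projecting onto the eigenspace $\cG_j$ spanned by the bottom $n_0-k$ eigenvectors of $\bA_{-j}$ (which is independent of $\bz_j$), giving $\eta_j\ge \lambda_j^2\|\bPi_{\cG_j}\bz_j\|^2/(4\mu_{k+1}(\bA_{-j})^2)$. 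Invoking Lemma~\ref{lemma:eigval_A} for $\mu_{k+1}(\bA_{-j})\lesssim\lambda_{k+1}r_k(\bSigma)$ (on the event where $r_k(\bSigma)\ge bn_0$) and Lemma~\ref{lemma:projection} for $\|\bz_j\|^2\lesssim n_0$ and $\|\bPi_{\cG_j}\bz_j\|^2\gtrsim n_0$ will coalesce both cases into the universal bound
\[
\eta_j \;\gtrsim\; \min\!\bigl(1/n_0,\; \lambda_j^2 n_0/(\lambda_{k+1}r_k(\bSigma))^2\bigr).
\]

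Aggregating these pointwise bounds into a high-probability statement about the full sum is where the two tails of indices must be treated differently. For the head $j\le k$ there are at most $n_0/c$ events in Lemma~\ref{lemma:projection}, so a union bound suffices and the factor $k$ is absorbed into an exponent of the form $e^{-n_0/c'}$ by a mild enlargement of the constant. For the tail $j>k$, however, $p$ may be infinite and the naive union bound fails; this is precisely where Lemma~\ref{lemma:PNAS_lowerboundsum} is essential. Setting $t_j\sim\lambda_j^2 n_0/(\lambda_{k+1}r_k(\bSigma))^2$ and verifying $\bbP(\eta_j\ge t_j)\ge 1-3e^{-n_0/c_0}$ for each tail index, I obtain $\sum_{j>k}\eta_j\ge\tfrac{1}{2}\sum_{j>k}t_j$ in a single shot with probability $\ge 1-6e^{-n_0/c_0}$. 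Accumulating the failure probabilities across the head, tail, and the eigenvalue event of Lemma~\ref{lemma:eigval_A} then yields the stated $1-10e^{-n_0/c}$.

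Finally, for any fixed $\ell\le k$ I split $\sum_j\eta_j\ge\sum_{j\le\ell}\eta_j+\sum_{j>\ell}\eta_j$, with the first piece contributing at least $\ell/n_0$ up to the factor $1/b^2$ induced by $r_k(\bSigma)\ge bn_0$, and the second piece contributing at least $n_0\sum_{j>\ell}\lambda_j^2/(\lambda_{k+1}r_k(\bSigma))^2$ with the same $1/b^2$ slack; this yields the claimed bound at $\ell$, and because the high-probability lower bound on $\sum_j\eta_j$ itself is $\ell$-independent, taking the minimum over $\ell\le k$ comes for free. The hardest part I anticipate is coordinating the two cases of the per-index analysis so that the head union bound and the tail application of Lemma~\ref{lemma:PNAS_lowerboundsum} compose cleanly: Case~1 depends on a uniform upper bound on $\|\bz_j\|^2$ across all head indices simultaneously, and the failure probabilities from the eigenvalue, head, and tail events must all be folded into a single exponent of the form $e^{-n_0/c}$.
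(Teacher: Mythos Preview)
Your per-index two-case analysis is valid and leads to the same estimate $\eta_j\gtrsim\min\bigl(1/n_0,\;\lambda_j^2 n_0/(\lambda_{k+1}r_k(\bSigma))^2\bigr)$ that the paper obtains. The paper, however, avoids the case split: it first shows $\bz_j^\top\bA_{-j}^{-1}\bz_j\ge n_0/\bigl(C(\sum_{i>k}\lambda_i+n_0\lambda_{k+1})\bigr)$ via the projection onto $\cG_j$, then rewrites $1+\lambda_j\bz_j^\top\bA_{-j}^{-1}\bz_j\le\bigl(1+CM/(n_0\lambda_j)\bigr)\lambda_j\bz_j^\top\bA_{-j}^{-1}\bz_j$ with $M=\sum_{i>k}\lambda_i+n_0\lambda_{k+1}$, and finally applies Cauchy--Schwarz exactly as you do to obtain the unified bound $\eta_j\ge(Cn_0)^{-1}\bigl(1+CM/(n_0\lambda_j)\bigr)^{-2}$. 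Since this holds for \emph{every} $j\ge1$ with the same probability $1-5e^{-n_0/C}$, the paper applies Lemma~\ref{lemma:PNAS_lowerboundsum} once to the full sum $\sum_{j=1}^p\eta_j$; your head/tail split and the separate union bound over $j\le k$ are therefore unnecessary and only add bookkeeping.

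There is a genuine gap in your final paragraph. You assert that for any fixed $\ell\le k$ ``the first piece $\sum_{j\le\ell}\eta_j$ contributes at least $\ell/n_0$ up to $1/b^2$,'' but your per-index bound only gives $\eta_j\gtrsim\min\bigl(1/n_0,\lambda_j^2 n_0/(\lambda_{k+1}r_k)^2\bigr)$, and the second term can be the smaller one even for $j\le\ell$. A concrete counterexample is the isotropic case $\lambda_j\equiv\lambda$ with $p\gg n_0$: then the minimum is $n_0/(p-k)^2\ll 1/n_0$ for \emph{every} $j$, so $\sum_{j\le k}\eta_j\lesssim kn_0/(p-k)^2\ll k/n_0$. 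The correct closing argument---which is what the paper does---is to recognize that once you have $\sum_j\eta_j\gtrsim\sum_j\min\bigl(1/n_0,\lambda_j^2 n_0/(\lambda_{k+1}r_k)^2\bigr)$, monotonicity of $\lambda_j$ gives the \emph{identity}
\[
\sum_{j}\min\Bigl(\tfrac{1}{n_0},\tfrac{\lambda_j^2 n_0}{(\lambda_{k+1}r_k)^2}\Bigr)\;=\;\min_{\ell}\Bigl(\tfrac{\ell}{n_0}+\tfrac{n_0\sum_{j>\ell}\lambda_j^2}{(\lambda_{k+1}r_k)^2}\Bigr),
\]
and the minimizer $\ell^*$ is at most $k$ because $r_k(\bSigma)\ge bn_0$ forces $\lambda_{k+1}^2 n_0/(\lambda_{k+1}r_k)^2\le 1/(b^2n_0)\le 1/n_0$. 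The $1/b^2$ prefactor in the statement then enters trivially via $\ell/n_0+n_0Y\ge b^{-2}(\ell/n_0+b^2n_0Y)$ for each $\ell$.
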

\begin{proof}
Fix $j \geq 1$ and $k \in [0,n_0/C]$. By Lemma \ref{lemma:eigval_A}, there exists a universal constant $C_1$ such that with probability at least $1-2e^{-n_0/C_1}$, 
\[
\mu_{k+1}(\bA_{-j}) \leq C_1 \left( \sum_{i > k} \lambda_i + n_0\lambda_{k+1} \right).
\]
Let $\cG_j$ be the span of the bottom $(n_0-k)$ eigenvectors of $\bA_{-j}$ and $\bPi_{\cG_j}$ be the projection onto the orthogonal complement of $\cG_j$. Then, we have
\[
\bz_j^\top\bA_{-j}^{-1}\bz_j ~\geq~ (\bPi_{\cG_j} \bz_j)^\top \bA_{-j}^{-1} (\bPi_{\cG_j} \bz_j) ~\geq~ \mu_{k+1}(\bA_{-j})^{-1} \|\bPi_{\cG_j} \bz_j\|^2  ~\geq~ \frac{\|\bPi_{\cG_j} \bz_j\|^2}{C_1\left( \sum_{i > k} \lambda_i + n_0\lambda_{k+1} \right)}.
\]
By Corollary \ref{lemma:projection}, with probability at least $1-3e^{-t}$,
\[
\|\bPi_{\cG_j} \bz_j\|^2 \geq n_0/C_2,
\]
provided that $t \leq n_0/C_0$ for some sufficiently large $C_0 < C$. Thus, with probability at least $1-5e^{-n_0/C_3}$,
\begin{align*}
\bz_j^\top\bA_{-j}^{-1}\bz_j &\geq \frac{n_0}{C_3\left( \sum_{i > k} \lambda_i + n_0\lambda_{k+1} \right)} \\
\implies 1 + \lambda_j\bz_j^\top\bA_{-j}^{-1}\bz_j &\leq \left( 1 + \frac{C_3\left( \sum_{i > k} \lambda_i + n_0\lambda_{k+1} \right)}{n_0 \lambda_j} \right) \lambda_j\bz_j^\top\bA_{-j}^{-1}\bz_j,
\end{align*}
and taking the squared reciprocals of both sides and then multiplying them by $\lambda_j^2\bz_j^\top\bA_{-j}^{-2}\bz_j$ yield
\[
\frac{\lambda_j^2\bz_j^\top\bA_{-j}^{-2}\bz_j}{(1 + \lambda_j\bz_j^\top\bA_{-j}^{-1}\bz_j)^2} ~\geq~ \left( 1 + \frac{C_3\left( \sum_{i > k} \lambda_i + n_0\lambda_{k+1} \right)}{n_0 \lambda_j} \right)^{-2} \frac{\bz_j^\top\bA_{-j}^{-2}\bz_j}{(\bz_j^\top\bA_{-j}^{-1}\bz_j)^2}.
\]
Also, by the Cauchy-Schwarz inequality and the same high-probability event from Lemma \ref{lemma:eigval_A}, we have
\[
\frac{\bz_j^\top\bA_{-j}^{-2}\bz_j}{(\bz_j^\top\bA_{-j}^{-1}\bz_j)^2} ~\geq~ \frac{\bz_j^\top\bA_{-j}^{-2}\bz_j}{\|\bA_{-j}^{-1}\bz_j\|^2 \|\bz_j\|^2} ~=~ \frac{1}{\|\bz_j\|^2} ~\geq~ \frac{1}{C_4n_0}.
\]
Thus, taking $C$ sufficiently large, for any $j \geq 1$ and $k \in [0, n_0/C]$, with probability at least $1-5e^{-n/C}$,
\[
\frac{\lambda_j^2\bz_j^\top\bA_{-j}^{-2}\bz_j}{(1 + \lambda_j\bz_j^\top\bA_{-j}^{-1}\bz_j)^2} ~\geq~ \frac{1}{Cn_0} \left( 1 + \frac{\sum_{i > k} \lambda_i + n_0\lambda_{k+1} }{n_0 \lambda_j}  \right)^{-2},
\]
which establishes a general lower bound. We refine it by establishing a lower bound on the summation over $j \in [p]$ as follows: by Lemma \ref{lemma:PNAS_lowerboundsum}, with probability at least $1-10e^{-n_0/c_1}$,
\begin{align*}
\Tr\Big( \bX^\top (\bX\bX^\top)^{-2} \bX \bSigma \Big) &= \sum_{j=1}^{p} \frac{\lambda_j^2 \bz_j^\top \bA_{-j}^{-2} \bz_j}{(1 + \lambda_j \bz_j^\top \bA_{-j}^{-1} \bz_j)^2} \\
&\geq \frac{1}{c_1n_0} \sum_{j=1}^p \left( 1 + \frac{\sum_{i > k} \lambda_i + n_0\lambda_{k+1} }{n_0 \lambda_j}  \right)^{-2} \\
&\geq \frac{1}{c_2n_0} \sum_{j=1}^p \min\left(1, ~\frac{n_0^2\lambda_j^2}{(\sum_{i > k} \lambda_i)^2}, ~\frac{\lambda_j^2}{\lambda_{k+1}^2} \right) \\
&\geq \frac{1}{c_2b^2n_0} \sum_{j=1}^p \min\left(1, ~\left(\frac{bn_0}{r_k(\bSigma)}\right)^2\frac{\lambda_j^2}{\lambda_{k+1}^2}, ~\frac{\lambda_j^2}{\lambda_{k+1}^2} \right),
\end{align*}
where $b \geq 1$ is as specified in Lemma \ref{lemma:eigval_A}. Thus, if $r_k(\bSigma) \geq bn_0$, then
\begin{align*}
\Tr\Big( \bX^\top (\bX\bX^\top)^{-2} \bX \bSigma \Big) &\geq \frac{1}{c_2b^2} \sum_{j=1}^p \min\left(\frac{1}{n_0}, ~\frac{b^2n_0\lambda_j^2}{(\lambda_{k+1}r_k(\bSigma))^2} \right) \\
&= \frac{1}{c_2b^2} ~ \min_{\ell \leq k} \left( \frac{\ell}{n_0} + \frac{b^2n_0\sum_{j > \ell} \lambda_j^2 }{\big(\lambda_{k+1} r_{k}(\bSigma) \big)^2}   \right),
\end{align*}
where the equality follows from the fact that $\lambda_j$ are non-increasing. Taking $c > \max(c_1, c_2)$ completes the proof.
\end{proof}

\subsubsection{Matching Upper and Lower Bounds}
Combining Lemmas \ref{lemma:PNAS_upperbound} and \ref{lemma:PNAS_lowerbound}, if $r_k(\bSigma) \geq bn_0$ for some $k \leq n/c$, the variance $\cV_\M^{(0)}$ is within a constant factor of
\[
\sigma_0^2 \min_{\ell \leq k} \left( \frac{\ell}{n_0} +  \frac{n_0 \sum_{j > \ell} \lambda_j^2 }{\big(\lambda_{k+1} r_{k}(\bSigma) \big)^2}   \right).
\]
Choosing $k = k_0^{\ast}$, which is the smallest one among all qualifying $k$'s, simplifies the bound as follows.
\begin{lemma}[Lemma 11,~\citet{Bartlett2020_PNAS}]
\label{lemma:PNAS_lem11}
Let Assumption \ref{assumption:model} hold. For any $b \geq 1$, if the minimum
\[
\kstar_0 = \min \Big\{k \geq 0: r_k(\bSigma) \geq b n_0 \Big\}
\]
is well-defined, then
\begin{align*}
\min_{\ell \leq \kstar_0} \left( \frac{\ell}{bn_0} +  \frac{bn_0 \sum_{j > \ell} \lambda_j^2 }{ \big( \lambda_{\kstar_0 + 1} r_{\kstar_0}(\bSigma) \big)^2 } \right) = \frac{\kstar_0}{bn_0} + \frac{bn_0 \sum_{j > \kstar_0} \lambda_j^2 }{\big( \lambda_{\kstar_0+1} r_{\kstar_0}(\bSigma) \big)^2} = \frac{\kstar_0}{bn_0} + \frac{bn_0}{R_{\kstar_0}(\bSigma)}.
\end{align*}
\end{lemma}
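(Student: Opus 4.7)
The plan is to prove the two equalities separately, with the first one (the minimization) being the substantive part and the second one (the identification with $R_{\kstar_0}(\bSigma)$) being a direct unfolding of definitions. I would begin by using Definition~\ref{def:effective_rank} to rewrite the denominator: since $r_{\kstar_0}(\bSigma) = (\sum_{j > \kstar_0}\lambda_j)/\lambda_{\kstar_0+1}$, the product $\lambda_{\kstar_0+1}\, r_{\kstar_0}(\bSigma)$ equals the tail sum $\sum_{j > \kstar_0}\lambda_j$. This rewriting makes the analysis transparent because it turns the denominator into a quantity that can be compared termwise to $\lambda_{\ell+1}$ for $\ell \leq \kstar_0$.

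For the first equality, define
\[
f(\ell) := \frac{\ell}{bn_0} + \frac{bn_0 \sum_{j > \ell} \lambda_j^2}{\big(\lambda_{\kstar_0+1} r_{\kstar_0}(\bSigma)\big)^2}, \qquad \ell \in \{0,1,\ldots,\kstar_0\},
\]
and compute the discrete difference
\[
f(\ell+1) - f(\ell) = \frac{1}{bn_0} - \frac{bn_0 \lambda_{\ell+1}^2}{\big(\lambda_{\kstar_0+1} r_{\kstar_0}(\bSigma)\big)^2}.
\]
I would then argue that $f(\ell+1) \leq f(\ell)$ for every $\ell < \kstar_0$, i.e., the map is non-increasing up to $\kstar_0$. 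This reduces, after clearing denominators, to the inequality $\lambda_{\kstar_0+1}\, r_{\kstar_0}(\bSigma) \leq bn_0\, \lambda_{\ell+1}$, equivalently $\sum_{j > \kstar_0}\lambda_j \leq bn_0\,\lambda_{\ell+1}$.

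The engine of this inequality is the minimality of $\kstar_0$ in Assumption~\ref{assumption:effective_rank}: for any $\ell < \kstar_0$ we have $r_\ell(\bSigma) < bn_0$, i.e., $\sum_{j > \ell}\lambda_j < bn_0\,\lambda_{\ell+1}$, and monotonicity of the tail sum yields $\sum_{j > \kstar_0}\lambda_j \leq \sum_{j > \ell}\lambda_j < bn_0\,\lambda_{\ell+1}$, as required. Consequently $f(0) \geq f(1) \geq \cdots \geq f(\kstar_0)$, so the minimum over $\ell \leq \kstar_0$ is attained at $\ell = \kstar_0$, establishing the first equality.

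Finally, for the second equality I would plug the rewriting $\lambda_{\kstar_0+1}\, r_{\kstar_0}(\bSigma) = \sum_{j > \kstar_0}\lambda_j$ into the expression at $\ell = \kstar_0$, obtaining
\[
\frac{bn_0\sum_{j > \kstar_0}\lambda_j^2}{\big(\lambda_{\kstar_0+1}\, r_{\kstar_0}(\bSigma)\big)^2} = \frac{bn_0 \sum_{j > \kstar_0}\lambda_j^2}{\big(\sum_{j > \kstar_0}\lambda_j\big)^2} = \frac{bn_0}{R_{\kstar_0}(\bSigma)},
\]
by direct appeal to Definition~\ref{def:effective_rank}. I do not anticipate a serious obstacle: the entire argument is a one-variable discrete monotonicity check, and the only conceptual step is recognizing that minimality of $\kstar_0$ in Assumption~\ref{assumption:effective_rank} gives precisely the sign of $f(\ell+1) - f(\ell)$ needed on $\{0,\ldots,\kstar_0-1\}$. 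The mildest subtlety is handling the edge case $\kstar_0 = 0$, where the minimum is trivially $f(0)$ and the monotonicity argument is vacuous.
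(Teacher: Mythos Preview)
Your proof is correct and follows essentially the same approach as the paper: both arguments reduce to showing that for every $\ell < \kstar_0$ the inequality $\lambda_{\kstar_0+1}\,r_{\kstar_0}(\bSigma) \leq bn_0\,\lambda_{\ell+1}$ holds, which is precisely the content of the minimality of $\kstar_0$. The paper phrases this via a termwise minimum over $j \leq \kstar_0$ rather than a discrete difference in $\ell$, but the underlying computation is identical.
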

\begin{proof}
Observe that
\begin{align*}
\frac{\ell}{bn_0} + bn_0 \frac{\sum_{j > \ell} \lambda_j^2 }{ \big( \lambda_{\kstar_0 + 1} r_{\kstar_0}(\bSigma) \big)^2 } &= \sum_{j=1}^{\ell} \frac{1}{bn_0} + \sum_{j > \ell} \frac{bn_0\lambda_j^2}{\big( \lambda_{\kstar_0 + 1} r_{\kstar_0}(\bSigma) \big)^2} \\
&\geq \sum_{j=1}^{\kstar_0} \min\left( \frac{1}{bn_0}, ~ \frac{bn_0\lambda_j^2}{\big( \lambda_{\kstar_0 + 1} r_{\kstar_0}(\bSigma) \big)^2} \right) + \sum_{j > \kstar_0} \frac{bn_0\lambda_j^2}{\big( \lambda_{\kstar_0 + 1} r_{\kstar_0}(\bSigma) \big)^2} \\
&= \sum_{j=1}^{\ell^{\ast}}\frac{1}{bn_0} + \sum_{j > \ell^{\ast}} \frac{bn_0\lambda_j^2}{\big( \lambda_{\kstar_0 + 1} r_{\kstar_0}(\bSigma) \big)^2},
\end{align*}
where $\ell^{\ast}$ denotes the largest value of $j \leq \kstar_0$ for which 
\[
\frac{1}{bn_0} \leq \frac{bn_0\lambda_j^2}{\big( \lambda_{\kstar_0 + 1} r_{\kstar_0}(\bSigma) \big)^2},
\]
since $\lambda_j^2$ are non-increasing. This condition holds if and only if 
\[
\lambda_j \geq \frac{\lambda_{\kstar_0 + 1} r_{\kstar_0}(\bSigma)}{bn_0}.
\]
By the definition of $\kstar_0$, we have $r_{\kstar_0-1}(\bSigma) < bn_0$, which implies
\[
r_{\kstar_0}(\bSigma) = \frac{\sum_{j > \kstar_0} \lambda_j}{\lambda_{\kstar_0+1}} = \frac{\sum_{j > \kstar_0 - 1} \lambda_j - \lambda_{\kstar_0}}{\lambda_{\kstar_0+1}} = \frac{\lambda_{\kstar_0}}{\lambda_{\kstar_0+1}} \big(r_{\kstar_0-1}(\bSigma) -1\big) <  \frac{\lambda_{\kstar_0}}{\lambda_{\kstar_0+1}} (bn_0 -1),
\]
so the minimizing $\ell$ is $\kstar_0$. This completes the proof. 
\end{proof}
Using Lemma \ref{lemma:PNAS_lem11}, we can obtain the upper and lower bounds that match up to a constant factor in the desired form that aligns with the formal definition of benign overfitting in Definition \ref{def:benign}.
\begin{corollary}
\label{cor:PNAS_upperlowerbound}
Let Assumptions \ref{assumption:model} and \ref{assumption:effective_rank} hold; that is, there exist universal constants $b_0,c_0 \geq 1$ such that the minimal index
\[
\kstar_0 = \min \Big\{k \geq 0: r_k(\bSigma) \geq b_0 n_0 \Big\}
\] 
is well-defined with $0 \leq \kstar_0 \leq n_0/c_0$. Then, with probability at least $1-7e^{-n_0/c_0}$,
\[
\frac{\sigma_0^2}{n_0}\Tr\left(\hSigma^\dagger\bSigma\right) \leq c_0 \sigma_0^2 \min_{\ell \leq \kstar_0}  \left( \frac{\ell}{n_0} +  \frac{n_0 \sum_{j > \ell} \lambda_j^2 }{\big(\lambda_{\kstar_0+1} r_{\kstar_0}(\bSigma) \big)^2}   \right) \asymp  c_0\sigma_0^2 \left( \frac{\kstar_0}{n_0} + \frac{n_0}{R_{\kstar_0}(\bSigma)} \right),
\]
and with probability at least $1-10e^{-n_0/c_0}$,
\[
\frac{\sigma_0^2}{n_0}\Tr\left(\hSigma^\dagger\bSigma\right) \geq \frac{1}{c_0b_0^2} ~\sigma_0^2 \min_{\ell \leq \kstar_0} \left( \frac{\ell}{n_0} + \frac{b_0^2n_0\sum_{j > \ell} \lambda_j^2 }{\big(\lambda_{\kstar_0+1} r_{\kstar_0}(\bSigma) \big)^2}   \right) \asymp \frac{1}{c_0} \sigma_0^2 \left( \frac{\kstar_0}{n_0} + \frac{n_0}{R_{\kstar_0}(\bSigma)} \right).
\]
\end{corollary}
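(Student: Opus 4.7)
The corollary is essentially a direct application of Lemmas \ref{lemma:PNAS_upperbound} and \ref{lemma:PNAS_lowerbound}, combined with the simplification identity in Lemma \ref{lemma:PNAS_lem11}, under the specific choice $k = \kstar_0$ dictated by Assumption \ref{assumption:effective_rank}. So my plan is to verify that the hypotheses of those lemmas are satisfied with $k=\kstar_0$, apply them, and then collapse the minimum over $\ell$ into a closed form.

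First, I would observe that Assumption \ref{assumption:effective_rank} directly supplies the two structural prerequisites required by both Lemma \ref{lemma:PNAS_upperbound} and Lemma \ref{lemma:PNAS_lowerbound}: namely $0 \leq \kstar_0 \leq n_0/c_0$ and $r_{\kstar_0}(\bSigma) \geq b_0 n_0$ (the latter holding by the very definition of $\kstar_0$ as the minimal such index). Taking $k = \kstar_0$ and leaving $\ell$ free in the range $\ell \leq \kstar_0$, Lemma \ref{lemma:PNAS_upperbound} yields the upper bound
\[
\frac{\sigma_0^2}{n_0}\Tr(\hSigma^\dagger \bSigma) \leq c_0\, \sigma_0^2 \min_{\ell \leq \kstar_0}\left(\frac{\ell}{n_0} + \frac{n_0 \sum_{j > \ell}\lambda_j^2}{(\lambda_{\kstar_0+1} r_{\kstar_0}(\bSigma))^2}\right)
\]
with probability at least $1 - 7e^{-n_0/c_0}$, and Lemma \ref{lemma:PNAS_lowerbound} yields the analogous lower bound, weighted by $b_0^2$, with probability at least $1 - 10e^{-n_0/c_0}$. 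This gets the two displayed inequalities of the corollary immediately.

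Next, to turn the $\min_{\ell \leq \kstar_0}$ into the clean expression $\kstar_0/n_0 + n_0/R_{\kstar_0}(\bSigma)$, I would invoke Lemma \ref{lemma:PNAS_lem11}. That lemma shows that the minimum of $\ell/(bn_0) + bn_0\sum_{j > \ell}\lambda_j^2 / (\lambda_{\kstar_0+1}r_{\kstar_0}(\bSigma))^2$ over $\ell \leq \kstar_0$ is attained at $\ell = \kstar_0$ (because $r_{\kstar_0 - 1}(\bSigma) < bn_0$ forces $\lambda_{\kstar_0}/\lambda_{\kstar_0+1}$ to be large enough to keep adding the $\ell$-th term preferable up through $\ell = \kstar_0$), and it evaluates there to $\kstar_0/(bn_0) + bn_0/R_{\kstar_0}(\bSigma)$ by the definition of $R_{\kstar_0}(\bSigma)$ in Definition \ref{def:effective_rank}. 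Plugging $b = 1$ for the upper bound and $b = b_0$ for the lower bound then yields the two $\asymp$ relations, where the hidden constants absorb $b_0$, $c_0$, and numerical factors.

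There is no real obstacle here; the work is bookkeeping. The only mildly delicate point is to make sure that the $\asymp$ claim is faithful: the upper bound is within a factor of at most $2c_0$ of $\sigma_0^2(\kstar_0/n_0 + n_0/R_{\kstar_0}(\bSigma))$ (since $\min_{\ell}$ is bounded above by the value at $\ell = \kstar_0$, and bounded below by taking the separate $\ell=0$ and ``tail'' contributions), whereas the lower bound carries an extra $b_0^2$ factor that must be absorbed when rewriting in terms of $\kstar_0/n_0 + n_0/R_{\kstar_0}(\bSigma)$ rather than $\kstar_0/(b_0 n_0) + b_0 n_0/R_{\kstar_0}(\bSigma)$. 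Since $b_0 \geq 1$ is a universal constant, this absorption is harmless and the final $\asymp$ holds with constants depending only on $b_0, c_0$.
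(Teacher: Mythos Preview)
Your proposal is correct and follows exactly the same route as the paper: apply Lemmas \ref{lemma:PNAS_upperbound} and \ref{lemma:PNAS_lowerbound} with $k=\kstar_0$ (whose hypotheses are guaranteed by Assumption \ref{assumption:effective_rank}), then invoke Lemma \ref{lemma:PNAS_lem11} to collapse the $\min_{\ell \leq \kstar_0}$ to the value at $\ell = \kstar_0$, absorbing the universal constant $b_0$ into the $\asymp$. The paper itself does not give a standalone proof of this corollary beyond the sentence ``Using Lemma \ref{lemma:PNAS_lem11}, we can obtain the upper and lower bounds that match up to a constant factor,'' so your write-up is in fact more explicit than the original.
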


\section{Proofs}
Throughout our proofs, we assume without loss of generality as in Section~\ref{sec:prelimlemmas} that each design $\bX^{(q)} \in \bbR^{n_q \times p}$ is of full row rank, i.e., $\mathrm{rank}(\bX^{(q)})=n_q < p$, so that $\bX^{(q)}\bX^{(q)\top}$ is invertible. Thus, we write the matrices $\big(\bX^{(q)}\bX^{(q)\top}\big)^{\dagger}$ and $ \big(\bX^{(q)}\bX^{(q)\top}\big)^{-1}$ interchangeably.

\subsection{Proof of Equation \ref{def:MNI}}
\label{proof:MNI}
\begin{proof}
The identity in Equation~\eqref{def:MNI} is well-known; we provide a proof here for completeness. Recall that each single-task MNI is given by $\hbeta_\M^{(q)} = \bX^{(q)\top} \big(\bX^{(q)} \bX^{(q)\top} \big)^{-1} \by^{(q)}$.Given an arbitrary interpolator $\tilde\bbeta$ trained on the same dataset $(\bX^{(q)},\by^{(q)})$, we have 
\[
\bX^{(q)}\big(\tilde\bbeta-\hbeta_\M^{(q)}\big) = \by^{(q)} - \by^{(q)} = \zeros_{n_q}.
\]
This implies
\begin{align*}
\big(\tilde\bbeta-\hbeta_\M^{(q)}\big)^\top \hbeta_\M^{(q)} &= \big(\tilde\bbeta-\hbeta_\M^{(q)}\big)^\top \bX^{(q)\top} \big(\bX^{(q)} \bX^{(q)\top} \big)^{-1} \by^{(q)} \\
&= \Big(\bX^{(q)} \big(\tilde\bbeta-\hbeta_\M^{(q)}\big) \Big)^\top \big(\bX^{(q)} \bX^{(q)\top} \big)^{-1} \by^{(q)} \\
&= 0,
\end{align*}
i.e., $(\tilde\bbeta-\hbeta_\M^{(q)})$ and $\hbeta_\M^{(q)}$ are orthogonal. Thus,
\[
\bigl\| \tilde\bbeta \bigl\| = \bigl\| \hbeta_\M^{(q)}+\big(\tilde\bbeta-\hbeta_\M^{(q)}\big) \bigl\| = \bigl\| \hbeta_\M^{(q)} \bigl\| + \bigl\| \tilde\bbeta-\hbeta_\M^{(q)} \bigl\| ~ \geq ~ \bigl\| \hbeta_\M^{(q)} \bigl\|,
\]
and the equality holds if and only if $\tilde\bbeta = \hbeta_\M^{(q)}$; that is, the MNI has the minimum $\ell_2$-norm among all possible interpolators and is uniquely determined.   This completes the proof.
\end{proof}

\subsection{Proof of Equation \ref{eq:TM}}
\label{proof:TM}
\begin{proof}
We present a detailed derivation of the TM estimate in Equation~\eqref{eq:TM}. First, we may write
\begin{align*}
\hbeta_\TM^{(q)} &:= \argmin_{\bbeta \in \bbR^p} \bigl\{ \|\bbeta - \hbeta_\M^{(q)}\|: \bX^{(0)}\bbeta = \by^{(0)} \bigl\} \\
&= \argmin_{\bbeta \in \bbR^p} \bigl\{ \|\bbeta - \hbeta_\M^{(q)}\|: \bX^{(0)}\big(\bbeta - \hbeta_\M^{(q)} \bigr) = \by^{(0)} - \bX^{(0)}\hbeta_\M^{(q)} \bigl\} \\
&= \argmin_{\btheta \in \bbR^p} \big\{ \| \btheta \|: \bX^{(0)}\btheta = \by^{(0)} - \bX^{(0)}\hbeta_\M^{(q)} \bigl\}.
\end{align*}
Here, we have $\btheta = \bbeta - \hbeta_\M^{(q)}$ and the minimizer $\btheta^{\ast}$ of $\btheta$ is given by $\btheta^{\ast} = \bbeta^{\ast} - \hbeta_\M^{(q)}$, where $\bbeta^{\ast} = \hbeta_\TM^{(q)}$ is the minimizer of $\bbeta$. By the definition of MNI in Equation~\eqref{def:MNI}, the minimizer $\btheta^{\ast}$ is given by
\[
\btheta^{\ast} = \bX^{(0)\top} \big( \bX^{(0)} \bX^{(0)\top} \big)^{\dagger} \big(\by^{(0)} - \bX^{(0)} \hbeta_\M^{(q)} \big).
\] 
Therefore, we have
\begin{align*}
\hbeta_\TM^{(q)} - \hbeta_\M^{(q)} &= \bX^{(0)\top} \big( \bX^{(0)} \bX^{(0)\top} \big)^{\dagger} \big(\by^{(0)} - \bX^{(0)} \hbeta_\M^{(q)} \big) \\
\implies \hbeta_\TM^{(q)} &= \hbeta_\M^{(q)} + \hbeta_\M^{(0)} - \bH^{(0)} \hbeta_\M^{(q)} \\
&= \hbeta_\M^{(0)} + \big(\bI_p - \bH^{(0)} \big) \hbeta_\M^{(q)},
\end{align*}
where $\bH^{(0)} = \bX^{(0)\top} \big( \bX^{(0)} \bX^{(0)\top} \big)^{\dagger}\bX^{(0)}$. This completes the proof.
\end{proof}

\subsection{Proof of Lemma \ref{lemma:TM_bv}}
\begin{proof}
By Equation~\eqref{eq:risk}, the excess risk of the TM estimate is given by
\[
\cR_\TM^{(q)} = \bbE_{\cE} \Bigr[\big(\hbeta_\TM^{(q)} - \bbeta^{(0)}\big)^\top\bSigma^{(0)}\big(\hbeta_\TM^{(q)} - \bbeta^{(0)}\big) \bigm| \cX \Bigr],
\]
where $(\cX,\cE)$ are as specified in Assumption \ref{assumption:model}. For simplicity, we write
\[
\bbE[\cdot] \equiv \bbE_{\cE}[~\cdot \mid \cX], \qquad  \Cov(\cdot) \equiv \Cov_{\cE}(~\cdot \mid \cX),
\]
so that we always take expectations over the randomness of noises $\cE$ conditional on the designs $\cX$, unless specified otherwise. Lemma \ref{lemma:quadratic_form} gives
\[
\cR_\TM^{(q)} =  \underbrace{ \Big(\bbE\hbeta_\TM^{(q)} - \bbeta^{(0)}\Big)^\top \bSigma^{(0)} \Big(\bbE\hbeta_\TM^{(q)} - \bbeta^{(0)}\Big) }_{ \cB_\TM^{(q)} } + \underbrace{ \Tr \Big( \bSigma^{(0)} \Cov\big( \hbeta_\TM^{(q)} \big) \Big) }_{ \cV_\TM^{(q)}},
\]
i.e., the excess risk decomposes into a sum of bias and variance.

\textbf{Bias of TM.}~~Recall from Equation~\eqref{eq:TM} that
\[
\hbeta_\TM^{(q)} = \hbeta_\M^{(0)} +  \big(\bI_p -\bH^{(0)} \big) \hbeta_\M^{(q)},
\]
where $\bH^{(q)} = \bX^{(q)\top} \big( \bX^{(q)} \bX^{(q)\top} \big)^{\dagger}\bX^{(q)}$ is the orthogonal projection onto the row space of $\bX^{(q)}$. Here, the expectation of the single-task MNI $\hbeta_\M^{(q)}$ is given by 
\[
\bbE\hbeta_\M^{(q)} = \bX^{(q)\top} \big(\bX^{(q)}\bX^{(q)\top}\big)^{\dagger}\bbE\big[ \bX^{(q)}\bbeta^{(q)} +\beps^{(q)} \big] = \bH^{(q)} \bbeta^{(q)}, \quad q \in \zerotoQ,
\]
since $\beps^{(q)}$ is mean-zero by Assumption \ref{assumption:model}. Thus, the following expectation holds:
\begin{align*}
\bbE\hbeta_\TM^{(q)} &= \bH^{(0)} \bbeta^{(0)} + \big(\bI_p -\bH^{(0)} \big) \bH^{(q)} \bbeta^{(q)}  \\
\implies \bbE\hbeta_\TM^{(q)} -\bbeta^{(0)} &=  (\bI_p - \bH^{(0)})\big(\bH^{(q)}\bbeta^{(q)} -\bbeta^{(0)}\big).
\end{align*}
Denoting $\bPi^{(0)} = \big(\bI_p-\bH^{(0)}\big)\bSigma^{(0)} \big(\bI_p-\bH^{(0)}\big)$ where the orthogonal projections $\bH^{(q)}$ and $\bI_p - \bH^{(q)}$ are symmetric, we have
\[
\cB_\TM^{(q)} = (\bH^{(q)}\bbeta^{(q)} - \bbeta^{(0)})^\top \bPi^{(0)} (\bH^{(q)}\bbeta^{(q)} - \bbeta^{(0)}), \quad q \neq 0,
\]
where
\[
\bH^{(q)}\bbeta^{(q)} - \bbeta^{(0)} = \bH^{(q)}(\bbeta^{(0)} + \bdelta^{(q)}) - \bbeta^{(0)} = \bH^{(q)} \bdelta^{(q)} - \big( \bI_p - \bH^{(q)} \big) \bbeta^{(0)}.    
\]
Thus, we have
\begin{align*}
\cB_\TM^{(q)} &= \bbeta^{(0)\top} \big( \bI_p - \bH^{(q)} \big) \bPi^{(0)} \big( \bI_p - \bH^{(q)} \big) \bbeta^{(0)}  + \bdelta^{(q)\top} \bH^{(q)} \bPi^{(0)} \bH^{(q)} \bdelta^{(q)} \\
&\quad - 2\bdelta^{(q)\top} \bH^{(q)} \bPi^{(0)} \big( \bI_p - \bH^{(q)} \big) \bbeta^{(0)},
\end{align*}
which completes the derivation of $\cB_\TM^{(q)}$ in Lemma \ref{lemma:TM_bv}.

\textbf{Variance of TM.}~~As for the variance $\cV_\TM^{(q)}$, we may write
\begin{align*}
\Cov\big( \hbeta_\TM^{(q)} \big) &= \Cov\Big( \hbeta_\M^{(0)} + \big(\bI_p -\bH^{(0)} \big) \hbeta_\M^{(q)} \Big) \\
&= \Cov\big( \hbeta_\M^{(0)} \big) + \big(\bI_p -\bH^{(0)} \big) \Cov\big( \hbeta_\M^{(q)} \big) \big(\bI_p -\bH^{(0)} \big),
\end{align*}
where the second equation holds since $\hbeta_\M^{(0)}$ and $\hbeta_\M^{(q)}$ are independent. Here, the covariance of the single-task MNI $\hbeta_\M^{(q)}$ is given by
\begin{align*}
\Cov\big(\hbeta_\M^{(q)}\big) &= \Cov\Big( \bX^{(q)\top}\big( \bX^{(q)}\bX^{(q)\top}\big)^{\dagger}\by^{(q)} \Big)  \\
&= \Cov\Big( \big(\bX^{(q)\top}\bX^{(q)}\big)^\dagger\bX^{(q)\top} \by^{(q)}  \Big) \\
&= \big(\bX^{(q)\top}\bX^{(q)}\big)^\dagger\bX^{(q)\top} \Cov\big(\beps^{(q)}\big) \bX^{(q)}\big(\bX^{(q)\top}\bX^{(q)}\big)^\dagger \\
&= \sigma_q^2 \big(\bX^{(q)\top}\bX^{(q)}\big)^\dagger, \quad  q \in \zerotoQ, 
\end{align*}
where the last equality holds since $\Cov\big(\beps^{(q)}\big) = \sigma_q^2 \bI_{n_q}$ by Assumption \ref{assumption:model} and $\bM^\dagger \bM \bM^\dagger = \bM^\dagger$ for any $\bM$ by the definition of Moore-Penrose inverse. Denoting $\hSigma^{(q)} = (1/n_q)\bX^{(q)\top}\bX^{(q)}$, we have
\[
\Cov\big(\hbeta_\M^{(q)}\big) = \frac{\sigma_q^2}{n_q}\hSigma^{(q)\dagger}, \quad  q \in \zerotoQ,
\]
which yields
\[
\Cov\big( \hbeta_\TM^{(q)} \big) = \frac{\sigma_0^2}{n_0}\hSigma^{(0)\dagger} + \frac{\sigma_q^2}{n_q} \big(\bI_p -\bH^{(0)} \big) \hSigma^{(q)\dagger} \big(\bI_p -\bH^{(0)} \big).
\]
Plugging this into $\cV_\TM^{(q)}$ yields
\begin{align*}
\cV_\TM^{(q)} &=  \Tr \Big( \Cov\big( \hbeta_\TM^{(q)} \big) \bSigma^{(0)}  \Big) \\
&= \frac{\sigma_0^2}{n_0} \Tr \Big( \hSigma^{(0)\dagger} \bSigma^{(0)}  \Big) + \frac{\sigma_q^2}{n_q} \Tr \Big( \big(\bI_p -\bH^{(0)} \big) \hSigma^{(q)\dagger} \big(\bI_p -\bH^{(0)} \big) \bSigma^{(0)}  \Big) \\
&=  \frac{\sigma_0^2}{n_0} \Tr \Big( \hSigma^{(0)\dagger} \bSigma^{(0)}  \Big) + \frac{\sigma_q^2}{n_q} \Tr \Big( \hSigma^{(q)\dagger} \bPi^{(0)} \Big).
\end{align*}
This completes the derivation of $\cV_\TM^{(q)}$ in Lemma \ref{lemma:TM_bv}.
\end{proof}

\subsection{Proof of Theorem \ref{thm:expected_risk}}
\begin{proof}
Since we assume i.i.d. standard Gaussian design for $(\bX^{(0)},\bX^{(q)})$ in Theorem \ref{thm:expected_risk}, we have $\bSigma^{(0)} = \bSigma^{(q)} = \bI_p$ and $\bPi^{(0)} = (\bI_p-\bH^{(0)})\bSigma^{(0)}(\bI_p-\bH^{(0)}) = \bI_p-\bH^{(0)}$ for idempotent $\bH^{(0)}$.

\textbf{Bias of TM.}~~Under the standard Gaussianity assumption, the bias of the target and transfer tasks, which are provided in Equation~\eqref{eq:baseline_BV} and Lemma \ref{lemma:TM_bv} respectively, are reduced to as follows:
\begin{align*}
\cB_\M^{(0)} &= \bbeta^{(0)\top} \big(\bI_p-\bH^{(0)}\big) \bbeta^{(0)}, \\
\cB_\TM^{(q)} &= \bbeta^{(0)\top} \big( \bI_p - \bH^{(q)} \big) \big(\bI_p-\bH^{(0)}\big) \big( \bI_p - \bH^{(q)} \big) \bbeta^{(0)} + \bdelta^{(q)\top} \bH^{(q)} \big(\bI_p-\bH^{(0)}\big) \bH^{(q)} \bdelta^{(q)} \\
&\quad - 2\bdelta^{(q)\top} \bH^{(q)} \big(\bI_p-\bH^{(0)}\big) \big( \bI_p - \bH^{(q)} \big) \bbeta^{(0)}.
\end{align*}
First, by Lemma \ref{lemma:gaussian_projection}, the expected bias of target task is derived as
\[
\bbE_{\cX} \cB_\M^{(0)} = \bbE_{\bX^{(0)}} \big\| \big(\bI_p-\bH^{(0)}\big)  \bbeta^{(0)} \big\|^2 = \left(\frac{p-n_0}{p}\right) \big\| \bbeta^{(0)} \big\|^2.
\]
Next, to derive the expected bias risk of the transfer task, we again use Lemma \ref{lemma:gaussian_projection}. The expectation of the first term in $\cB_\TM^{(q)}$ is obtained as follows:
\begin{align*}
&\bbE_{(\bX^{(0)}, \bX^{(q)})} \bigg[ \bbeta^{(0)\top} (\bI_p-\bH^{(q)}) (\bI_p-\bH^{(0)}) (\bI_p-\bH^{(q)}) \bbeta^{(0)} \bigg] \\
&=\bbE_{\bX^{(q)}} \bbE_{\bX^{(0)}} \Big[ \bbeta^{(0)\top} (\bI_p-\bH^{(q)}) (\bI_p-\bH^{(0)}) (\bI_p-\bH^{(q)}) \bbeta^{(0)} ~\big|~ \bX^{(q)} \Big] \\
&= \left(\frac{p-n_0}{p}\right) \bbE_{\bX^{(q)}} \big\| \big(\bI_p-\bH^{(q)}\big) \bbeta^{(0)} \big\|^2 \\
&= \left(\frac{p-n_0}{p}\right) \left(\frac{p-n_q}{p}\right) \big\| \bbeta^{(0)} \big\|^2, \quad q \neq 0,
\end{align*}
where we use the law of total expectation in the second line and the equality
\[
\bbE_{\bX^{(0)}}\big[\bI_p-\bH^{(0)}\big] = \left(\frac{p-n_0}{p}\right)\bI_p
\]
given by Lemma \ref{lemma:gaussian_projection} to obtain the third line. Applying the law of total expectation in the same way, we derive the expectation of the second term in $\cB_\TM^{(q)}$ as follows:
\begin{align*}
\bbE_{(\bX^{(0)},\bX^{(q)})} \bigg[ \bdelta^{(q)\top} \bH^{(q)} \big(\bI_p-\bH^{(0)}\big) \bH^{(q)} \bdelta^{(q)} \bigg] &= \left(\frac{p-n_0}{p}\right) \bbE_{\bX^{(q)}} \big\| \bH^{(q)} \bdelta^{(q)} \big\|^2 \\
&= \left(\frac{p-n_0}{p}\right) \frac{n_q}{p} \big\| \bdelta^{(q)} \big\|^2, \quad q \neq 0.
\end{align*}
Lastly, the expectation of the third term in $\cB_\TM^{(q)}$ vanishes, since
\begin{align*}
&\bbE_{(\bX^{(0)},\bX^{(q)})} \bigg[ - 2\bdelta^{(q)\top} \bH^{(q)} \big(\bI_p-\bH^{(0)}\big) \big( \bI_p - \bH^{(q)} \big) \bbeta^{(0)} \bigg] \\
&= -2\bdelta^{(q)\top} \bbE_{\bX^{(q)}} \bbE_{\bX^{(0)}} \Big[ \bH^{(q)} \big(\bI_p-\bH^{(0)}\big) \big( \bI_p - \bH^{(q)} \big) ~\big|~ \bX^{(q)}  \Big] \bbeta^{(0)}  \\
&=  -2 \left(\frac{p-n_0}{p}\right) \bdelta^{(q)\top} \bbE_{\bX^{(q)}} \Big[ \bH^{(q)} \big(\bI_p-\bH^{(q)}\big) \Big]  \bbeta^{(0)},
\end{align*}
and $\bH^{(q)} \big(\bI_p-\bH^{(q)}\big) = \zeros_{p \times p}$ almost surely as $\bH^{(q)}$ is idempotent. Combining the results, we have
\begin{align*} 
\bbE_{\cX} \cB_\TM^{(q)} &= \left(\frac{p-n_0}{p}\right) \left(\frac{p-n_q}{p}\right) \big\| \bbeta^{(0)} \big\|^2 + \left(\frac{p-n_0}{p}\right) \frac{n_q}{p} \big\| \bdelta^{(q)} \big\|^2 \\
&= \frac{p-n_0}{p} \left(  \frac{p-n_q}{p}\big\| \bbeta^{(0)} \big\|^2 + \frac{n_q}{p} \big\| \bdelta^{(q)} \big\|^2 \right),
\end{align*}
which completes the derivation of the expected bias in Theorem \ref{thm:expected_risk}. Note that Lemma \ref{lemma:gaussian_projection} requires $p>n_0+1$ and $p>n_q+1$, which is assumed in Theorem \ref{thm:expected_risk}.

\textbf{Variance of TM.}~~Under the standard Gaussianity assumption, we can simplify the variances in Equation~\eqref{eq:baseline_BV} and Lemma \ref{lemma:TM_bv} as follows: 
\begin{align*}
\cV_\M^{(0)} &= \frac{\sigma_0^2}{n_0}\Tr\Big(\hSigma^{(0)\dagger}\Big) = \sigma_0^2 \Tr\Big( \big(\bX^{(0)}\bX^{(0)\top} \big)^{-1}\Big), \\
\cV_\TM^{(q)} &= \underbrace{ \frac{\sigma_q^2}{n_q}\Tr\Big(\hSigma^{(q)\dagger} \big(\bI_p- \bH^{(0)} \big)\Big)}_{\textnormal{variance inflation} ~=~ \cV_{\uparrow}^{(q)}} + \ \cV_\M^{(0)},
\end{align*}
since, for each $q \in \zerotoQ$, we have
\[
\hat{\bSigma}^{(q)\dagger} = \left( \frac{\bX^{(q)\top}\bX^{(q)}}{n_q}\right)^{\dagger} = n_q \bX^{(q)\dagger} \big(\bX^{(q)\top}\big)^\dagger = n_q \bX^{(q)\top} \big(\bX^{(q)}\bX^{(q)\top} \big)^{-2} \bX^{(q)},
\]
and trace is invariant to cyclic permutation. As we assume $p > (n_0+1) \vee (n_q+1)$ in Theorem \ref{thm:expected_risk}, we may use the result from Section 2.2 in \citet{Belkin2020_doubledescent}: for each $q \in \zerotoQ$, $\big(\bX^{(q)}\bX^{(q)\top}\big)^{-1}$ follows inverse-Wishart distribution with scale matrix $\bI_{n_q}$ and $p$ degrees-of-freedom. This implies that each diagonal entry of $\big(\bX^{(q)}\bX^{(q)\top}\big)^{-1}$ 
\[
\Big[ \big(\bX^{(q)}\bX^{(q)\top} \big)^{-1} \Big]_{ii}, \quad i \in [n_q],
\]
has a reciprocal that follows the $\chi^2$-distribution with $p-(n_q-1)$ degrees-of-freedom. Therefore, we have
\[
\bbE \Big[ \big(\bX^{(q)}\bX^{(q)\top} \big)^{-1} \Big]_{ii} = \frac{1}{p-(n_q+1)}, \quad i \in [n_q],
\]
and
\begin{align*}
\bbE_{\cX} \cV_\M^{(0)} = \sigma_0^2 \bbE \Tr\Big( \big(\bX^{(0)}\bX^{(0)\top} \big)^{-1}\Big)
= \sigma_0^2 \bbE \sum_{i=1}^{n_0}  \Big[ \big(\bX^{(q)}\bX^{(q)\top} \big)^{-1} \Big]_{ii} = \frac{\sigma_0^2 n_0}{p-(n_0+1)}. 
\end{align*}
Now it remains to derive the expectation of variance inflation. The law of total expectation gives
\begin{align*}
\bbE_{(\bX^{(0)},\bX^{(q)})} \bigg[ \frac{\sigma_q^2}{n_q}\Tr \Big( \hSigma^{(q)\dagger} \big(\bI_p- \bH^{(0)}\big )\Big) \bigg] &= \frac{\sigma_q^2}{n_q} \ \bbE_{\bX^{(q)}} \Tr\left( \bbE_{\bX^{(0)}} \Big[ \hSigma^{(q)\dagger} \big(\bI_p- \bH^{(0)}\big) \bigm\vert \bX^{(q)} \Big] \right)  \\
&= \frac{\sigma_q^2}{n_q} \left(\frac{p-n_0}{p}\right) \ \bbE_{\bX^{(q)}} \Tr \Big(n_q(\bX^{(q)}\bX^{(q)\top})^{-1} \Big) \\
&= \sigma_q^2 \left(\frac{p-n_0}{p}\right) \frac{n_q}{p-(n_q+1)},
\end{align*}
which completes the derivation of the expected variance in Theorem \ref{thm:expected_risk}.
\end{proof}

\subsection{Proof of Corollary \ref{cor:isotropic}}
\begin{proof}
\label{proof:isotropic}
From Theorem \ref{thm:expected_risk}, we have
\begin{align*}
\bbE_{\cX}\cR_\M^{(0)} &= \frac{p-n_0}{p} \|\bbeta^{(0)}\|^2 + \frac{\sigma_0^2 n_0}{p-(n_0+1)}, \\
\bbE_{\cX}\cR_\TM^{(q)} &= \frac{p-n_0}{p} \Big( \frac{p-n_q}{p} \|\bbeta^{(0)}\|^2 + \frac{n_q}{p} \|\bdelta^{(q)}\|^2 \Big) + \Big(\frac{p-n_0}{p}\Big) \frac{\sigma_q^2 n_q}{p-(n_q+1)} + \frac{\sigma_0^2 n_0}{p-(n_0+1)}.
\end{align*}
Thus, it follows that
\begin{align}
&\bbE_{\cX}\cR_\TM^{(q)} < \bbE_{\cX}\cR_\M^{(0)} \notag \\
&\iff \frac{p-n_0}{p} \Big( \frac{p-n_q}{p} \|\bbeta^{(0)}\|^2 + \frac{n_q}{p} \|\bdelta^{(q)}\|^2 \Big) + \Big(\frac{p-n_0}{p}\Big) \frac{\sigma_q^2 n_q}{p-(n_q+1)} < \frac{p-n_0}{p} \|\bbeta^{(0)}\|^2 \notag \\
&\iff \frac{n_q}{p} \|\bdelta^{(q)}\|^2 + \frac{\sigma_q^2 n_q}{p-(n_q+1)} < \frac{n_q}{p} \|\bbeta^{(0)}\|^2 \notag \\
&\iff \frac{1}{p} \left( \|\bbeta^{(0)}\|^2 -  \|\bdelta^{(q)}\|^2 \right) > \frac{\sigma_q^2}{p-(n_q+1)}. \label{eq:postrans_ineq}
\end{align}
Using the definitions $\SNR_q = \|\bbeta^{(0)}\|^2/\sigma_q^2 > 0$ and $\SSR_q = \|\bdelta^{(q)}\|^2 / \|\bbeta^{(0)}\|^2 \geq 0$, we may write
\[
\|\bbeta^{(0)}\|^2 = \sigma_q^2~\SNR_q, \qquad \|\bdelta^{(q)}\|^2 = \sigma_q^2~\SNR_q~\SSR_q,
\]
and therefore, inequality \eqref{eq:postrans_ineq} can be written as
\begin{align}
\SNR_q \left(1 - \SSR_q \right) > \frac{p}{p-(n_q+1)} \label{eq:postrans_ineq2}.
\end{align}
From inequality \eqref{eq:postrans_ineq2}, observe that its right-hand side is always positive, which makes $\SSR_q < 1$ a necessary condition for $\bbE_{\cX}\cR_\TM^{(q)} < \bbE_{\cX}\cR_\M^{(0)}$. Also, the right-hand side minimized with respect to $n_q \in [1, p-1)$ is $p/(p-2)$, which makes $\SNR_q \left(1 - \SSR_q \right) > p/(p-2)$ another necessary condition. 

Provided the condition $\SSR_q < 1$, define the following function of $n_q$:
\begin{align*}
\widetilde\Delta(n_q) &:= \bbE_{\cX}\cR_\TM^{(q)} - \bbE_{\cX}\cR_\M^{(0)} \\
&= \frac{p-n_0}{p} \Big( \frac{p-n_q}{p} \|\bbeta^{(0)}\|^2 + \frac{n_q}{p} \|\bdelta^{(q)}\|^2 \Big) + \Big(\frac{p-n_0}{p}\Big) \frac{\sigma_q^2 n_q}{p-(n_q+1)} - \frac{p-n_0}{p} \|\bbeta^{(0)}\|^2 \\
&= \left(\frac{p-n_0}{p}\right) \left\{ \frac{n_q}{p} \left( \|\bdelta^{(q)}\|^2 - \|\bbeta^{(0)}\|^2 \right) + n_q \frac{\sigma_q^2}{p-(n_q+1)} \right\} \\
&= \underbrace{\left(\frac{p-n_0}{p}\right) \|\bbeta^{(0)}\|^2}_{=:~ B^{(0)}} n_q \left( \frac{\SSR_q-1}{p} + \frac{1}{\SNR_q (p-n_q-1)} \right),
\end{align*}
where we write $B^{(0)} = \left(\frac{p-n_0}{p}\right) \|\bbeta^{(0)}\|^2 > 0$ and define $\Delta(n_q) := -\widetilde\Delta(n_q)$ as in~Corollary~\ref{cor:isotropic}. Holding all quantities other than $n_q$ constant and differentiating $\widetilde\Delta$ with respect to $n_q$, we obtain
\begin{align*}
\frac{\partial \widetilde\Delta}{\partial n_q} &= B^{(0)} \left( \frac{\SSR_q-1}{p} + \frac{1}{\SNR_q (p-n_q-1)} \right) + B^{(0)} n_q \left( \frac{1}{\SNR_q (p-n_q-1)^2}\right) \\
&= B^{(0)} \left( \frac{\SSR_q-1}{p} + \frac{1}{\SNR_q (p-n_q-1)} + \frac{n_q}{\SNR_q (p-n_q-1)^2} \right).
\end{align*}
For the first-order condition $\frac{\partial \widetilde\Delta}{\partial n_q} = 0$, multiplying both sides by $p \cdot \SNR_q(p-n_q-1)^2/B^{(0)}  > 0$, we obtain
\begin{align*}
& (\SSR_q - 1) \SNR_q (p-n_q^{\ast}-1)^2 + p (p-n_q^{\ast}-1) + pn_q^{\ast} = 0 \\
&\implies (\SSR_q - 1) \SNR_q (p-n_q^{\ast}-1)^2 + p(p-1) = 0 \\
&\implies p-n_q^{\ast}-1 = \sqrt{\frac{p(p-1)}{\SNR_q(1-\SSR_q)}} > 0 \\
&\implies n_q^{\ast} = p-1 - \sqrt{\frac{p(p-1)}{\SNR_q(1-\SSR_q)}}.
\end{align*}
Here, since $\SSR_q < 1$ and $\SNR_q > 0$, $n_q^{\ast}$ is strictly less than $p-1$. Furthermore, we have $n_q^{\ast} \geq 1$ if and only if
\begin{align*}
(p-2 )^2 \geq \frac{p(p-1)}{\SNR_q(1-\SSR_q)} \iff \SNR_q(1-\SSR_q) \geq \frac{p(p-1)}{(p-2)^2}.
\end{align*}

To check the second-order condition at $n_q^{\ast}$, consider the second derivative of $\widetilde\Delta$ given by
\begin{align*}
\frac{\partial^2 \widetilde\Delta}{\partial n_q^2} &= \frac{\partial \widetilde\Delta}{\partial n_q} \bigg[ B^{(0)} \left( \frac{\SSR_q-1}{p} + \frac{1}{\SNR_q (p-n_q-1)} + \frac{n_q}{\SNR_q (p-n_q-1)^2} \right)  \bigg] \\
&= \frac{B^{(0)}}{\SNR_q} \left( \frac{1}{(p-n_q-1)^2} + \frac{(p-n_q-1)^2 + n_q \cdot 2(p-n_q-1)}{(p-n_q-1)^4} \right) \\
&= \frac{B^{(0)}}{\SNR_q} \left( \frac{2(p-n_q-1)^2 + 2n_q(p-n_q-1)}{(p-n_q-1)^4} \right) \\
&= \frac{2B^{(0)}}{\SNR_q} \left( \frac{p-1}{(p-n_q-1)^3} \right) > 0, \quad  \forall n_q \in [1, p-1).
\end{align*}
That is, $\widetilde\Delta(n_q)$ is strictly convex on the interval $[1, p-1)$; conversely, $\Delta(n_q) = \bbE_{\cX}\cR_\M^{(0)} - \bbE_{\cX}\cR_\TM^{(q)}$ is strictly concave on the same interval, and thus the improvement in expected excess risk $\Delta(n_q)$ strictly increases on $n_q \in [1, n_q^{\ast}]$ and strictly decreases on $n_q \in (n_q^{\ast}, p-1)$. 

Now it remains to show that $\Delta(n_q^{\ast}) > 0$. Recall from the first-order condition that we have
\begin{align*}
&\left. \frac{\partial \widetilde\Delta}{\partial n_q} \right|_{n_q = n_q^{\ast}} = B^{(0)} \left( \frac{\SSR_q-1}{p} + \frac{1}{\SNR_q (p-n_q^{\ast}-1)} + \frac{n_q^{\ast}}{\SNR_q (p-n_q^{\ast}-1)^2} \right) = 0 \\
&\quad \implies B^{(0)} \left( \frac{\SSR_q-1}{p} + \frac{1}{\SNR_q (p-n_q^{\ast}-1)} \right) = - \frac{B^{(0)} n_q^{\ast}}{\SNR_q (p-n_q^{\ast}-1)^2}.
\end{align*}
Plugging this into $\widetilde\Delta(n_q)$ at $n_q = n_q^{\ast}$ and flipping the sign yield the maximal improvement value
\begin{align*}
\Delta(n_q^{\ast}) &=  \left(\frac{p-n_0}{p}\right)  \left( \frac{ \left( p-1 - \sqrt{\frac{p(p-1)}{\SNR_q(1-\SSR_q)}} \right)^2 (1-\SSR_q) }{p(p-1)} \right) \|\bbeta^{(0)}\|^2 \\
&= \left(\frac{p-n_0}{p}\right) \left(\frac{(n_q^{\ast})^2(1-\SSR_q)}{p(p-1)}\right) \|\bbeta^{(0)}\|^2 > 0.    
\end{align*}
This completes the proof.
\end{proof}

\subsection{Proof of Theorem \ref{thm:convergence_rate}}
\begin{proof} We first recall the notations $\hSigma^{(q)} = (1/n_q)\bX^{(q)\top}\bX^{(q)}$, $\bH^{(q)} = \bX^{(q)\top} \big( \bX^{(q)} \bX^{(q)\top} \big)^{-1}\bX^{(q)}$, and $\bPi^{(0)} = \big(\bI_p-\bH^{(0)}\big) \bSigma^{(0)} \big(\bI_p-\bH^{(0)}\big)$. 

\textbf{Upper bound on bias.}~~First, notice that the bias of the target-only MNI in Equation~\eqref{eq:baseline_BV} can be written as
\begin{align*}
\cB_\M^{(0)} &= \bbeta^{(0)\top} \big(\bI_p-\bH^{(0)}\big) \bSigma^{(0)} \big(\bI_p-\bH^{(0)}\big) \bbeta^{(0)} \\
&= \bbeta^{(0)\top} \big(\bI_p-\bH^{(0)}\big) \big( \bSigma^{(0)} - \hSigma^{(0)} \big) \big(\bI_p-\bH^{(0)}\big) \bbeta^{(0)},
\end{align*}
since 
\[
\big (\bI_p-\bH^{(q)}\big) \hSigma^{(q)} = \frac{1}{n_q} \left(\bI_p- \bX^{(q)\top} \big( \bX^{(q)} \bX^{(q)\top} \big)^{-1}\bX^{(q)} \right) \bX^{(q)\top}\bX^{(q)} = \zeros_{p \times p}, \quad q \in \zerotoQ.
\]
This implies
\begin{align*}
\cB_\M^{(0)} &\leq \big\| \big(\bI_p-\bH^{(0)}\big) \big( \bSigma^{(0)} - \hSigma^{(0)} \big) \big(\bI_p-\bH^{(0)}\big) \big\| \big\| \bbeta^{(0)} \big\|^2 \\
&\leq \big\| \bSigma^{(0)} -\hSigma^{(0)} \big\| \big\| \bbeta^{(0)} \big\|^2,
\end{align*}
where $\big\|\bH^{(q)} \big\| = \big\|\bI_p-\bH^{(q)} \big\| = 1$ for all $q \in \zerotoQ$. By Lemma \ref{lemma:opnorm}, there exists a constant $u > 0$ such that with probability at least $1-2e^{-\delta}$ for $\delta \geq \log(2)$,
\begin{align}
\big\| \bSigma^{(0)} -\hSigma^{(0)} \big\| \leq u \left(\sqrt{\frac{r_0(\bSigma^{(0)}) + \delta}{n_0}} + \frac{r_0(\bSigma^{(0)}) + \delta}{n_0} \right) \big\| \bSigma^{(0)}\big\|,  \label{eq:thm2_opnorm}   
\end{align}
which completes the proof for the upper bound on $\cB_\M^{(0)}$; note that the constant $u > 0$ in Lemma \ref{lemma:opnorm} depends only on the sub-Gaussian parameter $\nu_x$ in Assumption \ref{assumption:model}. As in \citet{Bartlett2020_PNAS}, we assume that $\nu_x$ is fixed and therefore regard $u$ as a universal constant. 

Next, recall that the bias of the TM estimate in Lemma \ref{lemma:TM_bv} is given by 
\begin{align*}
\cB_\TM^{(q)} &= \underbrace{\bdelta^{(q)\top} \bH^{(q)}\bPi^{(0)} \bH^{(q)} \bdelta^{(q)}}_{=: ~ U_1 ~\geq~ 0} ~+~ \underbrace{\bbeta^{(0)\top} \big(\bI_p-\bH^{(q)}\big) \bPi^{(0)} \big(\bI_p-\bH^{(q)}\big) \bbeta^{(0)}}_{=: ~ U_2 ~\geq~ 0} \\
&\quad -2 \bdelta^{(q)\top} \bH^{(q)}\bPi^{(0)} \big(\bI_p-\bH^{(q)}\big) \bbeta^{(0)},   
\end{align*}
where the Cauchy-Schwartz and AM-GM inequalities yield
\[
2\abs{\bdelta^{(q)\top} \bH^{(q)}\bPi^{(0)} \big(\bI_p-\bH^{(q)}\big) \bbeta^{(0)}} \leq 2 \sqrt{U_1 U_2} \leq U_1+U_2,
\]
and therefore
\[
\cB_\TM^{(q)} \leq 2 \big( U_1 + U_2 \big).
\]
So it suffices to upper-bound the terms $U_1$ and $U_2$, and their concentration inequalities can be obtained similarly. Notice that $U_1$ is bounded above by
\begin{align*}
U_1 &= \bdelta^{(q)\top} \bH^{(q)} \big(\bI_p-\bH^{(0)}\big) \big(\bSigma^{(0)} -\hSigma^{(0)}\big) \big(\bI_p-\bH^{(0)}\big) \bH^{(q)} \bdelta^{(q)} \\
&\leq \big\| \bH^{(q)} \big(\bI_p-\bH^{(0)}\big) \big(\bSigma^{(0)} -\hSigma^{(0)}\big) \big(\bI_p-\bH^{(0)}\big) \bH^{(q)} \big\| \big\|\bdelta^{(q)}\big\|^2 \\
&\leq \big\|\bSigma^{(0)} -\hSigma^{(0)} \big\| \big\|\bdelta^{(q)}\big\|^2, 
\end{align*} 
where the concentration inequality for $\|\bSigma^{(0)} -\hSigma^{(0)} \|$ has already been established in inequality \eqref{eq:thm2_opnorm}. Here, we only substitute the variable $\delta$ in \eqref{eq:thm2_opnorm} by $\eta_1 \geq \log(2)$: with probability at least $1-2e^{-\eta_1}$, 
\begin{align}
U_1 \leq u \left(\sqrt{\frac{r_0(\bSigma^{(0)}) + \eta_1}{n_0}} + \frac{r_0(\bSigma^{(0)}) + \eta_1}{n_0} \right) \big\| \bSigma^{(0)}\big\| \big\|\bdelta^{(q)}\big\|^2. \label{eq:U1_bound}
\end{align}
Next, recall that by Assumption \ref{assumption:model}, there exists a universal constant $C_{\scriptscriptstyle \bSigma^{(q)}}>0$ such that 
\[
\big\| \bSigma^{(0)} (\bSigma^{(q)})^{-1} \big\| \leq C_{\scriptscriptstyle \bSigma^{(q)}},
\]
with which we define the following set of matrices:
\[
\bTheta := \Big\{\bA \in \bbR^{p\times p}: \big\| \bA \big\|
 \leq C_{\scriptscriptstyle \bSigma^{(q)}} \Big\}.
\]
Consider any $\bA \in \bTheta$. We have
\begin{align*}
& \big\| \big(\bI_p-\bH^{(q)}\big) \big(\bI_p-\bH^{(0)}\big) \bSigma^{(0)} \big(\bI_p-\bH^{(0)}\big) \big(\bI_p-\bH^{(q)}\big) \big\| \\
& \leq \big\| \big(\bI_p-\bH^{(q)}\big) \bSigma^{(0)} \big(\bI_p-\bH^{(q)}\big) \big\| \\
& = \big\| \big(\bI_p-\bH^{(q)}\big) \big(\bSigma^{(0)}-\bA\hSigma^{(q)}\big)  \big(\bI_p-\bH^{(q)}\big) \big\| \\
& \leq \min_{\bA \in \bTheta} \big\| \bSigma^{(0)} - \bA\hSigma^{(q)} \big\|.
\end{align*}
This implies
\begin{align*}
U_2 &= \bbeta^{(0)\top} \big(\bI_p-\bH^{(q)}\big) \big(\bI_p-\bH^{(0)}\big) \bSigma^{(0)} \big(\bI_p-\bH^{(0)}\big) \big(\bI_p-\bH^{(q)}\big) \bbeta^{(0)} \\
&\leq \big\| \big(\bI_p-\bH^{(q)}\big) \big(\bI_p-\bH^{(0)}\big) \bSigma^{(0)} \big(\bI_p-\bH^{(0)}\big) \big(\bI_p-\bH^{(q)}\big) \big\| \big\| \bbeta^{(0)} \big\|^2 \\
&\leq \min_{\bA \in \bTheta} \big\| \bSigma^{(0)} - \bA\hSigma^{(q)} \big\| \big\| \bbeta^{(0)} \big\|^2 \\
&\leq \min_{\bA \in \bTheta} \left( \big\|\bSigma^{(0)} - \bA\bSigma^{(q)}\big\| + \big\| \bA\bSigma^{(q)}  - \bA\hSigma^{(q)} \big\| \right) \big\| \bbeta^{(0)} \big\|^2  \\
&= \Big\| \bSigma^{(0)} \big(\bSigma^{(q)}\big)^{-1} \big( \bSigma^{(q)}-\hSigma^{(q)}\big) \Big\|  \big\| \bbeta^{(0)} \big\|^2 ~~\textnormal{with the minimizer}~~ \bA^{\ast} = \bSigma^{(0)}\big(\bSigma^{(q)}\big)^{-1} \\
&\leq \big\| \bSigma^{(0)} (\bSigma^{(q)})^{-1} \big\|  \big\| \bSigma^{(q)}-\hSigma^{(q)} \big\|  \big\| \bbeta^{(0)} \big\|^2 \\
&\leq C_{\scriptscriptstyle \bSigma^{(q)}}  \big\| \bSigma^{(q)}-\hSigma^{(q)} \big\|  \big\| \bbeta^{(0)} \big\|^2,
\end{align*}
and the tail bound on $\big\| \bSigma^{(q)}-\hSigma^{(q)} \big\|$ holds similarly by Lemma \ref{lemma:opnorm}. That is, there exists a universal constant $u > 0$ such that with probability at least $1-2e^{-\eta_2}$ for $\eta_2 \geq \log(2)$,
\begin{align}
U_2 \leq u \left(\sqrt{\frac{r_0(\bSigma^{(q)}) + \eta_2}{n_q}} + \frac{r_0(\bSigma^{(q)}) + \eta_2}{n_q} \right)  C_{\scriptscriptstyle \bSigma^{(q)}} \big\|\bSigma^{(q)}\big\| \big\| \bbeta^{(0)} \big\|^2. \label{eq:U2_bound}
\end{align}
Now, let $\cU_1(\eta_1)$ and $\cU_2(\eta_2)$ denote the events \eqref{eq:U1_bound} and \eqref{eq:U2_bound} respectively, emphasizing the dependency on $\eta_1$ and $\eta_2$ of the corresponding inequalities. Then, it follows that
\[
\bbP \Big( \cU_1(\eta_1) \Big) \geq 1-2e^{-\eta_1}, \qquad \bbP \Big( \cU_2(\eta_2) \Big) \geq 1-2e^{-\eta_2}.
\]
By taking the union bound, we have
\begin{align*}
\bbP \left( \bigcap_{t=1}^2 \cU_t(\eta_t) \right) & \geq 1 - \sum_{t=1}^2 \bbP \Big( \cU_t(\eta_t)^c \Big)  \\
&\geq 1 - 2e^{-\eta_1} - 2e^{-\eta_2}.  
\end{align*}
Taking $\eta_1 \geq \log(4)$, $\eta_2 \geq \log(4)$, and $\eta = -\log \left( \frac{e^{-\eta_1} + e^{-\eta_2}}{2} \right)$ yields 
\[
\bbP \left( \bigcap_{t=1}^2 \cU_t(\eta_t) \right) \geq 1 - 4e^{-\eta}, \quad \eta \geq \log(4).
\]
Thus, with probability at least $1-4e^{-\eta}$ for $\eta \geq \log(4)$,
\begin{align*}
U_1 + U_2 ~&\leq~  u \left(\sqrt{\frac{r_0(\bSigma^{(0)}) + \eta}{n_0}} + \frac{r_0(\bSigma^{(0)}) + \eta}{n_0} \right) \big\| \bSigma^{(0)}\big\| \big\|\bdelta^{(q)}\big\|^2 \\
&\quad +~ u \left(\sqrt{\frac{r_0(\bSigma^{(q)}) + \eta}{n_q}} + \frac{r_0(\bSigma^{(q)}) + \eta}{n_q} \right)  C_{\scriptscriptstyle \bSigma^{(q)}} \big\|\bSigma^{(q)}\big\| \big\| \bbeta^{(0)} \big\|^2, 
\end{align*}
which completes the proof for the upper bound on $\cB_\TM^{(q)}$. 

\textbf{Upper bound on variance inflation.}~~By Lemma \ref{lemma:Holder_inequality}, the variance inflation $\cV_{\uparrow}^{(q)}$ is bounded above by
\begin{align*}
& \frac{\sigma_q^2}{n_q}\Tr\left( \hat\bSigma^{(q)\dagger} \bPi^{(0)} \right) \\
&= \frac{\sigma_q^2}{n_q} \Tr\left( \big(\bSigma^{(q)}\big)^{1/2} \hat\bSigma^{(q)\dagger}  \big(\bSigma^{(q)}\big)^{1/2} \big(\bSigma^{(q)}\big)^{-1/2} \bPi^{(0)} \big(\bSigma^{(q)}\big)^{-1/2} \right) \\
&\leq \frac{\sigma_q^2}{n_q}\Tr\Big( \hat\bSigma^{(q)\dagger} \bSigma^{(q)} \Big) \big\| \big(\bSigma^{(q)}\big)^{-1/2} \big(\bI_p-\bH^{(0)}\big) \big(\bSigma^{(0)} -\hSigma^{(0)}\big) \big(\bI_p-\bH^{(0)}\big)  \big(\bSigma^{(q)}\big)^{-1/2} \big\| \\
&\leq \frac{\sigma_q^2}{n_q}\Tr\Big( \hat\bSigma^{(q)\dagger} \bSigma^{(q)} \Big) \big\| \big(\bSigma^{(q)}\big)^{-1/2} \big\|^2 \big\| \bSigma^{(0)} -\hSigma^{(0)} \big\|,
\end{align*}
where $\big\|\big(\bSigma^{(q)}\big)^{-1/2} \big\|^2 = \lambda_{1}\big((\bSigma^{(q)})^{-1}\big) = \big(\lambda_{p}^{(q)}\big)^{-1}$. By Corollary \ref{cor:PNAS_upperlowerbound}, there exist universal constants $b_q,c_q \geq 1$ as specified in Assumption \ref{assumption:effective_rank} such that with probability at least $1-7e^{-n_q/c_q}$,
\[
\frac{\sigma_q^2}{n_q}\Tr\Big( \hat\bSigma^{(q)\dagger} \bSigma^{(q)} \Big) \lesssim c_q \sigma_q^2 \left( \frac{\kstar_q}{n_q} + \frac{n_q}{R_{\kstar_q}(\bSigma^{(q)}) } \right),
\]
and by Lemma \ref{lemma:opnorm}, with probability at least $1-2e^{-\xi}$ for $\xi \geq \log(2)$, 
\[
\big\| \bSigma^{(0)} -\hSigma^{(0)} \big\| \leq u \left(\sqrt{\frac{r_0(\bSigma^{(0)}) + \xi}{n_0}} + \frac{r_0(\bSigma^{(0)}) + \xi}{n_0} \right) \big\| \bSigma^{(0)}\big\|.
\]
Combining the results, with probability at least $1-7e^{-n_q/c_q}-2e^{-\xi}$ for $\xi \geq \log(2)$,
\begin{align*}
& \frac{\sigma_q^2}{n_q}\Tr\left( \hat\bSigma^{(q)\dagger} \bPi^{(0)} \right) \\
&\lesssim \sigma_q^2 \left( \frac{\kstar_q}{n_q} + \frac{n_q}{R_{\kstar_q}(\bSigma^{(q)}) } \right)  \left(\sqrt{\frac{r_0(\bSigma^{(0)}) + \xi}{n_0}} + \frac{r_0(\bSigma^{(0)}) + \xi}{n_0} \right) \big(\lambda_p^{(q)}\big)^{-1}\big\| \bSigma^{(0)}\big\|.
\end{align*}

\textbf{Lower bound on excess risk.}~~Even when the TM estimate induces positive transfer, i.e., $\cR_\TM^{(q)} < \cR_\M^{(0)}$, which is sufficient for $\cB_\TM^{(q)} < \cB_\M^{(0)}$, its excess risk is bounded below by
\begin{align*}
\cR_\TM^{(q)} &= \cB_\TM^{(q)} + \cV_\TM^{(q)} \\
&= \cB_\TM^{(q)} + \cV_{\uparrow}^{(q)} + \cV_\M^{(0)} \\
&\geq \cV_\M^{(0)},
\end{align*}
and by Corollary \ref{cor:PNAS_upperlowerbound}, there exist universal constants $b_0,c_0 \geq 1$ as specified in Assumption \ref{assumption:effective_rank} such that with probability at least $1-10e^{-n_0/c_0}$,
\[
\cV_\M^{(0)} \asymp \sigma_0^2 \left( \frac{\kstar_0}{n_0} + \frac{n_0}{R_{\kstar_0}(\bSigma^{(0)})} \right),
\]
which implies $\cR_\TM^{(q)} \gtrsim \sigma_0^2 \Big( \frac{\kstar_0}{n_0} + \frac{n_0}{R_{\kstar_0}(\bSigma^{(0)})} \Big)$ on the same high-probability event. This completes the proof.
\end{proof}

\subsection{Proof of Corollary \ref{cor:freelunch}}
\label{proof:cor2}
Before proceeding to the proof, we illustrate a case where $\tau^{\ast}$ in Equation~\eqref{def:tau_star} satisfies $\tau^{\ast} \ll n_0$. A canonical example with a ``benign'' spectrum is given by the spiked covariance model \citep{Johnstone2001_spikedcov}, where the target covariance $\bSigma^{(0)}$ has eigenvalues $\lambda_1^{(0)} \geq \ldots \geq \lambda_{s}^{(0)} > \lambda_{s+1}^{(0)} \geq \ldots \geq \lambda_p^{(0)}$ such that 
\[
\lambda_j^{(0)} \asymp 1, \quad j \leq s; \qquad \lambda_j^{(0)} \asymp \varepsilon, \quad j > s. 
\]
Here, the first $s$ largest eigenvalues are referred to as \emph{spikes}, while the rest $p-s$ eigenvalues are \emph{non-spikes}; a similar structure is considered by \citet{Mallinar2024_MNIcovshift}. Suppose the covariance satisfies $p \gg n_0$, $p\varepsilon \ll n_0$, and $s \ll n_0$. Then, we have $\tau^{\ast}=s$ and $\tau^{\ast}=\kstar_0$ indeed, where $\kstar_0$ is as specified in Assumption \ref{assumption:effective_rank}. In this case, $\bSigma^{(0)}$ is ``benign'' according to Definition \ref{def:benign}, since
\[
\frac{r_0(\bSigma^{(0)})}{n_0} \asymp \frac{s + (p-s)\varepsilon}{n_0} \to 0, \qquad \frac{s}{n_0} \to 0, \qquad \frac{n_0}{R_s(\bSigma^{(0)})} \asymp \frac{n_0}{p-s} \to 0 .
\]

We now proceed to the proof of Corollary \ref{cor:freelunch}.
\begin{proof}
Recall the high-probability upper bounds (ignoring constant factors) on the bias and variance inflation of the TM estimate obtained in Theorem \ref{thm:convergence_rate}:
\begin{align}
\overline{\cB} &:= \left(\sqrt{\frac{r_0(\bSigma^{(0)}) + \eta}{n_0}} + \frac{r_0(\bSigma^{(0)}) + \eta}{n_0} \right) \big\| \bSigma^{(0)}\big\| \big\|\bdelta^{(q)}\big\|^2  \notag \\
&\quad + \left(\sqrt{\frac{r_0(\bSigma^{(q)}) + \eta}{n_q}} + \frac{r_0(\bSigma^{(q)}) + \eta}{n_q} \right)  C_{\scriptscriptstyle \bSigma^{(q)}} \big\|\bSigma^{(q)}\big\| \big\| \bbeta^{(0)} \big\|^2, \label{cor2:bias_bound} \\
\overline{\cV} &:= \sigma_q^2 \left( \frac{\kstar_q}{n_q} + \frac{n_q}{R_{\kstar_q}(\bSigma^{(q)}) } \right) \left(\sqrt{\frac{r_0(\bSigma^{(0)}) + \xi}{n_0}} + \frac{r_0(\bSigma^{(0)}) + \xi}{n_0} \right) \big(\lambda_p^{(q)}\big)^{-1} \big\| \bSigma^{(0)}\big\|  \label{cor2:var_bound}.
\end{align}

\textbf{Case (A) of Corollary~\ref{cor:freelunch}.}~~If there is no covariate shift, i.e, $\bSigma^{(q)} = \bSigma^{(0)}$, the upper bounds $\overline{\cB}$ and $\overline{\cV}$ in \eqref{cor2:bias_bound} and \eqref{cor2:var_bound} are reduced to as follows:
\begin{align}
\overline{\cB}_\HM &:= \left(\sqrt{\frac{r_0(\bSigma^{(0)}) + \eta}{n_0}} + \frac{r_0(\bSigma^{(0)}) + \eta}{n_0} \right) \big\| \bSigma^{(0)}\big\| \big\|\bdelta^{(q)}\big\|^2 \notag \\
&\quad + \left(\sqrt{\frac{r_0(\bSigma^{(0)}) + \eta}{n_q}} + \frac{r_0(\bSigma^{(0)}) + \eta}{n_q} \right) \big\|\bSigma^{(0)}\big\| \big\| \bbeta^{(0)} \big\|^2, \label{cor2:bias_bound_HM} \\
\overline{\cV}_\HM &:= \sigma_q^2 \left( \frac{\kstar_0}{n_q} + \frac{n_q}{R_{\kstar_0}(\bSigma^{(0)}) } \right) \left(\sqrt{\frac{r_0(\bSigma^{(0)}) + \xi}{n_0}} + \frac{r_0(\bSigma^{(0)}) + \xi}{n_0} \right) \big(\lambda_{p}^{(0)}\big)^{-1} \big\|\bSigma^{(0)}\big\|, \label{cor2:var_bound_HM}
\end{align}
where $C_{\bSigma^{(0)}} = \|\bSigma^{(0)} (\bSigma^{(0)})^{-1}\| = 1$ for $\overline{\cB}_\HM$. Under the covariate shift (A) in Corollary \ref{cor:freelunch}, we have
\[
\lambda_j^{(q)} = \alpha \lambda_j^{(0)}, \quad j \in [p],
\]
for some $\alpha > 1$ and the top $\tau^{\ast}$ eigenvectors (corresponding to the $\tau^{\ast}$ largest eigenvalues) of $\bSigma^{(q)}$ and $\bSigma^{(0)}$ align where $\tau^{\ast}$ is defined in Equation~\eqref{def:tau_star}. Under this covariate shift, the upper bound $\overline{\cB}$ in \eqref{cor2:bias_bound} becomes
\begin{align}
\overline{\cB}_\HT &:= \left(\sqrt{\frac{r_0(\bSigma^{(0)}) + \eta}{n_0}} + \frac{r_0(\bSigma^{(0)}) + \eta}{n_0} \right) \big\| \bSigma^{(0)}\big\| \big\|\bdelta^{(q)}\big\|^2 \notag \\
&\quad +\alpha  C_{\scriptscriptstyle \bSigma^{(q)}}  \left(\sqrt{\frac{r_0(\bSigma^{(0)}) + \eta}{n_q}} + \frac{r_0(\bSigma^{(0)}) + \eta}{n_q} \right) \big\|\bSigma^{(0)}\big\| \big\| \bbeta^{(0)} \big\|^2, \label{cor2:bias_bound_covshift}  
\end{align}
since 
\[
r_0(\bSigma^{(q)}) = \frac{\Tr(\bSigma^{(q)})}{\| \bSigma^{(q)} \|}  = \frac{\alpha \Tr(\bSigma^{(0)})}{\alpha \| \bSigma^{(0)} \|} = r_0(\bSigma^{(0)}), \qquad \| \bSigma^{(q)} \| = \alpha \| \bSigma^{(0)} \|.
\]

To derive the upper bound $C_{\scriptscriptstyle \bSigma^{(q)}}$ on $\|\bSigma^{(0)} (\bSigma^{(q)})^{-1}\|$, we first recall the spectral decompositions of $\bSigma^{(0)}$ and $\bSigma^{(q)}$ in Assumption \ref{assumption:model} with simplified notations:
\[
\bSigma^{(0)} = \bV\bLambda\bV^\top,  \qquad  \bSigma^{(q)} = \bU(\alpha\bLambda)\bU^\top,
\]
where $\bLambda = \Diag(\lambda_1^{(0)}, \lambda_2^{(0)}, \ldots, \lambda_p^{(0)})$, and $\bV^{(q)} \in \bbR^{p \times p}$ and $\bU \in \bbR^{p \times p}$ are orthogonal matrices whose columns consist of the eigenvectors of $\bSigma^{(0)}$ and $\bSigma^{(q)}$ respectively. We partition $\bV$ and $\bU$ by their leading $\tau^{\ast}$ and the rest $(p-\tau^{\ast})$ columns as follows:
\[
\bV = \begin{pmatrix} \bV_{\tau^{\ast}} & \bV_{-\tau^{\ast}} \end{pmatrix}, 
\qquad
\bU = \begin{pmatrix} \bU_{\tau^{\ast}} & \bU_{-\tau^{\ast}} \end{pmatrix} = \begin{pmatrix} \bV_{\tau^{\ast}} & \bU_{-\tau^{\ast}} \end{pmatrix},
\]
where $\bV_{\tau^{\ast}} \in \bbR^{p\times \tau^{\ast}}$ (resp. $\bU_{\tau^{\ast}} \in \bbR^{p\times \tau^{\ast}}$) consists of the leading $\tau^{\ast}$ eigenvectors of $\bSigma^{(0)}$ (resp. $\bSigma^{(q)}$) and
\[
\bV_{\tau^{\ast}} = \bU_{\tau^{\ast}}
\]
due to the alignment of the leading $\tau^{\ast}$ eigenvector pairs of $\bSigma^{(q)}$ and $\bSigma^{(0)}$. For
\[
\bSigma^{(0)} \big(\bSigma^{(q)}\big)^{-1} = \bV \bLambda \bV^\top \bU (\alpha\bLambda)^{-1} \bU^\top = \frac{1}{\alpha} \bV \bLambda \bV^\top \bU \bLambda^{-1} \bU^\top,
\]
we have
\[
\bV^\top \bU = \begin{pmatrix} 
\bV_{\tau^{\ast}}^\top\bU_{\tau^{\ast}} & \bV_{\tau^{\ast}}^\top\bU_{-\tau^{\ast}} \\
\bV_{-\tau^{\ast}}^\top\bU_{\tau^{\ast}} & \bV_{-\tau^{\ast}}^\top\bU_{-\tau^{\ast}} \end{pmatrix} 
= \begin{pmatrix} 
\bI_{\tau^{\ast}} & \zeros_{\tau^{\ast} \times (p-\tau^{\ast})} \\
\zeros_{(p-\tau^{\ast}) \times \tau^{\ast}} & \bV_{-\tau^{\ast}}^\top\bU_{-\tau^{\ast}} \end{pmatrix}.
\]
Denoting $\bLambda_{\tau^{\ast}} = \Diag(\lambda_1^{(0)}, \ldots, \lambda_{\tau^{\ast}}^{(0)})$ and $\bLambda_{-\tau^{\ast}} = \Diag(\lambda_{\tau^{\ast}+1}^{(0)}, \ldots, \lambda_p^{(0)})$ yields
\begin{align*}
\bSigma^{(0)} \big(\bSigma^{(q)}\big)^{-1} &= \frac{1}{\alpha} \bV \bLambda \begin{pmatrix} 
\bI_{\tau^{\ast}} & \zeros_{\tau^{\ast} \times (p-\tau^{\ast})} \\
\zeros_{(p-\tau^{\ast}) \times \tau^{\ast}} & \bV_{-\tau^{\ast}}^\top\bU_{-\tau^{\ast}} \end{pmatrix} \bLambda^{-1} \bU^\top \\
&= \frac{1}{\alpha} \bV \begin{pmatrix} 
\bI_{\tau^{\ast}} & \zeros_{\tau^{\ast} \times (p-\tau^{\ast})} \\
\zeros_{(p-\tau^{\ast}) \times \tau^{\ast}} &  \bLambda_{-\tau^{\ast}} \bV_{-\tau^{\ast}}^\top\bU_{-\tau^{\ast}} (\bLambda_{-\tau^{\ast}})^{-1}\end{pmatrix} \bU^{\top},
\end{align*}
where 
\begin{align*}
\big\| \bLambda_{-\tau^{\ast}} \bV_{-\tau^{\ast}}^\top\bU_{-\tau^{\ast}} (\bLambda_{-\tau^{\ast}})^{-1} \big\| 
&\leq \big\|\bLambda_{-\tau^{\ast}}\big\|  \big\|\bV_{-\tau^{\ast}}^\top\big\| \big\|\bU_{-\tau^{\ast}}\big\| \big\|(\bLambda_{-\tau^{\ast}})^{-1}\big\| \\
& \leq \frac{\lambda_{\tau^{\ast}+1}}{\lambda_p}.
\end{align*}
Therefore, we have
\begin{align*}
\big\| \bSigma^{(0)} \big(\bSigma^{(q)}\big)^{-1} \big\| &\leq \frac{1}{\alpha} \big\| \bV \big\| 
\norm{\begin{pmatrix} 
\bI_{\tau^{\ast}} & \zeros_{\tau^{\ast} \times (p-\tau^{\ast})} \\
\zeros_{(p-\tau^{\ast}) \times \tau^{\ast}} &  \bLambda_{-\tau^{\ast}} \bV_{-\tau^{\ast}}^\top\bU_{-\tau^{\ast}} (\bLambda_{-\tau^{\ast}})^{-1}\end{pmatrix}} \big\| \bU^\top \big\| \\
&\leq \frac{1}{\alpha} \Big( \norm{\bI_\tau^{\ast}} \vee \norm{\bLambda_{-\tau^{\ast}} \bV_{-\tau^{\ast}}^\top\bU_{-\tau^{\ast}} (\bLambda_{-\tau^{\ast}})^{-1}} \Big) \\
&\leq \frac{1}{\alpha} \left(\lambda_{\tau^{\ast}+1}^{(0)}/\lambda_p^{(0)} \right),   
\end{align*}
and by taking $C_{\scriptscriptstyle \bSigma^{(q)}} = \alpha^{-1} \big(\lambda_{\tau^{\ast}+1}^{(0)}/\lambda_p^{(0)}\big)$, the upper bound $\overline{\cB}_\HT$ in Equation~\eqref{cor2:bias_bound_covshift} becomes
\begin{align}
\overline{\cB}_\HT &= \left(\sqrt{\frac{r_0(\bSigma^{(0)}) + \eta}{n_0}} + \frac{r_0(\bSigma^{(0)}) + \eta}{n_0} \right) \big\| \bSigma^{(0)}\big\| \big\|\bdelta^{(q)}\big\|^2 \notag \\
&\quad + \frac{\lambda_{\tau^{\ast}+1}^{(0)}}{\lambda_p^{(0)}}  \left(\sqrt{\frac{r_0(\bSigma^{(0)}) + \eta}{n_q}} + \frac{r_0(\bSigma^{(0)}) + \eta}{n_q} \right) \big\|\bSigma^{(0)}\big\| \big\| \bbeta^{(0)} \big\|^2,\label{cor2:bias_bound_covshift2} 
\end{align}
where
\[
\frac{\lambda_{\tau^{\ast}+1}^{(0)}}{\lambda_p^{(0)}} \asymp 1 
\]
by the construction of $\tau^{\ast}$ in Equation~\eqref{def:tau_star}. Therefore, comparing $\overline{\cB}_\HM$ in \eqref{cor2:bias_bound_HM} under the homogeneous covariate case and $\overline{\cB}_\HT$ in \eqref{cor2:bias_bound_covshift2} under the covariate shift case (A), it follows that
\[
\overline{\cB}_\HM \asymp \overline{\cB}_\HT,
\]
i.e., the upper bounds are within a constant factor independent of $\alpha$. 

In contrast, under this covariate shift, the upper bound $\overline{\cV}$ in \eqref{cor2:var_bound} is adjusted to $\overline{\cV}_\HT$ as follows:
\begin{align*}
\overline{\cV}_\HT &:= \sigma_q^2 \left( \frac{\kstar_0}{n_q} + \frac{n_q}{R_{\kstar_0}(\bSigma^{(0)}) } \right) \left(\sqrt{\frac{r_0(\bSigma^{(0)}) + \xi}{n_0}} + \frac{r_0(\bSigma^{(0)}) + \xi}{n_0} \right) \frac{\big\| \bSigma^{(0)}\big\|}{ \alpha \lambda_{p}^{(0)}} \\
&= \frac{1}{\alpha} \overline{\cV}_\HM,
\end{align*}
since $\bSigma^{(q)}$ and $\bSigma^{(0)}$ have the same effective ranks (Definition \ref{def:effective_rank}) for all $k \geq 0$, and therefore they share the same minimum
\[
\kstar_q = \kstar_0,
\]
provided Assumption \ref{assumption:effective_rank} for $\bSigma^{(0)}$. Thus, the upper bound on the variance inflation is reduced by a factor of $\alpha > 1$, when compared to the upper bound $\overline{\cV}_\HM$ in \eqref{cor2:var_bound_HM}.

To summarize, the following holds for the covariate shift (A) in Corollary \ref{cor:freelunch} when compared to the homogeneous case $\bSigma^{(q)} = \bSigma^{(0)}$:
\begin{itemize}[left=5pt]
    \item The upper bound on the bias of the TM estimate remains identical up to a constant factor independent of $\alpha$.
    \item The upper bound on the variance inflation of the TM estimate is reduced by a factor of $\alpha > 1$.
\end{itemize}

\textbf{Case (B) of Corollary~\ref{cor:freelunch}.}~~Suppose $\bX^{(q)} = \bZ^{(q)}(\bSigma^{(q)})^{1/2}$ is the $q$-th source design matrix under the homogeneous case where $\bSigma^{(q)} = \bSigma^{(0)}$, where $\bZ^{(q)}$ is as specified in Assumption~\ref{assumption:model}. Recall that the bias and variance inflation of the TM estimate in Lemma \ref{lemma:TM_bv} are given by
\begin{align*}
\cB_\TM^{(q)} &= \bbeta^{(0)\top} \big(\bI_p-\bH^{(q)}\big) \bPi^{(0)} \big(\bI_p-\bH^{(q)}\big) \bbeta^{(0)} + \bdelta^{(q)\top} \bH^{(q)}\bPi^{(0)} \bH^{(q)} \bdelta^{(q)}  \\
& \quad -2 \bdelta^{(q)\top} \bH^{(q)}\bPi^{(0)} \big( \bI_p-\bH^{(q)}\big) \bbeta^{(0)},  \\
\cV_{\uparrow}^{(q)} &= \sigma_q^2 \Tr\left( \bX^{(q)\top} \big(\bX^{(q)}\bX^{(q)\top} \big)^{-2} \bX^{(q)} \bPi^{(0)}\right),
\end{align*}
where
\[
\bH^{(q)} = \bX^{(q)\top} \big( \bX^{(q)} \bX^{(q)\top} \big)^{-1}\bX^{(q)}.
\]
Let $\widetilde{\bX}^{(q)} = {\bZ}^{(q)}\big(\widetilde\bSigma^{(q)}\big)^{1/2}$ be the source design under the covariate shift (B) in Corollary \ref{cor:freelunch} where
\[
\widetilde\bSigma^{(q)} = \alpha \bSigma^{(0)}, \quad \alpha > 1,
\]
so that
\[
\widetilde{\bX}^{(q)} = \alpha^{1/2}\bX^{(q)}.
\]
Compared to the homogeneity case, the bias $\cB_\TM^{(q)}$ under this covariate shift remains unchanged, since 
\[
\widetilde{\bH}^{(q)} = \widetilde{\bX}^{(q)\top} \big( \widetilde{\bX}^{(q)} \widetilde{\bX}^{(q)\top} \big)^{-1}\widetilde{\bX}^{(q)} = \bH^{(q)}.
\]
On the other hand, the variance inflation $\cV_{\uparrow}^{(q)}$ is reduced by a factor of $\alpha$, since
\[
\widetilde{\bX}^{(q)\top} \left(\widetilde{\bX}^{(q)}\widetilde{\bX}^{(q)\top} \right)^{-2} \widetilde{\bX}^{(q)} = \alpha^{-1} \bX^{(q)\top} \left(\bX^{(q)}\bX^{(q)\top} \right)^{-2} \bX^{(q)}.
\]

To summarize, the following holds for the covariate shift (B) in Corollary \ref{cor:freelunch} when compared to the homogeneous case $\bSigma^{(q)} = \bSigma^{(0)}$:
\begin{itemize}[left=5pt]
\item The exact bias of the TM estimate remains the same.
\item The exact variance inflation of the TM estimate in case is reduced by a factor of $\alpha > 1$.
\end{itemize}
This completes the proof.
\end{proof}

\newpage
\section{Computation Algorithm}
\label{sec:algorithm}
\begin{algorithm}[H]
\caption{Informative-Weighted Transfer MNI (WTM)}
\label{alg:CV_WTM}
\KwIn{all datasets $\{(\bX^{(q)}, \by^{(q)})\}_{q=0}^Q$, number of folds $K$, detection threshold $\varepsilon_0 > 0$}
\KwOut{estimated index set of informative sources $\widehat{\cI}$, the WTM estimate $\hbeta_\WTM$}
\vspace{2pt}
$\{(\bX^{(0)[k]}, \by^{(0)[k]})\}_{k=1}^K \gets$ randomly partition the target dataset into $K$ folds of equal size $n_0/K$\\
\For{$q=1$ \KwTo $Q$}{
    \For{$k=1$ \KwTo $K$}{
    $(\bX^{(0)[-k]}, \by^{(0)[-k]}) \gets \{(\bX^{(0)[k]}, \by^{(0)[k]})\}_{k=1}^K \setminus (\bX^{(0)[k]}, \by^{(0)[k]})$\\
	$\hbeta_\M^{(0)[-k]} \gets$ MNI trained on the left-out target folds $(\bX^{(0)[-k]}, \by^{(0)[-k]})$ \\
	$\hbeta_\TM^{(q)[-k]} \gets$ $q$-th TM pre-trained with $(\bX^{(q)}, \by^{(q)})$ and fine-tuned on $(\bX^{(0)[-k]}, \by^{(0)[-k]})$ \\
	$\widehat{\cL}^{[k]}\big( \hbeta_\M^{(0)[-k]} \big) \gets \frac{1}{n_0/K} \big\|\by^{(0)[k]} - \bX^{(0)[k]} \hbeta_\M^{(0)[-k]}\big\|^2$:~per-fold loss of target-only MNI\\
    $\widehat{\cL}^{[k]}\big( \hbeta_\TM^{(q)[-k]} \big)  \gets \frac{1}{n_0/K} \big\|\by^{(0)[k]} - \bX^{(0)[k]} \hbeta_\TM^{(q)[-k]}\big\|^2$:~per-fold loss of $q$-th TM
    }
$\widehat{\cL}_\TM^{(q)} \gets \frac{1}{K} \sum_{k=1}^K\widehat{\cL}^{[k]}\big( \hbeta_\TM^{(q)[-k]} \big)$:~terminal CV loss of $q$-th TM
}
$\widehat{\cL}_\M^{(0)} \gets \frac{1}{K} \sum_{k=1}^K\widehat{\cL}^{[k]} \big(\hbeta_\M^{(0)[-k]}\big)$:~terminal CV loss of target-only MNI\\
$\hat\sigma_{\cL} \gets \sqrt{ \frac{1}{K-1}\sum_{k=1}^K \big(\widehat{\cL}^{[k]}(\hbeta_\M^{(0)[-k]})-\widehat{\cL}_\M^{(0)} \big)^2 } $\\
$\widehat{\cI} \gets \big\{q \in [Q] : \widehat{\cL}_\TM^{(q)} - \widehat{\cL}_\M^{(0)} \leq \varepsilon_0(\hat{\sigma}_{\cL} \vee 0.01)\big\}$\\[3pt] 
\eIf{$\widehat\cI = \emptyset$, i.e., no informative source is detected}{
$\hbeta_\WTM \gets \hbeta_\M^{(0)}$:~no knowledge transfer
}{
\For{$i \in \widehat\cI$}{
    $\hbeta_\TM^{(i)} \gets$ $i$-th TM pre-trained with informative $(\bX^{(i)}, \by^{(i)})$ and fine-tuned on $(\bX^{(0)}, \by^{(0)})$ \\
    $w_i \gets \big(\widehat{\cL}_0^{(i)}\big)^{-1}$:~weight initialized by the inverse CV loss of $i$-th TM
}
$w_i \gets w_i / \big(\sum_{i \in \widehat\cI}w_i\big)$ for all $i \in \widehat\cI$:~weight normalization
}
$\hbeta_\WTM \gets \sum_{i \in \widehat\cI}w_i \hbeta_\TM^{(i)}$
\end{algorithm}

\section{Additional Numerical Experiments}
\label{sec:add_simulation}
This section presents additional experimental results along with their implications. The experimental setups are identical in Section \ref{sec:simulation}. The average excess risk over 50 independent simulations is plotted against each value of $p \in \{300,400,\ldots,1000\}$. When implementing Algorithm \ref{alg:CV_WTM}, we set $K=5$ and $\varepsilon_0=1/2$. As for the ground truth in Equation~\eqref{eq:ground_truth} given by
\begin{equation*}
\by^{(q)} = \bX^{(q)}\bbeta^{(q)} + \beps^{(q)} = \left(\bZ^{(q)}(\bSigma^{(q)})^{1/2}\right)\left(\bbeta^{(0)}+\bdelta^{(q)}\right) + \beps^{(q)}, \quad q \in \zerotoQ,
\end{equation*}
where $\bdelta^{(0)} \equiv \zeros_p$, all parametric configurations are as specified in Section \ref{sec:simulation}. See Appendix \ref{sec:parameter_generation} for more details on the generating process of the deterministic parameters $\{\bbeta^{(0)}, (\bdelta^{(q)}, \bSigma^{(q)})\}_{q=0}^Q$.

\subsection{Benign Overfitting Experiments}
\label{sec:benign_experiments_appendix}
Here, we replicate the experimental setup used for Figure~\hyperref[fig:benign]{1.(a)} under benign overfitting, but now compare to three single-source versions of the pooled-MNI~\citep{Song2024_pooledMNI}. Specifically, for each of $Q=3$ source datasets, we employ the following forms of pooled-MNI:
\begin{align*}
\hbeta_\PM^{(1)} &:= \argmin_{\bbeta \in \bbR^p} \Big\{ \norm{\bbeta}: \bX^{(q)}\bbeta = \by^{(q)}, \quad q \in \{0, 1\} \Big\}, \\
\hbeta_\PM^{(2)} &:= \argmin_{\bbeta \in \bbR^p} \Big\{ \norm{\bbeta}: \bX^{(q)}\bbeta = \by^{(q)}, \quad q \in \{0, 2\} \Big\}, \\
\hbeta_\PM^{(3)} &:= \argmin_{\bbeta \in \bbR^p} \Big\{ \norm{\bbeta}: \bX^{(q)} \bbeta = \by^{(q)}, \quad q \in \{0,3\} \Big\}.
\end{align*}
That is, we consider each pooled-MNI as a single-source transfer task, by having each $\hbeta_\PM^{(q)}$ interpolate the target and only the $q$-th source. 

\begin{figure*}[h]
    \centering
    \includegraphics[width=0.98\linewidth]{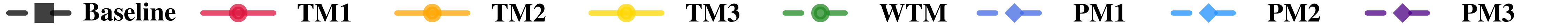}\\
    \makebox[\linewidth][c]{%
        \subfigure[$\mathbf{SSR}=(0, 0.3, 0.6)$]{\includegraphics[width=0.32\linewidth]{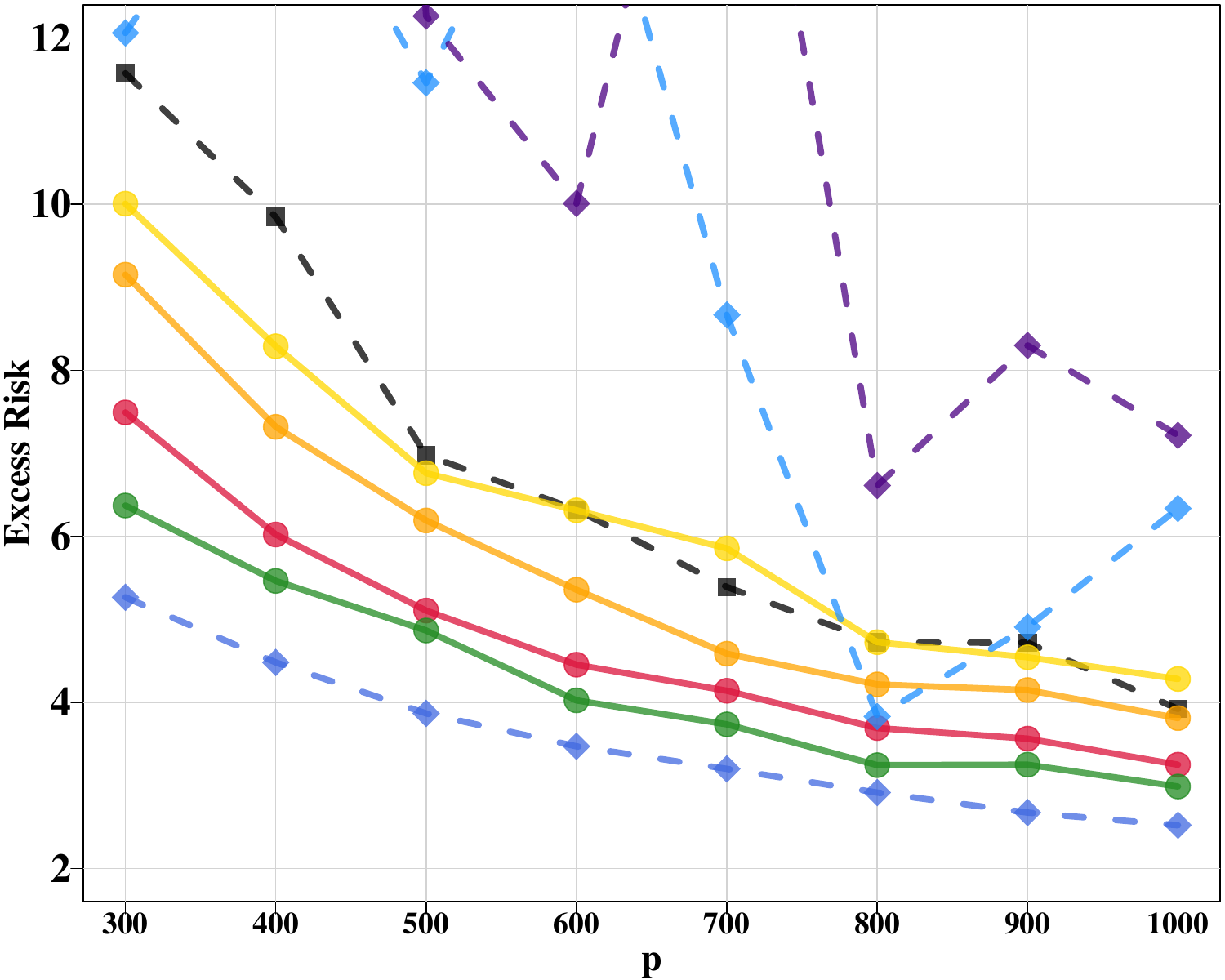}}
        \hspace{1em}%
        \subfigure[$\mathbf{SSR}=(0, 0.3, 0.6)$]{\includegraphics[width=0.32\linewidth]{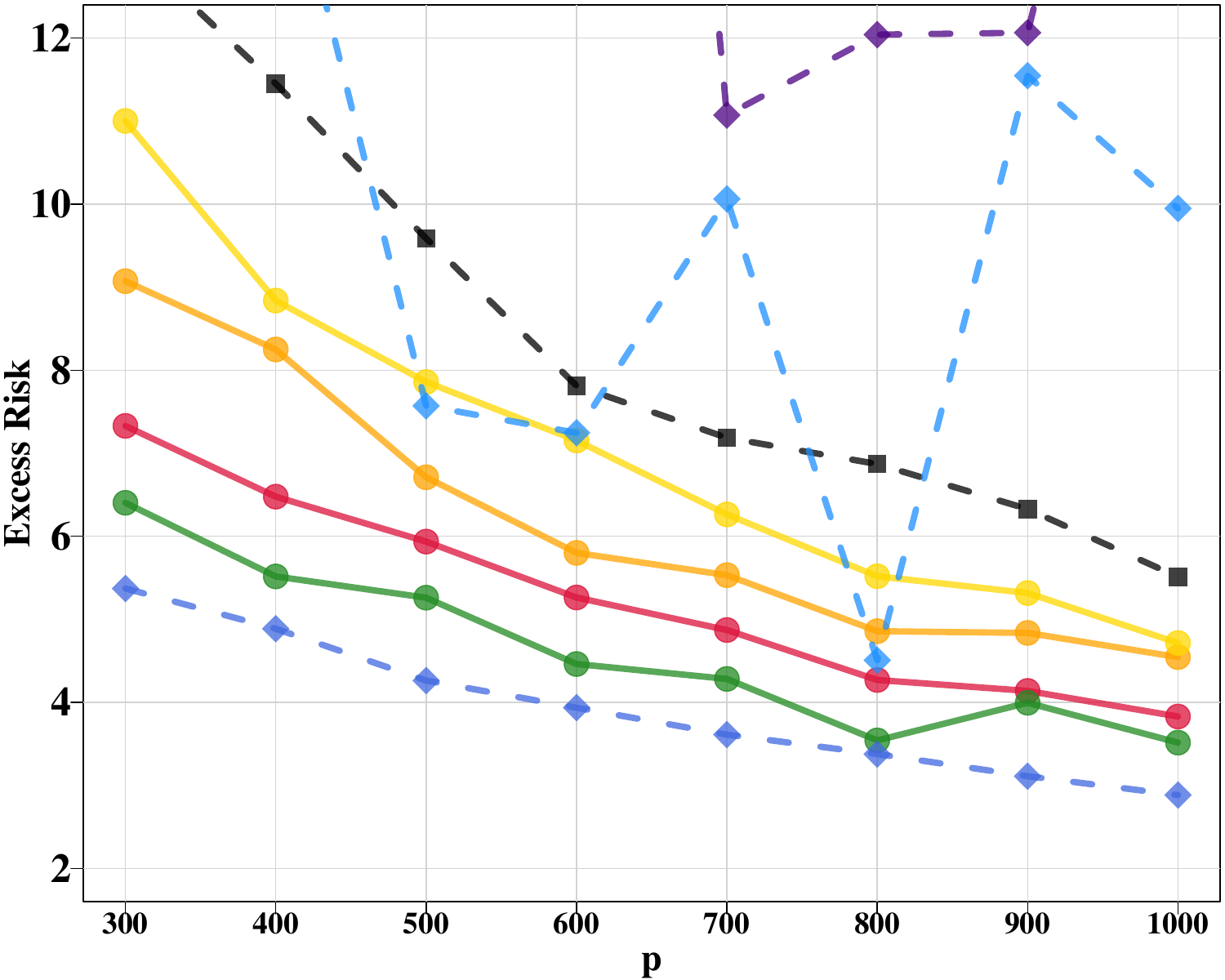}}
    }\\
    \caption{We compare the performance of our proposed methods to single-source-pooled-MNIs under model shift, with the same setting as Figure~\hyperref[fig:benign]{1.(a)}. Figure (a) re-uses the exact seeds for Figure~\hyperref[fig:benign]{1.(a)}, so the excess risk curves for $\hbeta_\M^{(0)}$, $\hbeta_\TM^{(q)}$, and $\hbeta_\WTM$ are identical across the two figures. In contrast, (b) uses distinct seeds, hence its curves differ from those in Figure~\hyperref[fig:benign]{1.(a)}.}
    \label{fig:benign2}
\end{figure*}

In Fig.~\ref{fig:benign2}, the first source has no distribution shift, as $\bSigma^{(1)}=\bSigma^{(0)}=\bLambda^{(0)}$ where $\bLambda^{(0)}$ is as specified in Section \ref{sec:simulation} and $\SSR_1=0$. While $\hbeta_\TM^{(1)}$ pre-trained with the first source clearly outperforms the target-only MNI, the first-source-pooled-MNI $\hbeta_\PM^{(1)}$ even performs better. This confirms the strength of the early-fusion approach of \citet{Song2024_pooledMNI} for leveraging source data when the source and target distributions are exactly identical.

However, the effectiveness of our approach is much more pronounced in terms of its robustness to model shift. Once model shift is introduced for the second and third sources with $\SSR_2=0.3$ and $\SSR_3=0.6$ respectively, the depicted curves change markedly. Both corresponding pooled-MNIs, $\hbeta_\PM^{(2)}$ and $\hbeta_\PM^{(3)}$, not only undergo negative transfer but also show unstable convergence in excess risk. On the other hand, both TM estimates $\hbeta_\TM^{(2)}$ and $\hbeta_\TM^{(3)}$ exhibit steadily decreasing excess risk with $\hbeta_\TM^{(2)}$ consistently inducing positive transfer. Even under model shift severe as $\SSR_3=0.6$, $\hbeta_\TM^{(3)}$ still matches (Fig.~\hyperref[fig:benign2]{3.(a)}) or surpasses (Fig.~\hyperref[fig:benign2]{3.(b)}) the target-only MNI. Overall, under the present experimental design, the model shift severe as $\SSR=0.6$ appears to mark a threshold for the TM estimate in inducing positive transfer, whereas it deteriorates the performance of pooled-MNI below the target-only baseline.

\subsection{Harmless Interpolation Experiments}
\label{sec:harmless_experiments_appendix}
We extend the experiments in Section \ref{sec:harmless_experiments} to the case where $Q=1$ and the target sample size varies. We first set $n_1$ to be the optimal transfer size $n_1^{\ast}$ for $\hbeta_\TM^{(1)}$ as identified in Corollary \ref{cor:isotropic}:
\begin{equation}
\label{eq:n1_star}
n_1^{\ast} = \left\lfloor p-1-\sqrt{\frac{p(p-1)}{\SNR(1-\SSR_1)}} \right\rfloor,
\end{equation}
where we use the floor function to ensure the optimum be an integer.

While $n_0 = 50$ is fixed throughout Section \ref{sec:harmless_experiments}, we now optimize $n_0$ for the pooled-MNI $\hbeta_\PM$ as suggested by \citet{Song2024_pooledMNI}. Specifically, under the same isotropic Gaussian setting as our experiment, Corollary 3.4 in \citet{Song2024_pooledMNI} gives 
\begin{equation}
\label{eq:n0_star}
n_0^{\ast} ~=~ \underset{n_0: ~ n_0+n_1^{\ast} < p}{\argmin} ~ \cR\big(\hbeta_\PM\big) = \left\lfloor  p - n_1^{\ast} - \sqrt{\frac{p^2}{\SNR} + n_1^{\ast} \cdot \SSR_1}  \right\rfloor ~\vee~ 0.
\end{equation}
That is, given each value of $p$, $n_1^{\ast}$, $\SNR$, and $\SSR_1$, the optimal target sample size that minimizes the excess risk of $\hbeta_\PM$ is determined by $n_0^{\ast}$, which can be zero. The constraint in Equation~\eqref{eq:n0_star} ensures that the total training sample size $n_0^{\ast}+n_1^{\ast}$ is less than $p$, so $\hbeta_\PM$ always operates in an overparameterized regime enabling interpolation.

\begin{figure*}[ht]
    \centering
    \includegraphics[width=0.98\linewidth]{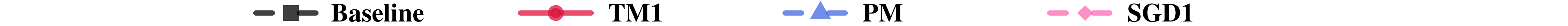}\\
    \subfigure[$\SSR_1=0.1$]{\includegraphics[width=0.32\linewidth]{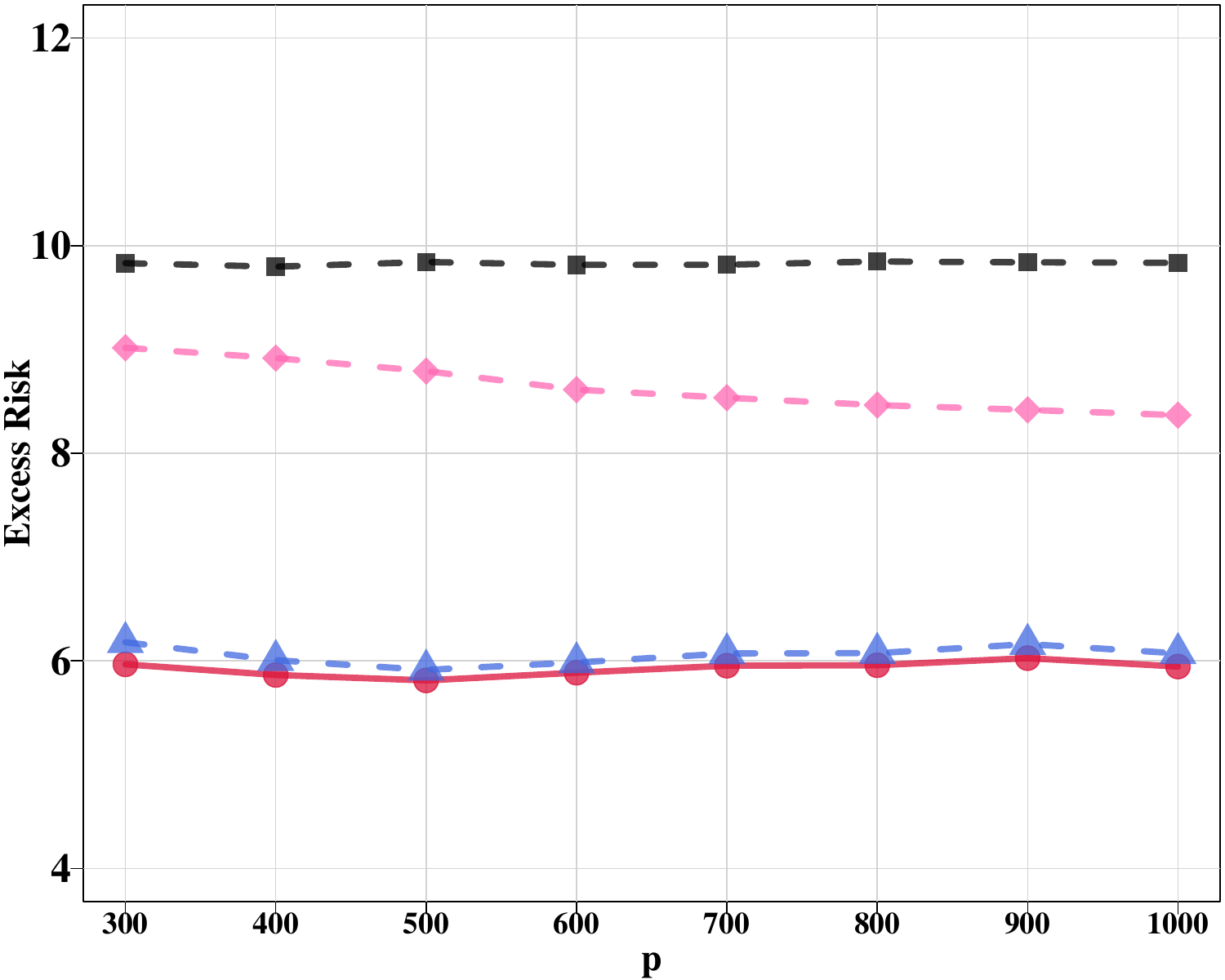}}
    \hfill
    \subfigure[$\SSR_1=0.4$]{\includegraphics[width=0.32\linewidth]{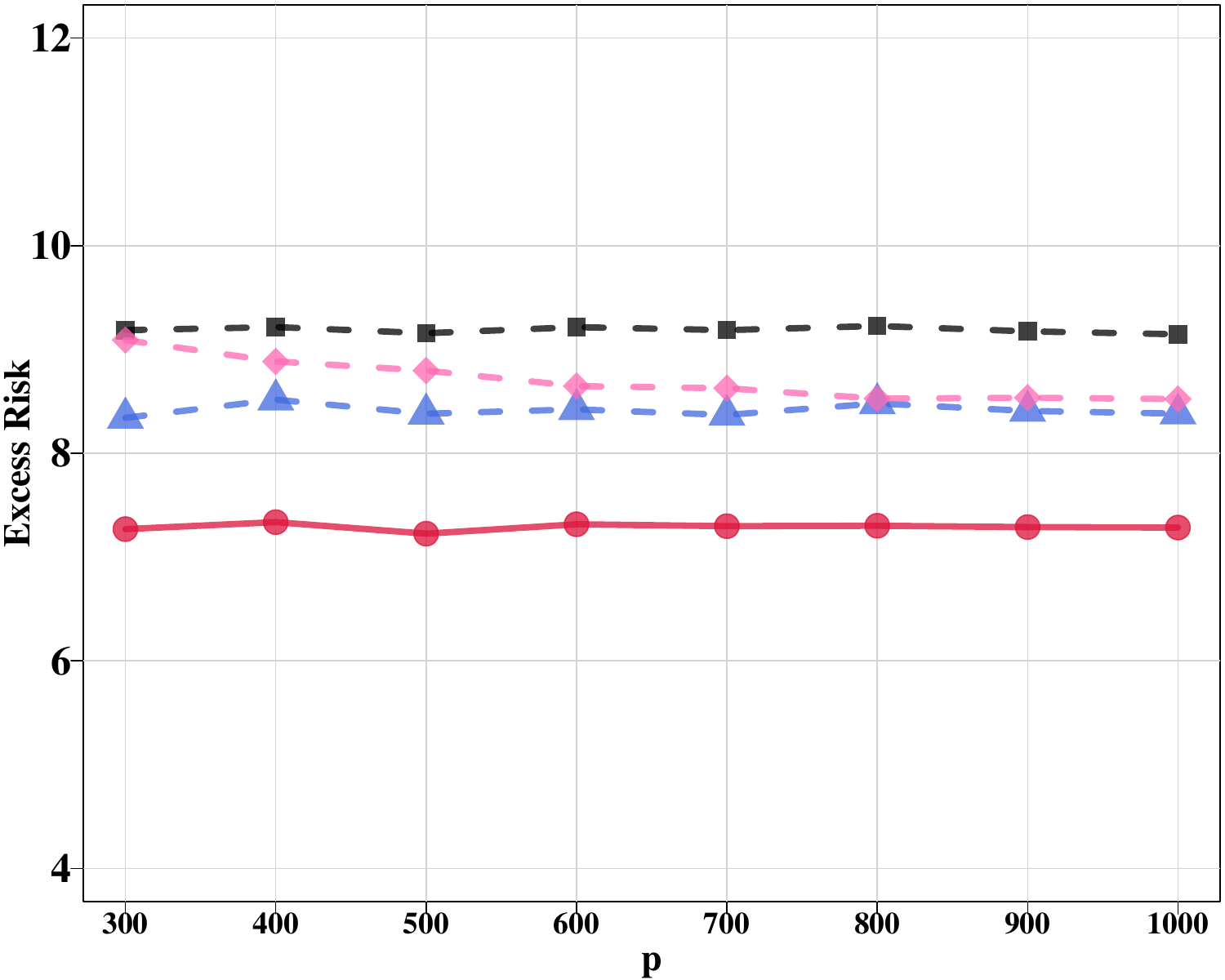}}
    \hfill
    \subfigure[$\SSR_1=0.7$]{\includegraphics[width=0.32\linewidth]{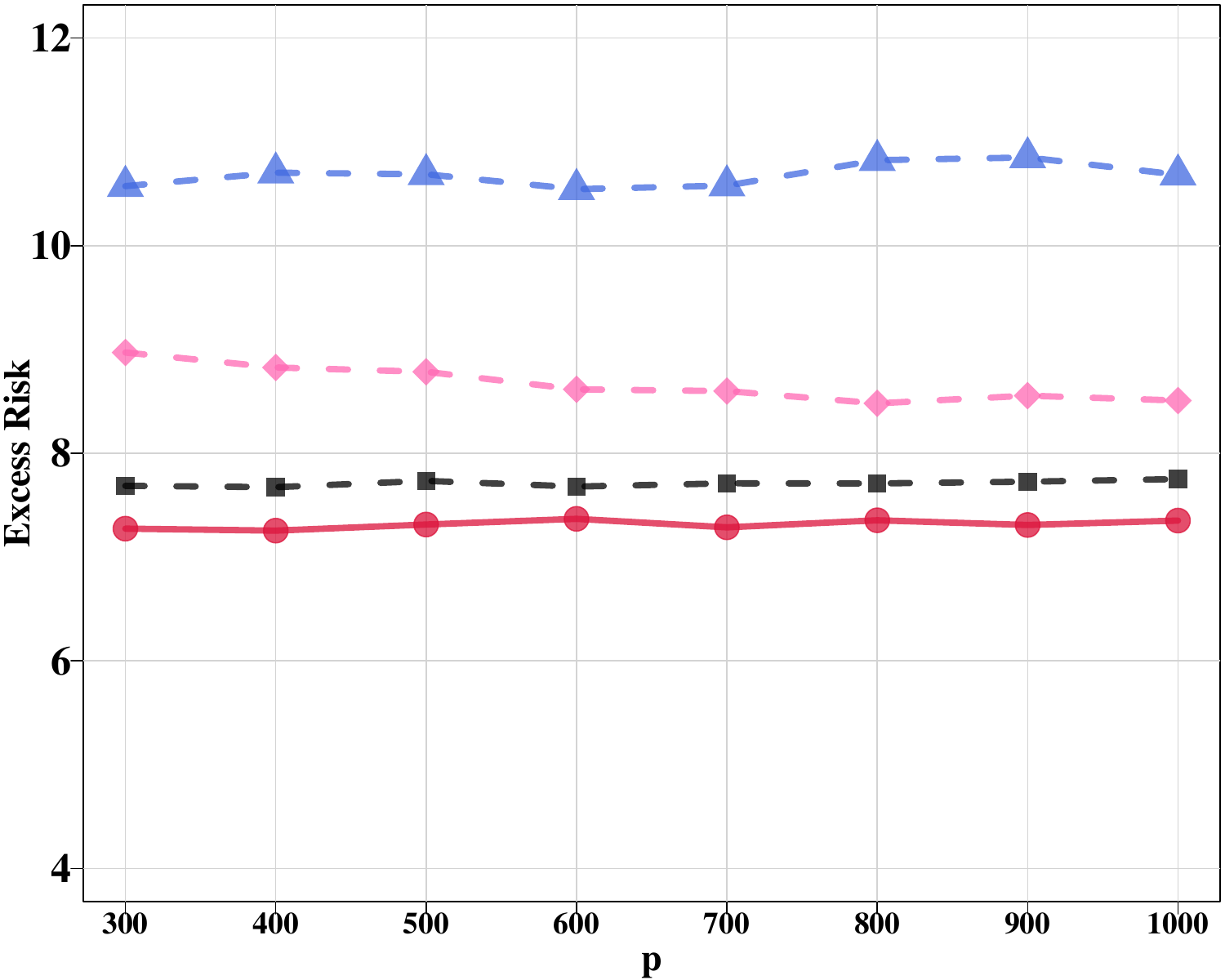}}
    \caption{As in Figure~\hyperref[fig:harmless]{2.(a)}, the target and source covariates are i.i.d.~$\cN(\zeros_p,\bI_p)$ with $\SNR=10$. Once $n_1=n_1^{\ast}$ is determined by \eqref{eq:n1_star}, the corresponding target sample size $n_0^{\ast}$ follows from \eqref{eq:n0_star}.}
    \label{fig:harmless2}
\end{figure*}

\begin{table}[h]
\caption{Optimal target and source sample sizes $(n_0^{\ast}, n_1^{\ast})$ for each value of $p$ in Figure~\ref{fig:harmless2}}
\label{table:samplesize}
\vspace{\baselineskip}
\centering
\small
\begin{tabular}{@{}cccccccccc@{}}
\toprule
Figure & Optimality & \multicolumn{8}{c}{Sample size for each $p \in \{300,400,\ldots,1000\}$} \\ \midrule
\multirow{2}{*}{(a)} & $n_0^{\ast}$ & 6 & 8 & 9 & 11 & 13 & 14 & 16 & 18 \\
 & $n_1^{\ast}$ & 199 & 265 & 332 & 399 & 465 & 532 & 599 & 665 \\ \midrule
\multirow{2}{*}{(b)} & $n_0^{\ast}$ & 28 & 38 & 46 & 55 & 65 & 74 & 84 & 93 \\
 & $n_1^{\ast}$ & 176 & 235 & 295 & 354 & 413 & 472 & 531 & 590 \\ \midrule
\multirow{2}{*}{(c)} & $n_0^{\ast}$ & 78 & 105 & 131 & 157 & 183 & 209 & 235 & 262 \\
 & $n_1^{\ast}$ & 126 & 168 & 210 & 252 & 295 & 337 & 379 & 421 \\ \bottomrule
\end{tabular}
\end{table}

We observe an interesting pattern from Table \ref{table:samplesize}: for each value of $p$, the total sample size $n_0^{\ast} + n_1^{\ast}$ remains identical across Figures~\hyperref[fig:harmless2]{4.(a)-(c)}. (In some instances, the totals differ by 1 due to the use of floor function when computing $n_0^{\ast}$ and $n_1^{\ast}$.) To provide context, once $n_1^{\ast}$ source samples are provided, $\hbeta_\PM$ appears to select an appropriate number of target samples $n_0^{\ast}$ to balance the overall quality of the training inputs. Specifically, in Fig.~\hyperref[fig:harmless2]{4.(a)}, the pooled-MNI already has a large amount of high-quality inputs with $\SSR = 0.1$ obtained from the source, so it requires fewer additional target samples. In contrast, in Fig.~\hyperref[fig:harmless2]{4.(c)}, the training inputs are highly ``corrupted'' by source data of poor quality with $\SSR = 0.7$, and consequently, the model pools much more target samples to ``dilute'' the corruption. This results in a significant increase in $n_0^{\ast}$ compared to Fig.~\hyperref[fig:harmless2]{4.(a)}. This decision-making process of the pooled-MNI appears natural, considering that the estimate is an early-fusion approach \citep{Song2024_pooledMNI}, where the source is integrated at the input level and then jointly learned with the target data.

When comparing $\hbeta_\TM^{(1)}$ with $\hbeta_\PM$, their performances are nearly identical in Fig.~\hyperref[fig:harmless2]{4.(a)}, with our method performing slightly better. In fact, if $\SSR = 0$, i.e., no model shift, we have $n_0^{\ast} = 0$ by Equation~\eqref{eq:n0_star}. In that case, both PM and TM estimates are reduced to the source-only MNI $\hbeta_\M^{(1)}$ trained on $n_1^{\ast}$ samples, resulting in identical performance. However, once under model shift, $\hbeta_\TM^{(1)}$ is consistently superior even when the model shift is so low that it is favorable for $\hbeta_\PM$, as illustrated in Fig.~\hyperref[fig:harmless2]{4.(a)} where $\SSR = 0.1$. The superior performance of $\hbeta_\TM^{(1)}$ is further highlighted in Fig.~\hyperref[fig:harmless2]{4.(c)} where model shift is severe as $\SSR = 0.7$, as it is the only method that consistently induces positive transfer.

Throughout Fig.~\ref{fig:harmless2}, the performance of SGD remains consistent regardless of the degree of model shift. This is plausible because, while the total sample size $n_0^{\ast}+n_1^{\ast}$ is fixed for each value of $p$, the estimate pre-trains with fewer source samples and instead fine-tunes with more target samples in proportion to the degree of model shift. 

\section{Experimental Details}
\label{sec:simulation_details}
This section provides further details on the experiments presented in Section \ref{sec:simulation} and Appendix \ref{sec:add_simulation}. All numerical experiments are conducted by $\mathtt{R}$ using a standard laptop (ASUS ZenBook UX331UN, Intel(R) Core(TM) i7-8550U CPU @ 1.80GHz, 16 GB 2133 MHz DDR3). All sourcecodes we release are scripted from scratch, relying on standard $\mathtt{R}$ packages such as $\mathtt{MASS}$ and $\mathtt{mvtnorm}$.

For the evaluation metric, we compute the excess risk of an estimate $\hbeta_t$ trained at the $t$-th simulation run by
\[
\big(\hbeta_t - \bbeta^{(0)}\big)^\top\bSigma^{(0)}\big(\hbeta_t - \bbeta^{(0)}\big),
\]
for $t \in [50]$, where 50 is the number of independent simulations conducted. The deterministic parameters $\bbeta^{(0)}$ and $\bSigma^{(0)}$ are generated once and kept fixed across the 50 simulations, guaranteeing they are agnostic to the replicate index $t$. We then report the average excess risk
\[
\frac{1}{50}\sum_{t=1}^{50} \big(\hbeta_t - \bbeta^{(0)}\big)^\top\bSigma^{(0)}\big(\hbeta_t - \bbeta^{(0)}\big).
\]

To ensure the stable convergence of SGD \citep{Wu2022_SGD} we test as a benchmark transfer task, we tune the initial pre-training and fine-tuning learning rates for each experimental setup. While \citet{Wu2022_SGD} adopted an initial learning rate of $0.1$ (both for pre-training and fine-tuning) in their experiments, we find this value leads to divergence across all of our experimental setups. Accordingly, we reduce the learning rates based on preliminary implementations; for example, we use $0.001$ for both pre-training and fine-tuning in Fig.~\hyperref[fig:harmless]{2.(a)}.

\subsection{Parameter Generation Process}
\label{sec:parameter_generation}
In the formulation of the ground truth model
\[
\by^{(q)} = \left(\bZ^{(q)}(\bSigma^{(q)})^{1/2}\right)\left(\bbeta^{(0)}+\bdelta^{(q)}\right) + \beps^{(q)}, \quad q \in \zerotoQ,
\]
where $\bdelta^{(0)} \equiv \zeros_p$, only the random quantities $\{(\bZ^{(q)},\beps^{(q)})\}_{q=0}^Q$ vary across the 50 simulations. The deterministic parameters $\{\bbeta^{(0)}, (\bdelta^{(q)}, \bSigma^{(q)})\}_{q=0}^Q$ remain fixed across all runs. While these fixed quantities are sampled from specific probability distributions, we generate them using a fixed random seed, thereby ensuring their deterministic nature.

\textbf{Model coefficients.}~~As specified in Section \ref{sec:simulation}, the target coefficient $\bbeta^{(0)}$ and the contrast vectors $\{\bdelta^{(q)}\}_{q=1}^Q$ are given by
\[
\bbeta^{(0)} = S^{1/2}\bu^{(0)}, \qquad \bdelta^{(q)} = (\SSR_q \cdot S)^{1/2}\bu^{(q)},
\]
where $(S,\SSR_q)$ are pre-specified hyperparameters. Each $p$-dimensional vector $\bu^{(q)}$ is independently sampled from $\cN(\zeros_p, \bI_p)$ and then normalized so that $\|\bu^{(q)}\|=1$ for all $q \in [Q]_0$. Each $\bu^{(q)}$ is held fixed across the 50 replications.

\textbf{Spectral decomposition of covariance.}~~Random sampling for covariance generation is used only in producing Figures \hyperref[fig:benign]{1.(b)} and \hyperref[fig:benign]{1.(c)}. In these figures, the source covariances are defined via the spectral decomposition
\[
\bSigma^{(q)} = \bV^{(q)}\bLambda^{(q)}\bV^{(q)\top}, \quad q \in [Q],
\]
where $\{\bLambda^{(q)}\}_{q=1}^Q$ are fixed as specified in Section \ref{sec:simulation}, and $\{\bV^{(q)}\}_{q=1}^Q$ are independently sampled from the orthogonal group $\mathcal{O}_p := \{\bQ \in \bbR^{p \times p}: \bQ^\top\bQ=\bQ\bQ^\top=\bI_p\}$. Specifically, we first independently sample the matrix $\bG^{(q)} \in \bbR^{p \times p}$ with i.i.d.~standard Gaussian entries (which is then fixed across the 50 simulations), perform its QR decomposition $\bG^{(q)} = \bQ^{(q)}\bR^{(q)}$ and assign $\bV^{(q)} = \bQ^{(q)}$. Consequently, the matrices $\{\bV^{(q)}\}_{q=1}^Q$ are held fixed.

\newpage
\subsection{Variability of Experiment Results}
\label{sec:simulation_sd}
{
\centering
\subfootnotesize
\begin{longtable}{@{}llllllllll@{}}
\caption{Mean and standard deviation (in parenthesis) of excess risk}
\label{table:risk}\\
\toprule
\multirow{2}{*}{Figure} & \multirow{2}{*}{Estimate} & \multicolumn{8}{c}{$p$} \\* \cmidrule(l){3-10} 
 &  & 300 & 400 & 500 & 600 & 700 & 800 & 900 & 1000 \\* \midrule
\endfirsthead%
\endhead
\multirow{9}{*}{1.(a)} &Target-only MNI & \begin{tabular}[c]{@{}l@{}}11.577\\ (1.686)\end{tabular} & \begin{tabular}[c]{@{}l@{}}9.846\\ (1.734)\end{tabular} & \begin{tabular}[c]{@{}l@{}}6.978\\ (1.008)\end{tabular} & \begin{tabular}[c]{@{}l@{}}6.321\\ (1.187)\end{tabular} & \begin{tabular}[c]{@{}l@{}}5.389\\ (0.796)\end{tabular} & \begin{tabular}[c]{@{}l@{}}4.721\\ (1.030)\end{tabular} & \begin{tabular}[c]{@{}l@{}}4.719\\ (0.743)\end{tabular} & \begin{tabular}[c]{@{}l@{}}3.920\\ (0.701)\end{tabular} \\* \cmidrule(l){2-10} 
 & TM1 & \begin{tabular}[c]{@{}l@{}}7.491\\ (1.396)\end{tabular} & \begin{tabular}[c]{@{}l@{}}6.021\\ (1.019)\end{tabular} & \begin{tabular}[c]{@{}l@{}}5.105\\ (0.915)\end{tabular} & \begin{tabular}[c]{@{}l@{}}4.453\\ (0.713)\end{tabular} & \begin{tabular}[c]{@{}l@{}}4.140\\ (0.709)\end{tabular} & \begin{tabular}[c]{@{}l@{}}3.690\\ (0.687)\end{tabular} & \begin{tabular}[c]{@{}l@{}}3.563\\ (0.503)\end{tabular} & \begin{tabular}[c]{@{}l@{}}3.249\\ (0.560)\end{tabular} \\* \cmidrule(l){2-10} 
 & TM2 & \begin{tabular}[c]{@{}l@{}}9.151\\ (1.602)\end{tabular} & \begin{tabular}[c]{@{}l@{}}7.318\\ (1.404)\end{tabular} & \begin{tabular}[c]{@{}l@{}}6.191\\ (1.153)\end{tabular} & \begin{tabular}[c]{@{}l@{}}5.356\\ (0.905)\end{tabular} & \begin{tabular}[c]{@{}l@{}}4.586\\ (0.812)\end{tabular} & \begin{tabular}[c]{@{}l@{}}4.216\\ (0.713)\end{tabular} & \begin{tabular}[c]{@{}l@{}}4.148\\ (0.688)\end{tabular} & \begin{tabular}[c]{@{}l@{}}3.810\\ (0.695)\end{tabular} \\* \cmidrule(l){2-10} 
 & TM3 & \begin{tabular}[c]{@{}l@{}}10.006\\ (2.021)\end{tabular} & \begin{tabular}[c]{@{}l@{}}8.289\\ (1.378)\end{tabular} & \begin{tabular}[c]{@{}l@{}}6.760\\ (1.128)\end{tabular} & \begin{tabular}[c]{@{}l@{}}6.313\\ (1.164)\end{tabular} & \begin{tabular}[c]{@{}l@{}}5.855\\ (0.787)\end{tabular} & \begin{tabular}[c]{@{}l@{}}4.724\\ (0.934)\end{tabular} & \begin{tabular}[c]{@{}l@{}}4.543\\ (0.795)\end{tabular} & \begin{tabular}[c]{@{}l@{}}4.282\\ (0.678)\end{tabular} \\* \cmidrule(l){2-10} 
 & WTM & \begin{tabular}[c]{@{}l@{}}6.370\\ (1.423)\end{tabular} & \begin{tabular}[c]{@{}l@{}}5.462\\ (1.154)\end{tabular} & \begin{tabular}[c]{@{}l@{}}4.865\\ (1.083)\end{tabular} & \begin{tabular}[c]{@{}l@{}}4.024\\ (0.584)\end{tabular} & \begin{tabular}[c]{@{}l@{}}3.735\\ (0.382)\end{tabular} & \begin{tabular}[c]{@{}l@{}}3.244\\ (0.553)\end{tabular} & \begin{tabular}[c]{@{}l@{}}3.249\\ (0.469)\end{tabular} & \begin{tabular}[c]{@{}l@{}}2.986\\ (0.882)\end{tabular} \\* \cmidrule(l){2-10} 
 & Pooled-MNI & \begin{tabular}[c]{@{}l@{}}71.942\\ (16.986)\end{tabular} & \begin{tabular}[c]{@{}l@{}}39.543\\ (3.745)\end{tabular} & \begin{tabular}[c]{@{}l@{}}12.258\\ (1.772)\end{tabular} & \begin{tabular}[c]{@{}l@{}}12.788\\ (1.184)\end{tabular} & \begin{tabular}[c]{@{}l@{}}11.343\\ (1.484)\end{tabular} & \begin{tabular}[c]{@{}l@{}}4.805\\ (0.665)\end{tabular} & \begin{tabular}[c]{@{}l@{}}5.795\\ (0.857)\end{tabular} & \begin{tabular}[c]{@{}l@{}}5.638\\ (0.582)\end{tabular} \\* \cmidrule(l){2-10} 
 & SGD1 & \begin{tabular}[c]{@{}l@{}}14.864\\ (1.379)\end{tabular} & \begin{tabular}[c]{@{}l@{}}11.359\\ (1.069)\end{tabular} & \begin{tabular}[c]{@{}l@{}}8.749\\ (0.957)\end{tabular} & \begin{tabular}[c]{@{}l@{}}7.558\\ (0.742)\end{tabular} & \begin{tabular}[c]{@{}l@{}}6.421\\ (0.467)\end{tabular} & \begin{tabular}[c]{@{}l@{}}5.627\\ (0.581)\end{tabular} & \begin{tabular}[c]{@{}l@{}}4.956\\ (0.427)\end{tabular} & \begin{tabular}[c]{@{}l@{}}4.470\\ (0.447)\end{tabular} \\* \cmidrule(l){2-10} 
 & SGD2 & \begin{tabular}[c]{@{}l@{}}16.129\\ (1.751)\end{tabular} & \begin{tabular}[c]{@{}l@{}}13.576\\ (1.983)\end{tabular} & \begin{tabular}[c]{@{}l@{}}9.251\\ (1.024)\end{tabular} & \begin{tabular}[c]{@{}l@{}}9.444\\ (1.322)\end{tabular} & \begin{tabular}[c]{@{}l@{}}7.095\\ (1.020)\end{tabular} & \begin{tabular}[c]{@{}l@{}}5.521\\ (0.608)\end{tabular} & \begin{tabular}[c]{@{}l@{}}4.986\\ (0.471)\end{tabular} & \begin{tabular}[c]{@{}l@{}}5.519\\ (0.937)\end{tabular} \\* \cmidrule(l){2-10} 
 & SGD3 & \begin{tabular}[c]{@{}l@{}}18.504\\ (3.544)\end{tabular} & \begin{tabular}[c]{@{}l@{}}18.032\\ (3.918)\end{tabular} & \begin{tabular}[c]{@{}l@{}}10.328\\ (1.336)\end{tabular} & \begin{tabular}[c]{@{}l@{}}7.729\\ (0.970)\end{tabular} & \begin{tabular}[c]{@{}l@{}}7.766\\ (1.359)\end{tabular} & \begin{tabular}[c]{@{}l@{}}5.744\\ (0.789)\end{tabular} & \begin{tabular}[c]{@{}l@{}}6.384\\ (1.169)\end{tabular} & \begin{tabular}[c]{@{}l@{}}5.488\\ (0.683)\end{tabular} \\* \midrule
\multirow{9}{*}{1.(b)} &Target-only MNI & \begin{tabular}[c]{@{}l@{}}11.577\\ (1.686)\end{tabular} & \begin{tabular}[c]{@{}l@{}}9.846\\ (1.734)\end{tabular} & \begin{tabular}[c]{@{}l@{}}6.978\\ (1.008)\end{tabular} & \begin{tabular}[c]{@{}l@{}}6.321\\ (1.187)\end{tabular} & \begin{tabular}[c]{@{}l@{}}5.389\\ (0.796)\end{tabular} & \begin{tabular}[c]{@{}l@{}}4.721\\ (1.030)\end{tabular} & \begin{tabular}[c]{@{}l@{}}4.719\\ (0.743)\end{tabular} & \begin{tabular}[c]{@{}l@{}}3.920\\ (0.701)\end{tabular} \\* \cmidrule(l){2-10} 
 & TM1 & \begin{tabular}[c]{@{}l@{}}9.744\\ (1.836)\end{tabular} & \begin{tabular}[c]{@{}l@{}}7.975\\ (1.395)\end{tabular} & \begin{tabular}[c]{@{}l@{}}6.638\\ (1.179)\end{tabular} & \begin{tabular}[c]{@{}l@{}}5.591\\ (0.975)\end{tabular} & \begin{tabular}[c]{@{}l@{}}5.024\\ (0.832)\end{tabular} & \begin{tabular}[c]{@{}l@{}}4.531\\ (0.861)\end{tabular} & \begin{tabular}[c]{@{}l@{}}4.198\\ (0.741)\end{tabular} & \begin{tabular}[c]{@{}l@{}}3.858\\ (0.624)\end{tabular} \\* \cmidrule(l){2-10} 
 & TM2 & \begin{tabular}[c]{@{}l@{}}9.416\\ (1.695)\end{tabular} & \begin{tabular}[c]{@{}l@{}}7.695\\ (1.469)\end{tabular} & \begin{tabular}[c]{@{}l@{}}6.772\\ (1.217)\end{tabular} & \begin{tabular}[c]{@{}l@{}}6.047\\ (1.073)\end{tabular} & \begin{tabular}[c]{@{}l@{}}5.066\\ (0.771)\end{tabular} & \begin{tabular}[c]{@{}l@{}}4.592\\ (0.949)\end{tabular} & \begin{tabular}[c]{@{}l@{}}4.467\\ (0.717)\end{tabular} & \begin{tabular}[c]{@{}l@{}}3.860\\ (0.616)\end{tabular} \\* \cmidrule(l){2-10} 
 & TM3 & \begin{tabular}[c]{@{}l@{}}20.849\\ (4.591)\end{tabular} & \begin{tabular}[c]{@{}l@{}}14.105\\ (2.828)\end{tabular} & \begin{tabular}[c]{@{}l@{}}10.661\\ (1.985)\end{tabular} & \begin{tabular}[c]{@{}l@{}}8.655\\ (1.765)\end{tabular} & \begin{tabular}[c]{@{}l@{}}6.819\\ (1.165)\end{tabular} & \begin{tabular}[c]{@{}l@{}}5.995\\ (1.067)\end{tabular} & \begin{tabular}[c]{@{}l@{}}5.352\\ (0.938)\end{tabular} & \begin{tabular}[c]{@{}l@{}}4.600\\ (0.778)\end{tabular} \\* \cmidrule(l){2-10} 
 & WTM & \begin{tabular}[c]{@{}l@{}}8.648\\ (1.404)\end{tabular} & \begin{tabular}[c]{@{}l@{}}7.614\\ (1.858)\end{tabular} & \begin{tabular}[c]{@{}l@{}}6.288\\ (0.865)\end{tabular} & \begin{tabular}[c]{@{}l@{}}5.638\\ (1.115)\end{tabular} & \begin{tabular}[c]{@{}l@{}}4.853\\ (0.649)\end{tabular} & \begin{tabular}[c]{@{}l@{}}4.457\\ (1.181)\end{tabular} & \begin{tabular}[c]{@{}l@{}}4.464\\ (0.754)\end{tabular} & \begin{tabular}[c]{@{}l@{}}3.614\\ (0.478)\end{tabular} \\* \cmidrule(l){2-10} 
 & Pooled-MNI & \begin{tabular}[c]{@{}l@{}}75.355\\ (22.828)\end{tabular} & \begin{tabular}[c]{@{}l@{}}22.625\\ (4.058)\end{tabular} & \begin{tabular}[c]{@{}l@{}}14.484\\ (4.170)\end{tabular} & \begin{tabular}[c]{@{}l@{}}9.085\\ (1.799)\end{tabular} & \begin{tabular}[c]{@{}l@{}}7.697\\ (1.164)\end{tabular} & \begin{tabular}[c]{@{}l@{}}6.592\\ (1.320)\end{tabular} & \begin{tabular}[c]{@{}l@{}}5.476\\ (0.948)\end{tabular} & \begin{tabular}[c]{@{}l@{}}4.884\\ (0.738)\end{tabular} \\* \cmidrule(l){2-10} 
 & SGD1 & \begin{tabular}[c]{@{}l@{}}20.011\\ (2.690)\end{tabular} & \begin{tabular}[c]{@{}l@{}}14.223\\ (1.551)\end{tabular} & \begin{tabular}[c]{@{}l@{}}11.006\\ (1.328)\end{tabular} & \begin{tabular}[c]{@{}l@{}}9.630\\ (1.108)\end{tabular} & \begin{tabular}[c]{@{}l@{}}8.247\\ (0.963)\end{tabular} & \begin{tabular}[c]{@{}l@{}}6.970\\ (0.877)\end{tabular} & \begin{tabular}[c]{@{}l@{}}6.293\\ (0.616)\end{tabular} & \begin{tabular}[c]{@{}l@{}}5.710\\ (0.606)\end{tabular} \\* \cmidrule(l){2-10} 
 & SGD2 & \begin{tabular}[c]{@{}l@{}}19.957\\ (2.149)\end{tabular} & \begin{tabular}[c]{@{}l@{}}15.045\\ (1.341)\end{tabular} & \begin{tabular}[c]{@{}l@{}}11.291\\ (1.136)\end{tabular} & \begin{tabular}[c]{@{}l@{}}9.973\\ (1.102)\end{tabular} & \begin{tabular}[c]{@{}l@{}}8.543\\ (0.817)\end{tabular} & \begin{tabular}[c]{@{}l@{}}7.317\\ (0.835)\end{tabular} & \begin{tabular}[c]{@{}l@{}}6.500\\ (0.602)\end{tabular} & \begin{tabular}[c]{@{}l@{}}5.824\\ (0.531)\end{tabular} \\* \cmidrule(l){2-10} 
 & SGD3 & \begin{tabular}[c]{@{}l@{}}20.247\\ (2.226)\end{tabular} & \begin{tabular}[c]{@{}l@{}}15.198\\ (1.301)\end{tabular} & \begin{tabular}[c]{@{}l@{}}11.462\\ (1.135)\end{tabular} & \begin{tabular}[c]{@{}l@{}}9.916\\ (1.092)\end{tabular} & \begin{tabular}[c]{@{}l@{}}8.545\\ (0.820)\end{tabular} & \begin{tabular}[c]{@{}l@{}}7.332\\ (0.831)\end{tabular} & \begin{tabular}[c]{@{}l@{}}6.515\\ (0.591)\end{tabular} & \begin{tabular}[c]{@{}l@{}}5.763\\ (0.529)\end{tabular} \\* \midrule
\multirow{9}{*}{1.(c)} &Target-only MNI & \begin{tabular}[c]{@{}l@{}}11.577\\ (1.686)\end{tabular} & \begin{tabular}[c]{@{}l@{}}9.846\\ (1.734)\end{tabular} & \begin{tabular}[c]{@{}l@{}}6.978\\ (1.008)\end{tabular} & \begin{tabular}[c]{@{}l@{}}6.321\\ (1.187)\end{tabular} & \begin{tabular}[c]{@{}l@{}}5.389\\ (0.796)\end{tabular} & \begin{tabular}[c]{@{}l@{}}4.721\\ (1.030)\end{tabular} & \begin{tabular}[c]{@{}l@{}}4.719\\ (0.743)\end{tabular} & \begin{tabular}[c]{@{}l@{}}3.920\\ (0.701)\end{tabular} \\* \cmidrule(l){2-10} 
 & TM1 & \begin{tabular}[c]{@{}l@{}}9.735\\ (1.845)\end{tabular} & \begin{tabular}[c]{@{}l@{}}7.977\\ (1.397)\end{tabular} & \begin{tabular}[c]{@{}l@{}}6.638\\ (1.181)\end{tabular} & \begin{tabular}[c]{@{}l@{}}5.589\\ (0.972)\end{tabular} & \begin{tabular}[c]{@{}l@{}}5.021\\ (0.833)\end{tabular} & \begin{tabular}[c]{@{}l@{}}4.531\\ (0.862)\end{tabular} & \begin{tabular}[c]{@{}l@{}}4.198\\ (0.741)\end{tabular} & \begin{tabular}[c]{@{}l@{}}3.857\\ (0.622)\end{tabular} \\* \cmidrule(l){2-10} 
 & TM2 & \begin{tabular}[c]{@{}l@{}}9.361\\ (1.680)\end{tabular} & \begin{tabular}[c]{@{}l@{}}7.657\\ (1.469)\end{tabular} & \begin{tabular}[c]{@{}l@{}}6.747\\ (1.219)\end{tabular} & \begin{tabular}[c]{@{}l@{}}6.038\\ (1.072)\end{tabular} & \begin{tabular}[c]{@{}l@{}}5.048\\ (0.774)\end{tabular} & \begin{tabular}[c]{@{}l@{}}4.570\\ (0.940)\end{tabular} & \begin{tabular}[c]{@{}l@{}}4.454\\ (0.708)\end{tabular} & \begin{tabular}[c]{@{}l@{}}3.840\\ (0.621)\end{tabular} \\* \cmidrule(l){2-10} 
 & TM3 & \begin{tabular}[c]{@{}l@{}}11.031\\ (2.251)\end{tabular} & \begin{tabular}[c]{@{}l@{}}8.763\\ (1.668)\end{tabular} & \begin{tabular}[c]{@{}l@{}}7.170\\ (1.170)\end{tabular} & \begin{tabular}[c]{@{}l@{}}6.295\\ (1.104)\end{tabular} & \begin{tabular}[c]{@{}l@{}}5.361\\ (0.873)\end{tabular} & \begin{tabular}[c]{@{}l@{}}4.720\\ (0.832)\end{tabular} & \begin{tabular}[c]{@{}l@{}}4.392\\ (0.721)\end{tabular} & \begin{tabular}[c]{@{}l@{}}3.813\\ (0.564)\end{tabular} \\* \cmidrule(l){2-10} 
 & WTM & \begin{tabular}[c]{@{}l@{}}8.296\\ (1.351)\end{tabular} & \begin{tabular}[c]{@{}l@{}}7.357\\ (1.671)\end{tabular} & \begin{tabular}[c]{@{}l@{}}6.018\\ (0.846)\end{tabular} & \begin{tabular}[c]{@{}l@{}}5.480\\ (1.076)\end{tabular} & \begin{tabular}[c]{@{}l@{}}4.814\\ (0.667)\end{tabular} & \begin{tabular}[c]{@{}l@{}}4.238\\ (0.956)\end{tabular} & \begin{tabular}[c]{@{}l@{}}4.376\\ (0.783)\end{tabular} & \begin{tabular}[c]{@{}l@{}}3.541\\ (0.521)\end{tabular} \\* \cmidrule(l){2-10} 
 & Pooled-MNI & \begin{tabular}[c]{@{}l@{}}26.400\\ (6.839)\end{tabular} & \begin{tabular}[c]{@{}l@{}}9.480\\ (1.586)\end{tabular} & \begin{tabular}[c]{@{}l@{}}7.341\\ (1.525)\end{tabular} & \begin{tabular}[c]{@{}l@{}}5.910\\ (1.004)\end{tabular} & \begin{tabular}[c]{@{}l@{}}5.306\\ (0.727)\end{tabular} & \begin{tabular}[c]{@{}l@{}}4.540\\ (0.842)\end{tabular} & \begin{tabular}[c]{@{}l@{}}4.373\\ (0.782)\end{tabular} & \begin{tabular}[c]{@{}l@{}}3.777\\ (0.464)\end{tabular} \\* \cmidrule(l){2-10} 
 & SGD1 & \begin{tabular}[c]{@{}l@{}}20.181\\ (2.567)\end{tabular} & \begin{tabular}[c]{@{}l@{}}14.498\\ (1.478)\end{tabular} & \begin{tabular}[c]{@{}l@{}}11.164\\ (1.275)\end{tabular} & \begin{tabular}[c]{@{}l@{}}9.715\\ (1.083)\end{tabular} & \begin{tabular}[c]{@{}l@{}}8.336\\ (0.918)\end{tabular} & \begin{tabular}[c]{@{}l@{}}7.060\\ (0.844)\end{tabular} & \begin{tabular}[c]{@{}l@{}}6.353\\ (0.597)\end{tabular} & \begin{tabular}[c]{@{}l@{}}5.772\\ (0.623)\end{tabular} \\* \cmidrule(l){2-10} 
 & SGD2 & \begin{tabular}[c]{@{}l@{}}20.174\\ (2.162)\end{tabular} & \begin{tabular}[c]{@{}l@{}}15.208\\ (1.355)\end{tabular} & \begin{tabular}[c]{@{}l@{}}11.422\\ (1.140)\end{tabular} & \begin{tabular}[c]{@{}l@{}}10.029\\ (1.114)\end{tabular} & \begin{tabular}[c]{@{}l@{}}8.603\\ (0.813)\end{tabular} & \begin{tabular}[c]{@{}l@{}}7.379\\ (0.825)\end{tabular} & \begin{tabular}[c]{@{}l@{}}6.542\\ (0.593)\end{tabular} & \begin{tabular}[c]{@{}l@{}}5.883\\ (0.591)\end{tabular} \\* \cmidrule(l){2-10} 
 & SGD3 & \begin{tabular}[c]{@{}l@{}}20.405\\ (2.225)\end{tabular} & \begin{tabular}[c]{@{}l@{}}15.328\\ (1.320)\end{tabular} & \begin{tabular}[c]{@{}l@{}}11.561\\ (1.140)\end{tabular} & \begin{tabular}[c]{@{}l@{}}9.983\\ (1.106)\end{tabular} & \begin{tabular}[c]{@{}l@{}}8.606\\ (0.818)\end{tabular} & \begin{tabular}[c]{@{}l@{}}7.391\\ (0.822)\end{tabular} & \begin{tabular}[c]{@{}l@{}}6.554\\ (0.585)\end{tabular} & \begin{tabular}[c]{@{}l@{}}5.833\\ (0.594)\end{tabular} \\* \midrule
\multirow{7}{*}{2.(a)} &Target-only MNI & \begin{tabular}[c]{@{}l@{}}8.510\\ (0.259)\end{tabular} & \begin{tabular}[c]{@{}l@{}}8.839\\ (0.299)\end{tabular} & \begin{tabular}[c]{@{}l@{}}9.166\\ (0.169)\end{tabular} & \begin{tabular}[c]{@{}l@{}}9.249\\ (0.119)\end{tabular} & \begin{tabular}[c]{@{}l@{}}9.346\\ (0.135)\end{tabular} & \begin{tabular}[c]{@{}l@{}}9.440\\ (0.130)\end{tabular} & \begin{tabular}[c]{@{}l@{}}9.490\\ (0.124)\end{tabular} & \begin{tabular}[c]{@{}l@{}}9.555\\ (0.091)\end{tabular} \\* \cmidrule(l){2-10} 
 & TM1 & \begin{tabular}[c]{@{}l@{}}7.790\\ (0.426)\end{tabular} & \begin{tabular}[c]{@{}l@{}}8.344\\ (0.291)\end{tabular} & \begin{tabular}[c]{@{}l@{}}8.698\\ (0.292)\end{tabular} & \begin{tabular}[c]{@{}l@{}}8.808\\ (0.252)\end{tabular} & \begin{tabular}[c]{@{}l@{}}9.014\\ (0.251)\end{tabular} & \begin{tabular}[c]{@{}l@{}}9.112\\ (0.186)\end{tabular} & \begin{tabular}[c]{@{}l@{}}9.222\\ (0.167)\end{tabular} & \begin{tabular}[c]{@{}l@{}}9.314\\ (0.170)\end{tabular} \\* \cmidrule(l){2-10} 
 & TM2 & \begin{tabular}[c]{@{}l@{}}6.715\\ (0.555)\end{tabular} & \begin{tabular}[c]{@{}l@{}}7.146\\ (0.466)\end{tabular} & \begin{tabular}[c]{@{}l@{}}7.253\\ (0.445)\end{tabular} & \begin{tabular}[c]{@{}l@{}}7.318\\ (0.335)\end{tabular} & \begin{tabular}[c]{@{}l@{}}7.403\\ (0.318)\end{tabular} & \begin{tabular}[c]{@{}l@{}}7.407\\ (0.324)\end{tabular} & \begin{tabular}[c]{@{}l@{}}7.555\\ (0.320)\end{tabular} & \begin{tabular}[c]{@{}l@{}}7.606\\ (0.318)\end{tabular} \\* \cmidrule(l){2-10} 
 & WTM & \begin{tabular}[c]{@{}l@{}}5.526\\ (1.040)\end{tabular} & \begin{tabular}[c]{@{}l@{}}5.831\\ (0.900)\end{tabular} & \begin{tabular}[c]{@{}l@{}}5.962\\ (0.797)\end{tabular} & \begin{tabular}[c]{@{}l@{}}5.923\\ (0.951)\end{tabular} & \begin{tabular}[c]{@{}l@{}}6.213\\ (0.936)\end{tabular} & \begin{tabular}[c]{@{}l@{}}6.251\\ (1.006)\end{tabular} & \begin{tabular}[c]{@{}l@{}}6.363\\ (0.959)\end{tabular} & \begin{tabular}[c]{@{}l@{}}6.342\\ (1.051)\end{tabular} \\* \cmidrule(l){2-10} 
 & Pooled-MNI & \begin{tabular}[c]{@{}l@{}}33.144\\ (6.900)\end{tabular} & \begin{tabular}[c]{@{}l@{}}15.647\\ (2.941)\end{tabular} & \begin{tabular}[c]{@{}l@{}}12.361\\ (1.540)\end{tabular} & \begin{tabular}[c]{@{}l@{}}11.092\\ (1.204)\end{tabular} & \begin{tabular}[c]{@{}l@{}}9.842\\ (0.580)\end{tabular} & \begin{tabular}[c]{@{}l@{}}9.663\\ (0.697)\end{tabular} & \begin{tabular}[c]{@{}l@{}}9.413\\ (0.573)\end{tabular} & \begin{tabular}[c]{@{}l@{}}9.117\\ (0.266)\end{tabular} \\* \cmidrule(l){2-10} 
 & SGD1 & \begin{tabular}[c]{@{}l@{}}9.424\\ (0.103)\end{tabular} & \begin{tabular}[c]{@{}l@{}}9.445\\ (0.110)\end{tabular} & \begin{tabular}[c]{@{}l@{}}9.506\\ (0.118)\end{tabular} & \begin{tabular}[c]{@{}l@{}}9.496\\ (0.103)\end{tabular} & \begin{tabular}[c]{@{}l@{}}9.531\\ (0.108)\end{tabular} & \begin{tabular}[c]{@{}l@{}}9.549\\ (0.080)\end{tabular} & \begin{tabular}[c]{@{}l@{}}9.603\\ (0.098)\end{tabular} & \begin{tabular}[c]{@{}l@{}}9.619\\ (0.093)\end{tabular} \\* \cmidrule(l){2-10} 
 & SGD2 & \begin{tabular}[c]{@{}l@{}}8.886\\ (0.160)\end{tabular} & \begin{tabular}[c]{@{}l@{}}8.833\\ (0.182)\end{tabular} & \begin{tabular}[c]{@{}l@{}}8.786\\ (0.139)\end{tabular} & \begin{tabular}[c]{@{}l@{}}8.626\\ (0.149)\end{tabular} & \begin{tabular}[c]{@{}l@{}}8.676\\ (0.161)\end{tabular} & \begin{tabular}[c]{@{}l@{}}8.613\\ (0.160)\end{tabular} & \begin{tabular}[c]{@{}l@{}}8.660\\ (0.135)\end{tabular} & \begin{tabular}[c]{@{}l@{}}8.655\\ (0.152)\end{tabular} \\* \midrule
\multirow{7}{*}{2.(b)} &Target-only MNI & \begin{tabular}[c]{@{}l@{}}8.510\\ (0.259)\end{tabular} & \begin{tabular}[c]{@{}l@{}}8.839\\ (0.299)\end{tabular} & \begin{tabular}[c]{@{}l@{}}9.166\\ (0.169)\end{tabular} & \begin{tabular}[c]{@{}l@{}}9.249\\ (0.119)\end{tabular} & \begin{tabular}[c]{@{}l@{}}9.346\\ (0.135)\end{tabular} & \begin{tabular}[c]{@{}l@{}}9.440\\ (0.130)\end{tabular} & \begin{tabular}[c]{@{}l@{}}9.490\\ (0.124)\end{tabular} & \begin{tabular}[c]{@{}l@{}}9.555\\ (0.091)\end{tabular} \\* \cmidrule(l){2-10} 
 & TM1 & \begin{tabular}[c]{@{}l@{}}7.640\\ (0.413)\end{tabular} & \begin{tabular}[c]{@{}l@{}}8.219\\ (0.288)\end{tabular} & \begin{tabular}[c]{@{}l@{}}8.607\\ (0.289)\end{tabular} & \begin{tabular}[c]{@{}l@{}}8.741\\ (0.239)\end{tabular} & \begin{tabular}[c]{@{}l@{}}8.958\\ (0.251)\end{tabular} & \begin{tabular}[c]{@{}l@{}}9.059\\ (0.186)\end{tabular} & \begin{tabular}[c]{@{}l@{}}9.168\\ (0.171)\end{tabular} & \begin{tabular}[c]{@{}l@{}}9.273\\ (0.168)\end{tabular} \\* \cmidrule(l){2-10} 
 & TM2 & \begin{tabular}[c]{@{}l@{}}5.654\\ (0.439)\end{tabular} & \begin{tabular}[c]{@{}l@{}}6.016\\ (0.359)\end{tabular} & \begin{tabular}[c]{@{}l@{}}6.083\\ (0.322)\end{tabular} & \begin{tabular}[c]{@{}l@{}}6.147\\ (0.275)\end{tabular} & \begin{tabular}[c]{@{}l@{}}6.253\\ (0.268)\end{tabular} & \begin{tabular}[c]{@{}l@{}}6.255\\ (0.263)\end{tabular} & \begin{tabular}[c]{@{}l@{}}6.343\\ (0.253)\end{tabular} & \begin{tabular}[c]{@{}l@{}}6.387\\ (0.273)\end{tabular} \\* \cmidrule(l){2-10} 
 & WTM & \begin{tabular}[c]{@{}l@{}}4.858\\ (0.651)\end{tabular} & \begin{tabular}[c]{@{}l@{}}5.290\\ (0.677)\end{tabular} & \begin{tabular}[c]{@{}l@{}}5.313\\ (0.243)\end{tabular} & \begin{tabular}[c]{@{}l@{}}5.239\\ (0.307)\end{tabular} & \begin{tabular}[c]{@{}l@{}}5.535\\ (0.268)\end{tabular} & \begin{tabular}[c]{@{}l@{}}5.520\\ (0.280)\end{tabular} & \begin{tabular}[c]{@{}l@{}}5.774\\ (0.759)\end{tabular} & \begin{tabular}[c]{@{}l@{}}5.901\\ (1.133)\end{tabular} \\* \cmidrule(l){2-10} 
 & Pooled-MNI & \begin{tabular}[c]{@{}l@{}}24.454\\ (5.173)\end{tabular} & \begin{tabular}[c]{@{}l@{}}12.022\\ (2.586)\end{tabular} & \begin{tabular}[c]{@{}l@{}}9.331\\ (1.132)\end{tabular} & \begin{tabular}[c]{@{}l@{}}8.486\\ (0.822)\end{tabular} & \begin{tabular}[c]{@{}l@{}}7.776\\ (0.541)\end{tabular} & \begin{tabular}[c]{@{}l@{}}7.576\\ (0.550)\end{tabular} & \begin{tabular}[c]{@{}l@{}}7.481\\ (0.334)\end{tabular} & \begin{tabular}[c]{@{}l@{}}7.266\\ (0.287)\end{tabular} \\* \cmidrule(l){2-10} 
 & SGD1 & \begin{tabular}[c]{@{}l@{}}9.468\\ (0.094)\end{tabular} & \begin{tabular}[c]{@{}l@{}}9.482\\ (0.101)\end{tabular} & \begin{tabular}[c]{@{}l@{}}9.540\\ (0.106)\end{tabular} & \begin{tabular}[c]{@{}l@{}}9.524\\ (0.096)\end{tabular} & \begin{tabular}[c]{@{}l@{}}9.554\\ (0.099)\end{tabular} & \begin{tabular}[c]{@{}l@{}}9.565\\ (0.076)\end{tabular} & \begin{tabular}[c]{@{}l@{}}9.607\\ (0.093)\end{tabular} & \begin{tabular}[c]{@{}l@{}}9.623\\ (0.085)\end{tabular} \\* \cmidrule(l){2-10} 
 & SGD2 & \begin{tabular}[c]{@{}l@{}}9.015\\ (0.143)\end{tabular} & \begin{tabular}[c]{@{}l@{}}8.957\\ (0.156)\end{tabular} & \begin{tabular}[c]{@{}l@{}}8.905\\ (0.122)\end{tabular} & \begin{tabular}[c]{@{}l@{}}8.745\\ (0.134)\end{tabular} & \begin{tabular}[c]{@{}l@{}}8.764\\ (0.142)\end{tabular} & \begin{tabular}[c]{@{}l@{}}8.679\\ (0.144)\end{tabular} & \begin{tabular}[c]{@{}l@{}}8.692\\ (0.123)\end{tabular} & \begin{tabular}[c]{@{}l@{}}8.640\\ (0.137)\end{tabular} \\* \midrule
\multirow{4}{*}{2.(c)} &Target-only MNI & \begin{tabular}[c]{@{}l@{}}8.525\\ (0.261)\end{tabular} & \begin{tabular}[c]{@{}l@{}}8.830\\ (0.302)\end{tabular} & \begin{tabular}[c]{@{}l@{}}9.178\\ (0.168)\end{tabular} & \begin{tabular}[c]{@{}l@{}}9.251\\ (0.117)\end{tabular} & \begin{tabular}[c]{@{}l@{}}9.345\\ (0.137)\end{tabular} & \begin{tabular}[c]{@{}l@{}}9.435\\ (0.137)\end{tabular} & \begin{tabular}[c]{@{}l@{}}9.491\\ (0.128)\end{tabular} & \begin{tabular}[c]{@{}l@{}}9.551\\ (0.089)\end{tabular} \\* \cmidrule(l){2-10} 
 & TM1 & \begin{tabular}[c]{@{}l@{}}7.647\\ (0.420)\end{tabular} & \begin{tabular}[c]{@{}l@{}}8.192\\ (0.281)\end{tabular} & \begin{tabular}[c]{@{}l@{}}8.592\\ (0.271)\end{tabular} & \begin{tabular}[c]{@{}l@{}}8.747\\ (0.247)\end{tabular} & \begin{tabular}[c]{@{}l@{}}8.960\\ (0.256)\end{tabular} & \begin{tabular}[c]{@{}l@{}}9.059\\ (0.177)\end{tabular} & \begin{tabular}[c]{@{}l@{}}9.166\\ (0.193)\end{tabular} & \begin{tabular}[c]{@{}l@{}}9.261\\ (0.158)\end{tabular} \\* \cmidrule(l){2-10} 
 & TM2 & \begin{tabular}[c]{@{}l@{}}4.813\\ (0.376)\end{tabular} & \begin{tabular}[c]{@{}l@{}}5.073\\ (0.271)\end{tabular} & \begin{tabular}[c]{@{}l@{}}5.142\\ (0.343)\end{tabular} & \begin{tabular}[c]{@{}l@{}}5.222\\ (0.259)\end{tabular} & \begin{tabular}[c]{@{}l@{}}5.366\\ (0.333)\end{tabular} & \begin{tabular}[c]{@{}l@{}}5.297\\ (0.243)\end{tabular} & \begin{tabular}[c]{@{}l@{}}5.368\\ (0.228)\end{tabular} & \begin{tabular}[c]{@{}l@{}}5.451\\ (0.246)\end{tabular} \\* \cmidrule(l){2-10} 
 & WTM & \begin{tabular}[c]{@{}l@{}}3.943\\ (0.864)\end{tabular} & \begin{tabular}[c]{@{}l@{}}4.178\\ (0.902)\end{tabular} & \begin{tabular}[c]{@{}l@{}}4.279\\ (0.937)\end{tabular} & \begin{tabular}[c]{@{}l@{}}3.974\\ (0.335)\end{tabular} & \begin{tabular}[c]{@{}l@{}}4.328\\ (0.359)\end{tabular} & \begin{tabular}[c]{@{}l@{}}4.153\\ (0.316)\end{tabular} & \begin{tabular}[c]{@{}l@{}}4.188\\ (0.190)\end{tabular} & \begin{tabular}[c]{@{}l@{}}4.058\\ (0.244)\end{tabular} \\* \midrule
\multirow{8}{*}{3.(a)} &Target-only MNI & \begin{tabular}[c]{@{}l@{}}11.577\\ (1.686)\end{tabular} & \begin{tabular}[c]{@{}l@{}}9.846\\ (1.734)\end{tabular} & \begin{tabular}[c]{@{}l@{}}6.978\\ (1.008)\end{tabular} & \begin{tabular}[c]{@{}l@{}}6.321\\ (1.187)\end{tabular} & \begin{tabular}[c]{@{}l@{}}5.389\\ (0.796)\end{tabular} & \begin{tabular}[c]{@{}l@{}}4.721\\ (1.030)\end{tabular} & \begin{tabular}[c]{@{}l@{}}4.719\\ (0.743)\end{tabular} & \begin{tabular}[c]{@{}l@{}}3.920\\ (0.701)\end{tabular} \\* \cmidrule(l){2-10} 
 & TM1 & \begin{tabular}[c]{@{}l@{}}7.491\\ (1.396)\end{tabular} & \begin{tabular}[c]{@{}l@{}}6.021\\ (1.019)\end{tabular} & \begin{tabular}[c]{@{}l@{}}5.105\\ (0.915)\end{tabular} & \begin{tabular}[c]{@{}l@{}}4.453\\ (0.713)\end{tabular} & \begin{tabular}[c]{@{}l@{}}4.140\\ (0.709)\end{tabular} & \begin{tabular}[c]{@{}l@{}}3.690\\ (0.687)\end{tabular} & \begin{tabular}[c]{@{}l@{}}3.563\\ (0.503)\end{tabular} & \begin{tabular}[c]{@{}l@{}}3.249\\ (0.560)\end{tabular} \\* \cmidrule(l){2-10} 
 & TM2 & \begin{tabular}[c]{@{}l@{}}9.151\\ (1.602)\end{tabular} & \begin{tabular}[c]{@{}l@{}}7.318\\ (1.404)\end{tabular} & \begin{tabular}[c]{@{}l@{}}6.191\\ (1.153)\end{tabular} & \begin{tabular}[c]{@{}l@{}}5.356\\ (0.905)\end{tabular} & \begin{tabular}[c]{@{}l@{}}4.586\\ (0.812)\end{tabular} & \begin{tabular}[c]{@{}l@{}}4.216\\ (0.713)\end{tabular} & \begin{tabular}[c]{@{}l@{}}4.148\\ (0.688)\end{tabular} & \begin{tabular}[c]{@{}l@{}}3.810\\ (0.695)\end{tabular} \\* \cmidrule(l){2-10} 
 & TM3 & \begin{tabular}[c]{@{}l@{}}10.006\\ (2.021)\end{tabular} & \begin{tabular}[c]{@{}l@{}}8.289\\ (1.378)\end{tabular} & \begin{tabular}[c]{@{}l@{}}6.760\\ (1.128)\end{tabular} & \begin{tabular}[c]{@{}l@{}}6.313\\ (1.164)\end{tabular} & \begin{tabular}[c]{@{}l@{}}5.855\\ (0.787)\end{tabular} & \begin{tabular}[c]{@{}l@{}}4.724\\ (0.934)\end{tabular} & \begin{tabular}[c]{@{}l@{}}4.543\\ (0.795)\end{tabular} & \begin{tabular}[c]{@{}l@{}}4.282\\ (0.678)\end{tabular} \\* \cmidrule(l){2-10} 
 & WTM & \begin{tabular}[c]{@{}l@{}}6.370\\ (1.423)\end{tabular} & \begin{tabular}[c]{@{}l@{}}5.462\\ (1.154)\end{tabular} & \begin{tabular}[c]{@{}l@{}}4.865\\ (1.083)\end{tabular} & \begin{tabular}[c]{@{}l@{}}4.024\\ (0.584)\end{tabular} & \begin{tabular}[c]{@{}l@{}}3.735\\ (0.382)\end{tabular} & \begin{tabular}[c]{@{}l@{}}3.244\\ (0.553)\end{tabular} & \begin{tabular}[c]{@{}l@{}}3.249\\ (0.469)\end{tabular} & \begin{tabular}[c]{@{}l@{}}2.986\\ (0.882)\end{tabular} \\* \cmidrule(l){2-10} 
 & Pooled-MNI1 & \begin{tabular}[c]{@{}l@{}}5.266\\ (0.540)\end{tabular} & \begin{tabular}[c]{@{}l@{}}4.481\\ (0.435)\end{tabular} & \begin{tabular}[c]{@{}l@{}}3.867\\ (0.459)\end{tabular} & \begin{tabular}[c]{@{}l@{}}3.470\\ (0.389)\end{tabular} & \begin{tabular}[c]{@{}l@{}}3.199\\ (0.318)\end{tabular} & \begin{tabular}[c]{@{}l@{}}2.913\\ (0.380)\end{tabular} & \begin{tabular}[c]{@{}l@{}}2.672\\ (0.283)\end{tabular} & \begin{tabular}[c]{@{}l@{}}2.519\\ (0.299)\end{tabular} \\* \cmidrule(l){2-10} 
 & Pooled-MNI2 & \begin{tabular}[c]{@{}l@{}}12.062\\ (1.703)\end{tabular} & \begin{tabular}[c]{@{}l@{}}14.689\\ (1.827)\end{tabular} & \begin{tabular}[c]{@{}l@{}}11.461\\ (1.627)\end{tabular} & \begin{tabular}[c]{@{}l@{}}15.202\\ (1.983)\end{tabular} & \begin{tabular}[c]{@{}l@{}}8.666\\ (1.406)\end{tabular} & \begin{tabular}[c]{@{}l@{}}3.829\\ (0.449)\end{tabular} & \begin{tabular}[c]{@{}l@{}}4.907\\ (0.615)\end{tabular} & \begin{tabular}[c]{@{}l@{}}6.335\\ (0.873)\end{tabular} \\* \cmidrule(l){2-10} 
 & Pooled-MNI3 & \begin{tabular}[c]{@{}l@{}}35.867\\ (4.602)\end{tabular} & \begin{tabular}[c]{@{}l@{}}43.077\\ (5.516)\end{tabular} & \begin{tabular}[c]{@{}l@{}}12.266\\ (1.418)\end{tabular} & \begin{tabular}[c]{@{}l@{}}10.009\\ (1.290)\end{tabular} & \begin{tabular}[c]{@{}l@{}}17.337\\ (2.274)\end{tabular} & \begin{tabular}[c]{@{}l@{}}6.611\\ (1.047)\end{tabular} & \begin{tabular}[c]{@{}l@{}}8.298\\ (1.338)\end{tabular} & \begin{tabular}[c]{@{}l@{}}7.215\\ (1.084)\end{tabular} \\* \midrule
\multirow{8}{*}{3.(b)} &Target-only MNI & \begin{tabular}[c]{@{}l@{}}13.150\\ (1.665)\end{tabular} & \begin{tabular}[c]{@{}l@{}}11.451\\ (2.422)\end{tabular} & \begin{tabular}[c]{@{}l@{}}9.587\\ (1.894)\end{tabular} & \begin{tabular}[c]{@{}l@{}}7.813\\ (1.015)\end{tabular} & \begin{tabular}[c]{@{}l@{}}7.186\\ (1.073)\end{tabular} & \begin{tabular}[c]{@{}l@{}}6.870\\ (1.382)\end{tabular} & \begin{tabular}[c]{@{}l@{}}6.328\\ (1.089)\end{tabular} & \begin{tabular}[c]{@{}l@{}}5.511\\ (1.017)\end{tabular} \\* \cmidrule(l){2-10} 
 & TM1 & \begin{tabular}[c]{@{}l@{}}7.329\\ (1.229)\end{tabular} & \begin{tabular}[c]{@{}l@{}}6.477\\ (1.073)\end{tabular} & \begin{tabular}[c]{@{}l@{}}5.936\\ (1.252)\end{tabular} & \begin{tabular}[c]{@{}l@{}}5.263\\ (1.098)\end{tabular} & \begin{tabular}[c]{@{}l@{}}4.871\\ (0.804)\end{tabular} & \begin{tabular}[c]{@{}l@{}}4.269\\ (0.742)\end{tabular} & \begin{tabular}[c]{@{}l@{}}4.138\\ (0.693)\end{tabular} & \begin{tabular}[c]{@{}l@{}}3.829\\ (0.766)\end{tabular} \\* \cmidrule(l){2-10} 
 & TM2 & \begin{tabular}[c]{@{}l@{}}9.073\\ (1.643)\end{tabular} & \begin{tabular}[c]{@{}l@{}}8.249\\ (1.407)\end{tabular} & \begin{tabular}[c]{@{}l@{}}6.711\\ (1.391)\end{tabular} & \begin{tabular}[c]{@{}l@{}}5.800\\ (1.148)\end{tabular} & \begin{tabular}[c]{@{}l@{}}5.532\\ (1.092)\end{tabular} & \begin{tabular}[c]{@{}l@{}}4.857\\ (0.664)\end{tabular} & \begin{tabular}[c]{@{}l@{}}4.836\\ (0.759)\end{tabular} & \begin{tabular}[c]{@{}l@{}}4.544\\ (0.903)\end{tabular} \\* \cmidrule(l){2-10} 
 & TM3 & \begin{tabular}[c]{@{}l@{}}11.002\\ (1.969)\end{tabular} & \begin{tabular}[c]{@{}l@{}}8.839\\ (1.590)\end{tabular} & \begin{tabular}[c]{@{}l@{}}7.858\\ (1.456)\end{tabular} & \begin{tabular}[c]{@{}l@{}}7.154\\ (1.305)\end{tabular} & \begin{tabular}[c]{@{}l@{}}6.263\\ (1.120)\end{tabular} & \begin{tabular}[c]{@{}l@{}}5.519\\ (1.066)\end{tabular} & \begin{tabular}[c]{@{}l@{}}5.317\\ (1.073)\end{tabular} & \begin{tabular}[c]{@{}l@{}}4.714\\ (1.002)\end{tabular} \\* \cmidrule(l){2-10} 
 & WTM & \begin{tabular}[c]{@{}l@{}}6.403\\ (1.144)\end{tabular} & \begin{tabular}[c]{@{}l@{}}5.516\\ (1.139)\end{tabular} & \begin{tabular}[c]{@{}l@{}}5.260\\ (1.467)\end{tabular} & \begin{tabular}[c]{@{}l@{}}4.462\\ (0.576)\end{tabular} & \begin{tabular}[c]{@{}l@{}}4.282\\ (1.054)\end{tabular} & \begin{tabular}[c]{@{}l@{}}3.541\\ (0.510)\end{tabular} & \begin{tabular}[c]{@{}l@{}}3.994\\ (1.194)\end{tabular} & \begin{tabular}[c]{@{}l@{}}3.515\\ (0.596)\end{tabular} \\* \cmidrule(l){2-10} 
 & Pooled-MNI1 & \begin{tabular}[c]{@{}l@{}}5.371\\ (0.620)\end{tabular} & \begin{tabular}[c]{@{}l@{}}4.887\\ (0.506)\end{tabular} & \begin{tabular}[c]{@{}l@{}}4.262\\ (0.416)\end{tabular} & \begin{tabular}[c]{@{}l@{}}3.935\\ (0.474)\end{tabular} & \begin{tabular}[c]{@{}l@{}}3.611\\ (0.381)\end{tabular} & \begin{tabular}[c]{@{}l@{}}3.377\\ (0.432)\end{tabular} & \begin{tabular}[c]{@{}l@{}}3.110\\ (0.409)\end{tabular} & \begin{tabular}[c]{@{}l@{}}2.883\\ (0.325)\end{tabular} \\* \cmidrule(l){2-10} 
 & Pooled-MNI2 & \begin{tabular}[c]{@{}l@{}}16.679\\ (1.930)\end{tabular} & \begin{tabular}[c]{@{}l@{}}14.829\\ (1.912)\end{tabular} & \begin{tabular}[c]{@{}l@{}}7.570\\ (0.869)\end{tabular} & \begin{tabular}[c]{@{}l@{}}7.246\\ (1.117)\end{tabular} & \begin{tabular}[c]{@{}l@{}}10.062\\ (1.460)\end{tabular} & \begin{tabular}[c]{@{}l@{}}4.507\\ (0.477)\end{tabular} & \begin{tabular}[c]{@{}l@{}}11.547\\ (1.624)\end{tabular} & \begin{tabular}[c]{@{}l@{}}9.949\\ (1.487)\end{tabular} \\* \cmidrule(l){2-10} 
 & Pooled-MNI3 & \begin{tabular}[c]{@{}l@{}}16.011\\ (2.094)\end{tabular} & \begin{tabular}[c]{@{}l@{}}16.383\\ (2.609)\end{tabular} & \begin{tabular}[c]{@{}l@{}}14.135\\ (2.164)\end{tabular} & \begin{tabular}[c]{@{}l@{}}30.615\\ (3.426)\end{tabular} & \begin{tabular}[c]{@{}l@{}}11.072\\ (1.285)\end{tabular} & \begin{tabular}[c]{@{}l@{}}12.042\\ (1.826)\end{tabular} & \begin{tabular}[c]{@{}l@{}}12.065\\ (1.423)\end{tabular} & \begin{tabular}[c]{@{}l@{}}19.334\\ (2.228)\end{tabular} \\* \midrule
\multirow{4}{*}{4.(a)} &Target-only MNI & \begin{tabular}[c]{@{}l@{}}9.831\\ (0.106)\end{tabular} & \begin{tabular}[c]{@{}l@{}}9.798\\ (0.116)\end{tabular} & \begin{tabular}[c]{@{}l@{}}9.843\\ (0.075)\end{tabular} & \begin{tabular}[c]{@{}l@{}}9.815\\ (0.085)\end{tabular} & \begin{tabular}[c]{@{}l@{}}9.816\\ (0.067)\end{tabular} & \begin{tabular}[c]{@{}l@{}}9.847\\ (0.061)\end{tabular} & \begin{tabular}[c]{@{}l@{}}9.840\\ (0.067)\end{tabular} & \begin{tabular}[c]{@{}l@{}}9.835\\ (0.056)\end{tabular} \\* \cmidrule(l){2-10} 
 & TM1 & \begin{tabular}[c]{@{}l@{}}5.966\\ (0.500)\end{tabular} & \begin{tabular}[c]{@{}l@{}}5.864\\ (0.467)\end{tabular} & \begin{tabular}[c]{@{}l@{}}5.811\\ (0.380)\end{tabular} & \begin{tabular}[c]{@{}l@{}}5.886\\ (0.380)\end{tabular} & \begin{tabular}[c]{@{}l@{}}5.953\\ (0.364)\end{tabular} & \begin{tabular}[c]{@{}l@{}}5.958\\ (0.372)\end{tabular} & \begin{tabular}[c]{@{}l@{}}6.026\\ (0.304)\end{tabular} & \begin{tabular}[c]{@{}l@{}}5.945\\ (0.299)\end{tabular} \\* \cmidrule(l){2-10} 
 & Pooled-MNI & \begin{tabular}[c]{@{}l@{}}6.179\\ (0.551)\end{tabular} & \begin{tabular}[c]{@{}l@{}}6.006\\ (0.525)\end{tabular} & \begin{tabular}[c]{@{}l@{}}5.913\\ (0.396)\end{tabular} & \begin{tabular}[c]{@{}l@{}}5.984\\ (0.383)\end{tabular} & \begin{tabular}[c]{@{}l@{}}6.071\\ (0.360)\end{tabular} & \begin{tabular}[c]{@{}l@{}}6.074\\ (0.395)\end{tabular} & \begin{tabular}[c]{@{}l@{}}6.161\\ (0.325)\end{tabular} & \begin{tabular}[c]{@{}l@{}}6.072\\ (0.341)\end{tabular} \\* \cmidrule(l){2-10} 
 & SGD1 & \begin{tabular}[c]{@{}l@{}}9.016\\ (0.149)\end{tabular} & \begin{tabular}[c]{@{}l@{}}8.916\\ (0.134)\end{tabular} & \begin{tabular}[c]{@{}l@{}}8.791\\ (0.120)\end{tabular} & \begin{tabular}[c]{@{}l@{}}8.613\\ (0.145)\end{tabular} & \begin{tabular}[c]{@{}l@{}}8.535\\ (0.148)\end{tabular} & \begin{tabular}[c]{@{}l@{}}8.464\\ (0.129)\end{tabular} & \begin{tabular}[c]{@{}l@{}}8.418\\ (0.154)\end{tabular} & \begin{tabular}[c]{@{}l@{}}8.367\\ (0.154)\end{tabular} \\* \midrule
\multirow{4}{*}{4.(b)} &Target-only MNI & \begin{tabular}[c]{@{}l@{}}9.186\\ (0.246)\end{tabular} & \begin{tabular}[c]{@{}l@{}}9.216\\ (0.201)\end{tabular} & \begin{tabular}[c]{@{}l@{}}9.157\\ (0.180)\end{tabular} & \begin{tabular}[c]{@{}l@{}}9.215\\ (0.159)\end{tabular} & \begin{tabular}[c]{@{}l@{}}9.186\\ (0.164)\end{tabular} & \begin{tabular}[c]{@{}l@{}}9.226\\ (0.135)\end{tabular} & \begin{tabular}[c]{@{}l@{}}9.175\\ (0.137)\end{tabular} & \begin{tabular}[c]{@{}l@{}}9.147\\ (0.141)\end{tabular} \\* \cmidrule(l){2-10} 
 & TM1 & \begin{tabular}[c]{@{}l@{}}7.269\\ (0.638)\end{tabular} & \begin{tabular}[c]{@{}l@{}}7.336\\ (0.552)\end{tabular} & \begin{tabular}[c]{@{}l@{}}7.225\\ (0.401)\end{tabular} & \begin{tabular}[c]{@{}l@{}}7.316\\ (0.332)\end{tabular} & \begin{tabular}[c]{@{}l@{}}7.297\\ (0.394)\end{tabular} & \begin{tabular}[c]{@{}l@{}}7.301\\ (0.380)\end{tabular} & \begin{tabular}[c]{@{}l@{}}7.289\\ (0.293)\end{tabular} & \begin{tabular}[c]{@{}l@{}}7.284\\ (0.293)\end{tabular} \\* \cmidrule(l){2-10} 
 & Pooled-MNI & \begin{tabular}[c]{@{}l@{}}8.341\\ (0.872)\end{tabular} & \begin{tabular}[c]{@{}l@{}}8.516\\ (0.728)\end{tabular} & \begin{tabular}[c]{@{}l@{}}8.381\\ (0.648)\end{tabular} & \begin{tabular}[c]{@{}l@{}}8.423\\ (0.547)\end{tabular} & \begin{tabular}[c]{@{}l@{}}8.369\\ (0.508)\end{tabular} & \begin{tabular}[c]{@{}l@{}}8.480\\ (0.548)\end{tabular} & \begin{tabular}[c]{@{}l@{}}8.408\\ (0.435)\end{tabular} & \begin{tabular}[c]{@{}l@{}}8.382\\ (0.418)\end{tabular} \\* \cmidrule(l){2-10} 
 & SGD1 & \begin{tabular}[c]{@{}l@{}}9.091\\ (0.129)\end{tabular} & \begin{tabular}[c]{@{}l@{}}8.885\\ (0.160)\end{tabular} & \begin{tabular}[c]{@{}l@{}}8.796\\ (0.149)\end{tabular} & \begin{tabular}[c]{@{}l@{}}8.647\\ (0.164)\end{tabular} & \begin{tabular}[c]{@{}l@{}}8.626\\ (0.155)\end{tabular} & \begin{tabular}[c]{@{}l@{}}8.526\\ (0.196)\end{tabular} & \begin{tabular}[c]{@{}l@{}}8.535\\ (0.151)\end{tabular} & \begin{tabular}[c]{@{}l@{}}8.522\\ (0.167)\end{tabular} \\* \midrule
\multirow{4}{*}{4.(c)} &Target-only MNI & \begin{tabular}[c]{@{}l@{}}7.686\\ (0.381)\end{tabular} & \begin{tabular}[c]{@{}l@{}}7.673\\ (0.299)\end{tabular} & \begin{tabular}[c]{@{}l@{}}7.734\\ (0.276)\end{tabular} & \begin{tabular}[c]{@{}l@{}}7.679\\ (0.276)\end{tabular} & \begin{tabular}[c]{@{}l@{}}7.710\\ (0.299)\end{tabular} & \begin{tabular}[c]{@{}l@{}}7.709\\ (0.258)\end{tabular} & \begin{tabular}[c]{@{}l@{}}7.726\\ (0.212)\end{tabular} & \begin{tabular}[c]{@{}l@{}}7.751\\ (0.218)\end{tabular} \\* \cmidrule(l){2-10} 
 & TM1 & \begin{tabular}[c]{@{}l@{}}7.274\\ (0.618)\end{tabular} & \begin{tabular}[c]{@{}l@{}}7.254\\ (0.523)\end{tabular} & \begin{tabular}[c]{@{}l@{}}7.313\\ (0.463)\end{tabular} & \begin{tabular}[c]{@{}l@{}}7.368\\ (0.376)\end{tabular} & \begin{tabular}[c]{@{}l@{}}7.286\\ (0.350)\end{tabular} & \begin{tabular}[c]{@{}l@{}}7.354\\ (0.351)\end{tabular} & \begin{tabular}[c]{@{}l@{}}7.309\\ (0.319)\end{tabular} & \begin{tabular}[c]{@{}l@{}}7.351\\ (0.338)\end{tabular} \\* \cmidrule(l){2-10} 
 & Pooled-MNI & \begin{tabular}[c]{@{}l@{}}10.574\\ (1.025)\end{tabular} & \begin{tabular}[c]{@{}l@{}}10.703\\ (0.945)\end{tabular} & \begin{tabular}[c]{@{}l@{}}10.688\\ (0.917)\end{tabular} & \begin{tabular}[c]{@{}l@{}}10.545\\ (0.784)\end{tabular} & \begin{tabular}[c]{@{}l@{}}10.579\\ (0.680)\end{tabular} & \begin{tabular}[c]{@{}l@{}}10.824\\ (0.728)\end{tabular} & \begin{tabular}[c]{@{}l@{}}10.850\\ (0.737)\end{tabular} & \begin{tabular}[c]{@{}l@{}}10.684\\ (0.806)\end{tabular} \\* \cmidrule(l){2-10} 
 & SGD1 & \begin{tabular}[c]{@{}l@{}}8.971\\ (0.178)\end{tabular} & \begin{tabular}[c]{@{}l@{}}8.826\\ (0.160)\end{tabular} & \begin{tabular}[c]{@{}l@{}}8.785\\ (0.132)\end{tabular} & \begin{tabular}[c]{@{}l@{}}8.615\\ (0.179)\end{tabular} & \begin{tabular}[c]{@{}l@{}}8.599\\ (0.171)\end{tabular} & \begin{tabular}[c]{@{}l@{}}8.481\\ (0.182)\end{tabular} & \begin{tabular}[c]{@{}l@{}}8.555\\ (0.196)\end{tabular} & \begin{tabular}[c]{@{}l@{}}8.508\\ (0.149)\end{tabular} \\* \bottomrule
\end{longtable}
}

\end{document}